\documentclass[11pt,reqno,a4paper]{article}

\usepackage{fullpage}
\usepackage[utf8]{inputenc}
\usepackage[T1]{fontenc}
\usepackage[backref=section]{hyperref}
\usepackage{url}
\usepackage{booktabs}
\usepackage{amsfonts}
\usepackage{nicefrac}
\usepackage[expansion=false]{microtype}
\usepackage{xcolor}
\usepackage{graphicx}
\usepackage{subfigure}
\usepackage{amsmath}
\usepackage{amssymb}
\usepackage{mathtools}
\usepackage{amsthm}
\usepackage{algorithm}
\usepackage{algorithmic}
\usepackage{enumitem}

\usepackage{array}
\usepackage{natbib}

\hypersetup{breaklinks=true,colorlinks=true,citecolor=violet}

\newtheorem{theorem}{Theorem}
\newtheorem{proposition}[theorem]{Proposition}
\newtheorem{lemma}[theorem]{Lemma}
\newtheorem{corollary}[theorem]{Corollary}
\theoremstyle{definition}
\newtheorem{definition}{Definition}

\theoremstyle{remark}

\usepackage{amsmath,amsfonts,bm}

\def\1{\bm{1}}

\def\vzero{{\bm{0}}}

\def\ve{{\bm{e}}}

\def\vp{{\bm{p}}}

\def\vr{{\bm{r}}}

\def\vu{{\bm{u}}}
\def\vv{{\bm{v}}}
\def\vw{{\bm{w}}}
\def\vx{{\bm{x}}}
\def\vy{{\bm{y}}}
\def\vz{{\bm{z}}}

\def\mA{{\bm{A}}}

\def\mD{{\bm{D}}}

\def\mI{{\bm{I}}}

\def\mK{{\bm{K}}}
\def\mL{{\bm{L}}}
\def\mM{{\bm{M}}}

\def\mW{{\bm{W}}}

\def\mZ{{\bm{Z}}}

\DeclareMathAlphabet{\mathsfit}{\encodingdefault}{\sfdefault}{m}{sl}
\SetMathAlphabet{\mathsfit}{bold}{\encodingdefault}{\sfdefault}{bx}{n}

\newcommand{\E}{\mathbb{E}}

\newcommand{\iprod}[1]{\left\langle #1 \right\rangle}

\newcommand{\abs}[1]{\left| #1 \right|}
\newcommand{\norm}[1]{\left\| #1 \right\|}
\newcommand{\parens}[1]{\left( #1 \right)}

\newcommand{\normTwo}[1]{\left\| #1 \right\|_2}
\newcommand{\normMax}[1]{\left\| #1 \right\|_{\infty}}
\newcommand{\normTwoToTwo}[1]{\left\| #1 \right\|_{2\to2}}

\newcommand{\highlight}{\boxed}

\newcommand{\nhead}{H}

\newcommand{\ReLU}{\mathrm{ReLU}}
\newcommand{\mlp}{\mathrm{MLP}}
\newcommand{\attn}{\mathrm{Attn}}
\newcommand{\TF}{\mathrm{TF}}
\newcommand{\TFB}{\mathrm{TFB}}

\newcommand{\attnQ}{\mW_Q}
\newcommand{\attnK}{\mW_K}
\newcommand{\attnV}{\mW_V}
\newcommand{\mlpin}{\mW_{\mathrm{in}}}
\newcommand{\mlpout}{\mW_{\mathrm{out}}}
\newcommand{\KerMat}{\mK}
\newcommand{\eig}{\mathrm{eig}}
\newcommand{\diag}{\mathrm{diag}}

\newcommand{\nsq}{n_{\mathrm{sq}}}
\newcommand{\hatnsq}{\widehat{n}_{\mathrm{sq}}}
\newcommand{\tildensq}{\widetilde{n}_{\mathrm{sq}}}
\newcommand{\nflip}{n_{\mathrm{flip}}}

\newcommand{\ninv}{n_{\mathrm{inv}}}

\newcommand{\epssq}{\epsilon_{\mathrm{sq}}}
\newcommand{\hatepssq}{\widehat{\epsilon}_{\mathrm{sq}}}
\newcommand{\tildeepssq}{\widetilde{\epsilon}_{\mathrm{sq}}}
\newcommand{\epsflip}{\epsilon_{\mathrm{flip}}}

\newcommand{\epsinv}{\epsilon_{\mathrm{inv}}}

\newcommand{\scalingerror}{\vr}

\NewDocumentCommand{\myl}{ O{\ell} }{%
    ^{\left(#1\right)}%
}

\newcommand{\veps}{\bm{\epsilon}}
\newcommand{\phisq}{\phi_{\mathrm{sq}}}
\newcommand{\hatphisq}{\widehat{\phi}_{\mathrm{sq}}}
\newcommand{\tildephisq}{\widetilde{\phi}_{\mathrm{sq}}}

\newcommand{\phiinv}{\phi_{\mathrm{inv}}}
\title{
Transformers Can Implement 
Preconditioned Richardson Iteration 
for In-Context Gaussian Kernel Regression
}

\author{%
Mingsong Yan\thanks{Corresponding authors.} , \quad
Dongyang Li, \quad
Charles Kulick, \quad 
Sui Tang\footnotemark[1]
\\[0.4em]
\small Department of Mathematics, University of California, Santa Barbara, CA
\\[0.25em]
\small
\texttt{\{mingsongyan,dongyang\_li,charleskulick,suitang\}@ucsb.edu} 
}
\begin{document}

\date{}
\maketitle

\begin{abstract}
Mechanistic accounts of in-context learning (ICL) have identified iterative algorithms for linear regression and related linear prediction tasks, often using linear or ReLU attention variants. For nonlinear ICL, prior work has related softmax and kernelized attention to functional-gradient-type dynamics, but it remains unclear whether a standard transformer with softmax attention can implement a convergent solver with an end-to-end prediction-error guarantee. In this paper, we study in-context kernel ridge regression (KRR) with Gaussian kernels and show that a standard softmax-attention transformer can approximate the KRR predictor during its forward pass by implementing \textit{preconditioned Richardson iteration} on the associated kernel linear system. Under bounded-data assumptions, we construct a \emph{single-head} transformer with $\mathcal{O}(\log(1/\varepsilon))$ blocks and MLP width $\mathcal{O}(\sqrt{N/\varepsilon})$ that achieves $\varepsilon$-accurate prediction for prompts of length $N$. Our construction reveals a functional decomposition within the transformer architecture: softmax attention produces a row-normalized Gaussian-kernel operator needed for \emph{cross-token} interactions, while ReLU MLP layers act locally to approximate the \emph{intra-token} scalar arithmetic required by the update. 
Empirically, we train GPT-2-style transformers on Gaussian-process regression tasks to further test the preconditioned Richardson interpretation. Through linear probing, we compare the transformer's \emph{layer-wise} predictions with the \emph{step-wise} outputs of classical KRR solvers and find that its error profiles align most consistently with preconditioned Richardson iteration. Ablation studies further support this interpretation. Together, our theoretical construction and empirical observations identify preconditioned Richardson iteration as a concrete mechanism that softmax-attention transformers can realize for nonlinear in-context Gaussian-kernel regression.

\end{abstract}

\section{Introduction}
\label{sec:introduction}
In-context learning (ICL) refers to the phenomenon in which a pretrained transformer, given a sequence of input-output examples followed by a query input, produces a prediction for that query in a single forward pass, \emph{without parameter updates} \citep{brown2020language,garg2022can}. This has emerged as a central question in the theoretical study of ICL: does the forward pass itself implement a learning algorithm on the examples in the prompt, and if so, what algorithm is being run and how is it realized by the architecture?  

For linear regression and related linear tasks, prior work has made this
algorithmic viewpoint precise: transformers can implement classical
optimization algorithms, including gradient descent, preconditioned gradient
descent, and Newton-type iterations \citep{vonoswald2023transformers,
akyurek2023learning,ahn2023transformers,bai2023transformers,
mahankali2024optimal,vladymyrov2024linear,fu2024transformers,zhang2025training}.
Transformer layers can be understood as steps of an iterative optimizer.

For nonlinear ICL, the corresponding mechanism is less understood. Kernel
ridge regression (KRR) is a natural test scenario: nonlinear in the input but
mathematically explicit, and on Gaussian process regression tasks its
solution coincides with the Bayes posterior mean. Recent work analyzes
attention as a form of kernel or functional gradient descent in a
reproducing kernel Hilbert space (RKHS) \citep{cheng2023transformers,
han2025understanding,dragutinovic2025softmax,sander2024towards}. Concrete
convergence guarantees in this line, however, require attention to be
\emph{kernelized} so that each layer realizes a standard functional gradient
step \citep{cheng2023transformers}. Standard softmax attention corresponds
instead to kernel gradient descent with a context-adaptive learning rate
\citep{cheng2023transformers,sander2024towards}, for which no end-to-end convergence rate is known.
Whether a standard softmax transformer implements an iterative KRR solver
with end-to-end prediction-error control thus remains open.

We answer this question affirmatively for in-context KRR with Gaussian kernels. The key is to move from the primal RKHS formulation to the dual kernel linear system: given training samples $\{(\vx_i,y_i)\}_{i=1}^N$, the KRR predictor admits dual coefficients $\vw\in\mathbb{R}^N$ that satisfy
\[
    (\KerMat+\lambda \mI)\vw=\vy,
    \qquad
    \KerMat_{ij}=\mathcal{K}(\vx_i,\vx_j).
\]
The mechanism question then shifts from
\[
\emph{\text{``does the transformer descend a primal RKHS loss?''}}
\]
to
\[
\emph{\text{``does the transformer solve the dual kernel linear system?''.}}
\]
For the Gaussian kernel
$\mathcal{K}(\vx,\vx')=\exp(-\|\vx-\vx'\|^2/(2v^2))$, suppose the raw
attention score between tokens $i$ and $j$ is
$\mL_{ij}=-\|\vx_i-\vx_j\|^2/(2v^2)$. After softmax normalization,
\[
    \mathrm{softmax}(\mL)_{ij}
    =
    \frac{\mathcal{K}(\vx_i,\vx_j)}
    {\sum_{j'=1}^N \mathcal{K}(\vx_i,\vx_{j'})}
    =
    (\mD^{-1}\KerMat)_{ij},
    \qquad
    \mD=\diag(\KerMat\mathbf{1}).
\]
Softmax thus returns the kernel matrix itself with each row rescaled to sum to one, i.e., the row-sum Jacobi preconditioner of the dual kernel system. Aggregating value vectors with these weights implements the cross-token operation in a row-sum--preconditioned Richardson step; the remaining tokenwise operations (rescaling by diagonal quantities, adding the data term, updating the current iterate) are within the reach of an MLP layer. Section~\ref{sec:mechanism} makes this construction precise.

\begin{figure}[t]
\centering
\includegraphics[width=\linewidth]{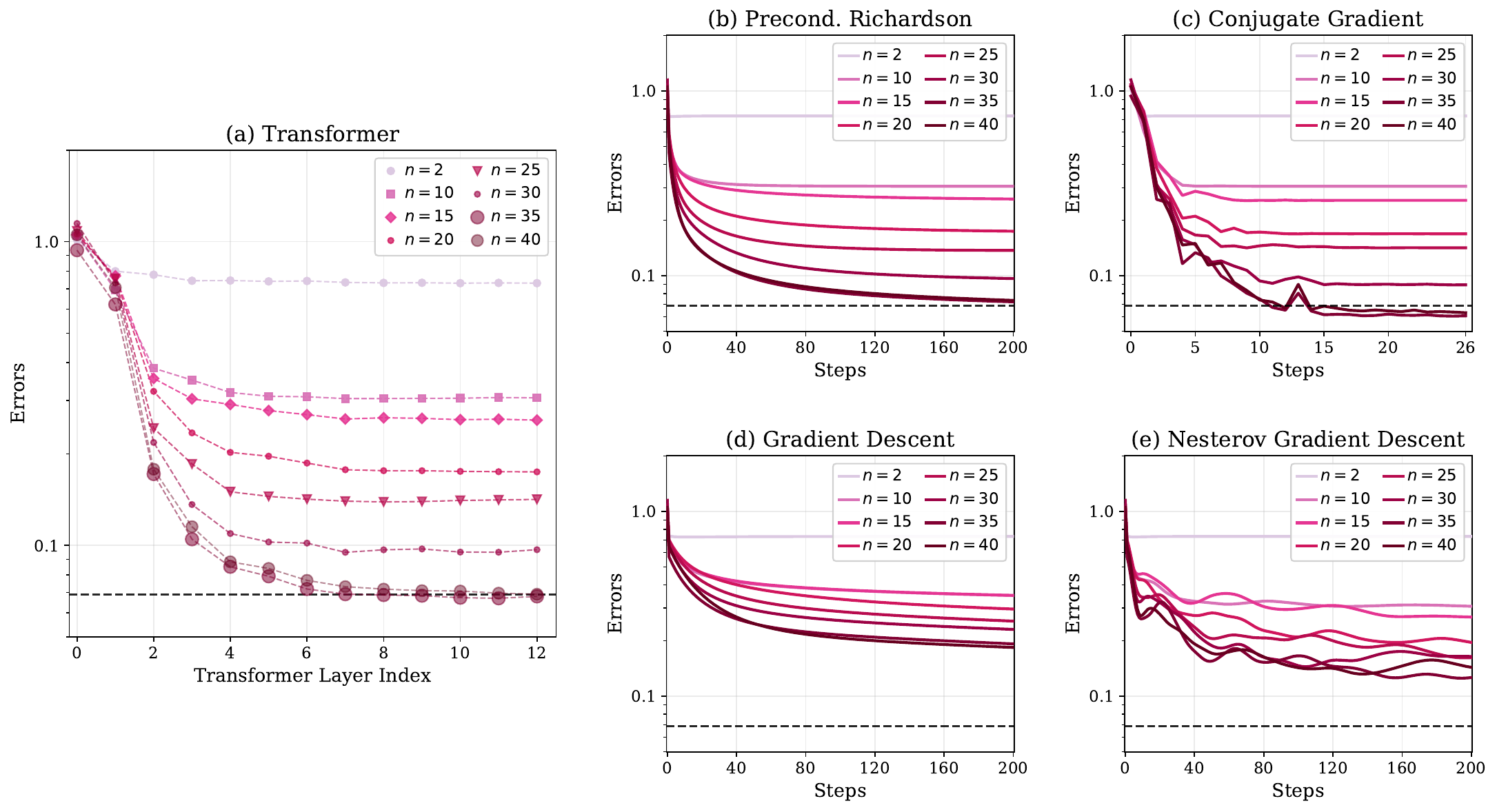}
\caption{\textbf{Layer-wise transformer errors align with step-wise preconditioned Richardson errors.} Spherical inputs, maximum context length $N\!=\!40$. All panels share the same $y$-axis (MSE, log scale) over context lengths $n\!\in\!\{2,10,15,20,25,30,35,40\}$; the dashed horizontal line in every panel marks the \emph{transformer's final-layer MSE at $n\!=\!40$} (the empirical floor). The $x$-axes differ: transformer panel (a) covers layers $\ell\!=\!0,\dots,12$; classical methods (b--e) cover their natural step budget (200 steps for Richardson/GD/Nesterov, fewer for CG). \textbf{(a)} Transformer MSE vs.\ layer index. \textbf{(b)} Preconditioned Richardson iteration, with a smooth, gradual descent aligning with (a) in shape. \textbf{(c--e)} Conjugate Gradient, which saturates within a few steps; Gradient Descent and Nesterov Gradient Descent, which descend too slowly to reach the transformer's floor within the step budget. Full details in Section~\ref{sec:experiments}; replications for Uniform and Gaussian inputs in Figure~\ref{fig:convergence-app}.}
\label{fig:convergence}
\end{figure}

Figure~\ref{fig:convergence} previews the empirical evidence in trained GPT-style models: the transformer's layer-wise error trajectory closely tracks the step-wise trajectory of preconditioned Richardson iteration, while other alternatives fail to reproduce this smooth, gradual descent. Our contributions lie in making this picture concrete:

\begin{itemize}[leftmargin=1.4em, itemsep=4pt, topsep=2pt]

\item \textbf{A dual-view mechanistic interpretation of softmax transformers} (Theorem~\ref{theorem: informal version of KRR ICL}; Appendix~\ref{app: overview of transformer construction}). Through the dual kernel linear system, each component of a softmax-attention transformer acquires a precise algorithmic role on in-context Gaussian kernel KRR: softmax attention performs the row normalized kernel matrix-vector product, the MLP supplies the tokenwise corrections, and depth corresponds to solver iterations. This recasts nonlinear ICL with softmax from functional descent on an RKHS loss to an inexact row-sum preconditioned Richardson iteration on the dual kernel system.

\item \textbf{End-to-end prediction-error guarantee with explicit depth--width scaling} (Theorem~\ref{theorem: informal version of KRR ICL}). Under bounded-data assumptions, a single-head softmax-attention transformer with ReLU MLP layers approximates the exact KRR predictor to prediction accuracy $\varepsilon$ using depth $\mathcal{O}(\log(1/\varepsilon))$ and MLP width $\mathcal{O}(\sqrt{N/\varepsilon})$ on prompts of length $N$. The depth scaling traces directly to the convergence rate of preconditioned Richardson; the width scaling, to the cost of approximating the tokenwise scalar arithmetic.

\item \textbf{Empirical evidence and architectural specificity} (Section~\ref{sec:experiments}). We pretrain GPT-style transformers on Gaussian-process regression tasks across input distributions and prompt lengths, and use layer-wise linear probes to compare per-layer prediction errors against the per-step iterates of four classical iterative methods for KRR: preconditioned Richardson and conjugate gradient applied to the kernel linear system, and gradient descent and Nesterov gradient descent applied to the RKHS-regularized loss. Preconditioned Richardson is the unique method whose per-step error trajectory aligns with the transformer's per-layer error trajectory across all settings tested; both produce a smooth, gradual descent with the same fan structure across context lengths. Ablations confirm this alignment is structurally specific: changing the Gaussian kernel or replacing softmax with linear attention weakens it, and a noise-mismatch ablation supports interpreting the trained transformer as a finite-iterate preconditioned Richardson solver with a fixed regularizer baked into the weights.

\end{itemize}

\section{Related work}
\label{sec:related}

\paragraph{Algorithmic interpretations of ICL.}
A major line of research frames the transformer's forward pass as the execution of a learning algorithm on the prompt. \citet{garg2022can} empirically showed that transformers can be trained to solve linear regression and decision-tree tasks in context. Subsequent theoretical work has focused predominantly on \emph{linear regression}, identifying gradient descent \citep{vonoswald2023transformers,akyurek2023learning}, preconditioned gradient descent \citep{ahn2023transformers,mahankali2024optimal,vladymyrov2024linear}, and Newton-type iterations \citep{fu2024transformers}, with parallel analyses of training dynamics \citep{zhang2024trained,chen2024training,lu2024asymptotic,zhang2025training,huang2023context}. Beyond linear regression, this line covers algorithm selection across linear tasks \citep{bai2023transformers}, nearest-neighbor classification \citep{li2024one}, discrete and Boolean function learning \citep{bhattamishra2024understanding}, and formal-language learning \citep{akyurek2024incontext}. A growing subline examines the role of MLPs: \citet{zhang2024linearblock} show that adding an MLP block to linear attention enables one-step gradient descent with learnable initialization for linear regression; \citet{sun2025feedforward} prove that linear self-attention (LSA) alone cannot outperform linear predictors on nonlinear tasks and construct LSA+MLP architectures for nonlinear function classes; \citet{demir2025asymptotic} establish an asymptotic equivalence between random Transformers with nonlinear MLP heads and finite-degree polynomial predictors. Most of these constructions abstract away standard transformer nonlinearities: attention is taken to be linear or non-softmax, MLPs are restricted to polynomial or random-feature regimes, or both. Our work operates in the standard softmax-attention / ReLU-MLP setting and identifies preconditioned Richardson iteration on the dual kernel system as a concrete mechanism for nonlinear in-context regression, assigning precise algorithmic roles to both components: softmax attention performs the preconditioned cross-token kernel matrix vector product, and MLPs supply the tokenwise scalar arithmetic that completes each Richardson step.

\paragraph{Kernel and softmax views of nonlinear ICL.}
A complementary line connects ICL to kernel-based prediction. \citet{cheng2023transformers} show that attention with a kernel-matched
activation implements functional gradient descent in the corresponding RKHS for
nonlinear regression, whereas standard softmax attention does not directly define
a kernel function because of its context-dependent normalization. \citet{han2025understanding} show that Bayesian inference on in-context prompts is asymptotically equivalent to normalized kernel regression, \citet{collins2024context} connect softmax attention to Nadaraya--Watson kernel estimation and nearest-neighbor prediction, and \citet{shen2025understanding} study a kernel-regression view on manifold-structured data with intrinsic-dimension generalization bounds; these are statistical or estimator-level characterizations rather than finite-depth algorithmic constructions. \citet{dragutinovic2025softmax} demonstrate that a single softmax-attention layer realizes one context-adaptive step of kernel gradient descent for cross-entropy classification; this is a single-layer characterization on a classification objective and does not provide an end-to-end multi-layer convergence rate. To our knowledge, no prior work delivers, for standard softmax-attention transformers on a nonlinear in-context regression task, an end-to-end prediction-error guarantee whose depth scaling is controlled by the convergence rate of the underlying solver. The dual-system viewpoint developed here is what enables this: by working in the finite-dimensional kernel system rather than in an RKHS, we pinpoint the iteration as inexact preconditioned Richardson, give explicit depth--width estimates, and assign each architectural component a precise algorithmic role.

\section{Preliminaries}
\label{sec:problem}

\subsection{Transformers}
We consider transformer architectures that operate on an input sequence of $N$ tokens, represented by a feature matrix $\mZ = [\vz_1, \dots, \vz_N] \in \mathbb{R}^{D \times N}$ whose $i$-th column $\vz_i \in \mathbb{R}^D$ is the embedding of the $i$-th token. We consider the following transformer architecture \citep{vaswani2017attention}. 

\begin{definition}[Softmax attention layer]
A multi-head attention layer with \(\nhead\) heads is denoted by \(\attn(\cdot)\). For each head \(h\in[\nhead]\), let
$\attnQ^{(h)},\attnK^{(h)},\attnV^{(h)}\in\mathbb{R}^{D\times D}$ be the query, key, and value matrices, and let $\mM^{(h)}\in\{0,-\infty\}^{N\times N}$ be a mask matrix. Given $\mZ\in\mathbb{R}^{D\times N}$, the layer outputs $\widetilde{\mZ}=\attn(\mZ)$ via the residual update
$$\widetilde{\mZ} := \mZ + \sum_{h=1}^{\nhead} (\attnV^{(h)} \mZ) \cdot \sigma\!\bigl((\attnK^{(h)} \mZ)^\top (\attnQ^{(h)} \mZ)+\mM^{(h)}\bigr),$$
where \(\sigma(\cdot)\) is the row-wise softmax. Equivalently, for each \(i\in[N]\),
$$\widetilde{\vz}_i = \vz_i + \sum_{h=1}^{\nhead} \sum_{j=1}^N \frac{\exp\parens{\iprod{ \attnQ^{(h)} \vz_i, \attnK^{(h)} \vz_j } + \mM_{ij}^{(h)}}}{\sum_{j'=1}^N\exp\parens{\iprod{ \attnQ^{(h)} \vz_i, \attnK^{(h)} \vz_{j'} } + \mM_{ij'}^{(h)}}} \attnV^{(h)} \vz_j.$$
\end{definition}

\begin{definition}[MLP layer]
An MLP layer with hidden dimension $D'$, denoted $\mlp(\cdot)$, has weight matrices $\mlpin, \mlpout \in \mathbb{R}^{D\times D'}$ and acts pointwise on each token: for $\mZ \in \mathbb{R}^{D \times N}$,
$\mlp(\mZ) := \mZ + \mlpout\,\sigma(\mlpin\,\mZ),$
where $\sigma(t) = \max\{t,0\}$ is the entrywise ReLU activation.
\end{definition}

\begin{definition}[Transformer block and $L$-block transformer]
A transformer block ($\TFB$) is the composition $\TFB(\mZ) := \mlp(\attn(\mZ))$ of an attention layer followed by an MLP layer; we also allow an MLP-only block $\TFB_0$ in which the attention layer acts as the identity (e.g., by setting all value matrices to zero), so $\TFB_0(\mZ) = \mlp(\mZ)$. An $L$-block transformer $\TF$ is the sequential composition $\TF(\mZ) := \bigl(\TFB^{(L)} \circ \TFB^{(L-1)} \circ \cdots \circ \TFB^{(1)}\bigr)(\mZ)$ of $L$ blocks.
\end{definition}

\subsection{In-context Learning for Kernel Ridge Regression}\label{section: In-context Learning for Kernel Ridge Regression}
\paragraph{Kernel ridge regression.}
Let $\mathcal{D}=\{(\vx_i,y_i)\}_{i=1}^N$ be a given training dataset. Here, $\{\vx_i\}_{i=1}^N \subset \mathbb{R}^d$ represent the input vectors, $\{y_i\}_{i=1}^N \subset \mathbb{R}$ are the corresponding labels. We formulate the classical kernel ridge regression problem within a Reproducing Kernel Hilbert Space (RKHS) $\mathcal{H}$ associated with a reproducing kernel $\mathcal{K}$. The objective is to minimize the regularized empirical risk
\begin{equation}\label{eq: optimization problem in RKHS}
\min_{f\in\mathcal{H}}\left\{\frac{1}{N}\sum_{i=1}^N (f(\vx_i)-y_i)^2 + \lambda_0\|f\|_{\mathcal{H}}^2\right\},
\end{equation}
where $\lambda_0>0$ is the regularization parameter. By the Representer Theorem \citep{scholkopf2001generalized}, the optimal solution admits the finite-dimensional form 
$f^*(\cdot)=\sum_{i=1}^N w_i \mathcal{K}(\vx_i,\cdot).$ In the RKHS setting considered here, optimizing over the dual coefficients $\vw = [w_1, \dots, w_N]^\top \in \mathbb{R}^N$ yields the explicit dual linear system
\begin{equation}\label{eq: unnormalized kernel system}
(\KerMat+\lambda\mI)\vw=\vy,
\end{equation}
with kernel matrix $\KerMat_{ij} = \mathcal{K}(\vx_i,\vx_j)$, label vector $\vy = [y_1, \dots, y_N]^\top$, and $\lambda := \lambda_0 N$. 

This formulation has a clean statistical interpretation \citep{rasmussen2006gaussian,caponnetto2007optimal}. Suppose the training labels are generated as $y_i=f(\vx_i)+\epsilon_i$, where
$f$ is drawn from a zero-mean Gaussian process with covariance kernel
$\mathcal{K}$, denoted $f\sim\mathrm{GP}(0,\mathcal{K})$, and
$\epsilon_i\sim\mathcal{N}(0,\sigma^2)$ are i.i.d.\ observation noises. If the test label is the noiseless latent value
$y_{N+1}=f(\vx_{N+1})$, then setting $\lambda=\sigma^2$ makes the KRR
predictor $f^*(\vx_{N+1})$ coincide with the Bayes posterior mean.

\paragraph{Iterative solution via preconditioned Richardson iteration.} As discussed in the introduction, the row-sum diagonal $\mD = \diag(\KerMat\mathbf{1})$ acts as a preconditioner: left-multiplying~\eqref{eq: unnormalized kernel system} by $\mD^{-1}$ gives the equivalent system $\mD^{-1}(\KerMat+\lambda\mI)\vw = \mD^{-1}\vy$, on which the preconditioned Richardson iteration with step size $\eta>0$ reads
\begin{equation}\label{eq: precond_richardson}
    \vw^{(\ell+1)} = \vw^{(\ell)} + \eta\,\bigl( \mD^{-1}\vy - \mD^{-1}(\KerMat+\lambda \mI)\vw^{(\ell)} \bigr).
\end{equation}
Iteration~\eqref{eq: precond_richardson} is the update that the transformer
construction in Section~\ref{sec:mechanism} approximately implements step by step in its forward pass.

\paragraph{In-context learning.}
For in-context learning, the transformer receives the training dataset $\mathcal{D}$, a new test input $\vx_{N+1} \in \mathbb{R}^d$, and a set of auxiliary weights $\{w_i\}_{i=1}^N$ that act as placeholders for the intermediate iterates $\vw^{(\ell)}$ in~\eqref{eq: precond_richardson}; the construction in Section~\ref{sec:mechanism} updates these weights through depth and reads off the prediction at $\vx_{N+1}$.

To facilitate the required mathematical operations within the transformer's forward pass, we adopt two simple input augmentations: we include the squared norms $\|\vx_j\|^2$ as a feature of each token (or, in the unit-norm setting, this feature is constant and can be omitted \citep{dragutinovic2025softmax}), and we prepend the sequence with a designated dummy token $\vx_0 = \vzero$. We view these augmentations as \emph{input preprocessing} rather than part of the
core iterative-solver mechanism. This is consistent with prior constructive analyses of ICL, where input encoding, initialization, or readout operations are often separated from the main algorithmic update
\citep{vonoswald2023transformers,bai2023transformers,fu2024transformers}. Indeed, the squared-norm feature $\|\vx_j\|^2$ can be approximated by an MLP-only transformer block; the dummy token $\vx_0=\vzero$ is also a mild input convention, analogous to
special sequence tokens such as beginning-of-sequence markers in autoregressive
language models \citep{touvron2023llama}. We hard-code both in the input embedding to keep the iterative-solver mechanism clean.

Formally, we construct the input matrix $\mZ \in \mathbb{R}^{D \times (N+2)}$ as follows. The sequence consists of $N+2$ tokens indexed by $j \in \{0, 1, \dots, N, N+1\}$. For each token $j$, we concatenate the input vector $\vx_j \in \mathbb{R}^d$, the label $y_j \in \mathbb{R}$, the weight $w_j \in \mathbb{R}$, and the augmented feature $\|\vx_j\|^2$. We reserve $5$ dimensions padded with zeros for intermediate computation (cached memory). Finally, we append structural indicators $\{s_j, t_j\}$ and a constant bias term. This yields the following block matrix representation for the input data

$$\mZ =
\begin{bmatrix}
\vx_0 & \vx_1 & \vx_2 & \dots & \vx_N & \vx_{N+1} \\
0 & y_1 & y_2 & \dots & y_N & 0\\
0 & w_1 & w_2 & \dots & w_N & 0\\
0 & \|\vx_1\|^2 & \|\vx_2\|^2 & \dots & \|\vx_N\|^2  & \|\vx_{N+1}\|^2 \\
\mathbf{0}_5 & \mathbf{0}_5 & \mathbf{0}_5 & \dots & \mathbf{0}_5 & \mathbf{0}_5\\
s_0 & s_1 & s_2 & \dots & s_N & s_{N+1}\\
t_0 & t_1 & t_2 & \dots & t_N & t_{N+1}\\
1 & 1 & 1 & \dots & 1 & 1
\end{bmatrix} \in \mathbb{R}^{D \times (N+2)},
$$

where $D = d + 11$, and $\mathbf{0}_5 \in \mathbb{R}^5$ denotes a zero vector. The structural indicators act as positional encodings for the model to identify the dummy and test tokens. Specifically, the dummy token indicator $s_j$ is defined by $s_0=1$ and $s_j=0$ for all $j \in [N+1]$; the test token indicator $t_j$ is defined by $t_{N+1}=1$ and $t_j=0$ for all $j \in \{0, 1, \dots, N\}$. Applying the transformer to the input matrix $\mZ$ yields the output $\widetilde{\mZ} = \TF(\mZ)$, from which we extract the prediction for the test token via a readout function $y_{N+1} = \mathrm{readout}(\widetilde{\mZ}) := \widetilde{\mZ}_{d+1,N+1}.$

\section{Main Result: An End-to-End Construction}
\label{sec:mechanism}

We construct a single-head softmax-attention transformer with ReLU MLPs whose
forward pass approximately implements preconditioned Richardson
iteration \eqref{eq: precond_richardson} for the kernel system \eqref{eq: unnormalized kernel system}, with explicit bounds on the
required depth and MLP width.

\paragraph{Assumptions.}
We impose a \emph{bounded data} assumption: there exist $B_x, B_y > 0$ such that $\normTwo{\vx_i} \leq B_x$ and $\abs{y_i} \leq B_y$ for all $i \in [N+1]$. The kernel is the Gaussian kernel $\mathcal{K}(\vx,\vx') = \exp(-\|\vx-\vx'\|_2^2 / (2v^2))$ with bandwidth $v$, and the dual coefficients are initialized to zero, $\vw^{(0)} = \vzero$. 

\begin{theorem}[Informal]\label{theorem: informal version of KRR ICL}
Fix $c \in (0,1)$ and $\varepsilon \in (0, c)$, and let $\vw^* = [w_i^* : i \in [N]]$ denote the solution to the KRR linear system~\eqref{eq: unnormalized kernel system}. There exists a single-head transformer $\TF$ with $2L+5$ blocks (three read-in blocks, two blocks for each
of the $L$ iteration steps, and two read-out blocks) for $L = \mathcal{O}(\log(1/\varepsilon))$ and MLP width $\mathcal{O}(\sqrt{N/\varepsilon})$ such that
\begin{equation*}
\abs{\mathrm{readout}(\TF(\mZ)) - \sum_{i=1}^N w_i^* \,\mathcal{K}(\vx_i, \vx_{N+1})} \;\le\; C_{\mathrm{sys}}\,\varepsilon.
\end{equation*}
The system constant $C_{\mathrm{sys}}$ depends on regularization parameter $\lambda_0$, kernel bandwidth $v$, step size $\eta$, data bounds $B_x, B_y$, and constant $c$, but is independent of the prompt length $N$.
\end{theorem}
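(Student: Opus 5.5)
The plan has three parts: a convergence analysis of the exact iteration~\eqref{eq: precond_richardson}, a block-by-block transformer construction that carries out each step up to a controlled perturbation, and an error-propagation argument for the inexact iteration.

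\textbf{Convergence of the exact iteration.} Under bounded data every kernel entry satisfies $\KerMat_{ij}\ge \kappa_0:=e^{-2B_x^2/v^2}$. Hence the diagonal of $\mD$ lies in $[N\kappa_0, N]$.

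The iteration matrix is $\mI-\eta\mD^{-1}(\KerMat+\lambda\mI)$. It is similar to the symmetric matrix $\mI-\eta\mD^{-1/2}(\KerMat+\lambda\mI)\mD^{-1/2}$. The eigenvalues of $\mD^{-1/2}\KerMat\mD^{-1/2}$ lie in $[0,1]$, since $\mD^{-1}\KerMat$ is row-stochastic with nonnegative entries. The ratio $\lambda/\mD_{ii}=\lambda_0 N/\mD_{ii}$ lies in $[\lambda_0,\lambda_0/\kappa_0]$, and this is independent of $N$.

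Choosing $\eta$ small (e.g.\ $\eta<2/(1+\lambda_0/\kappa_0)$), the spectral radius is some $\rho<1$ depending only on $\lambda_0,v,B_x,\eta$. The contraction is in the $\mD^{1/2}$-weighted norm, whose condition number $1/\kappa_0$ is also $N$-free.

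Moreover $\|\vw^*\|_\infty$ scales like $B_y/(N\cdot\text{const})$. This holds because $\mD^{-1}\vy=O(B_y/N)$ entrywise, and the preconditioned system is well conditioned in $\infty$-norm. I expect to verify this via a Neumann-series argument in the weighted norm, combined with the row-stochastic structure.

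The useful scaled quantity is $u_i:=Nw_i$, which is $O(1)$. Then $L=O(\log(1/\varepsilon))$ exact steps give $\max_i |u_i^{(L)}-u_i^*|\lesssim \varepsilon$. The prediction is $\frac1N\sum_i u_i\mathcal{K}(\vx_i,\vx_{N+1})$, so its error is at most $\max_i|u_i-u_i^*|$.

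\textbf{Construction.} The read-in blocks store the scaled quantities in the cache. Using the squared-norm feature, the attention score $\iprod{\attnQ\vz_i,\attnK\vz_j}=(2\vx_i^\top\vx_j-\|\vx_j\|^2)/(2v^2)$ produces softmax weights equal to the row-normalized kernel, since the $\|\vx_i\|^2$ term cancels.

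The dummy token $\vx_0=\vzero$ serves two purposes. First, a value vector supported only on the dummy, via $s_j$, lets the attention output $\mathcal{K}(\vx_i,0)/(\mathcal{K}(\vx_i,0)+\mD_{ii})$. Second, a value of $1$ on all training tokens yields the fraction of mass $\mD_{ii}/(\mD_{ii}+\mathcal{K}(\vx_i,\vx_0))$. From these ratios the MLP recovers $a_i:=N/\mD_{ii}\in[1,1/\kappa_0]$ by approximating a scalar division/reciprocal on a compact interval bounded away from zero. Masking keeps the test token out of the keys.

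Each iteration then uses two blocks. In the first, attention with value $u_j$ computes $\sum_j \mathcal{K}_{ij}u_j/\mD_{ii}$; the dummy-token contribution is corrected using the stored ratio. In the second, the MLP forms
\[
u_i\leftarrow u_i+\eta\bigl(a_i y_i-\lambda_0 a_i u_i-(\mD^{-1}\KerMat\vu)_i\bigr),
\]
which requires the products $a_iy_i$ and $a_iu_i$. Products are realized as $xy=\frac14((x+y)^2-(x-y)^2)$, with squares approximated by piecewise-linear ReLU interpolation. Width $m$ gives accuracy $O(1/m^2)$ on a bounded interval.

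The read-out blocks use test-token attention to compute $\frac{1}{\sum_j\mathcal{K}_{N+1,j}}\sum_j\mathcal{K}_{N+1,j}u_j$. They multiply by $\sum_j\mathcal{K}_{N+1,j}/N$, itself obtained via the dummy-token trick and a reciprocal, and write the result into the label slot.

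\textbf{Error propagation and the main obstacle.} The inexact iteration is $\tilde u^{(\ell+1)}=T\tilde u^{(\ell)}+\bm\delta^{(\ell)}$ with per-step error $\|\bm\delta\|_\infty\le\delta$. The accumulated error is then $O(\delta/(1-\rho))$. We also need $\tilde u$ to stay in the bounded box where the MLP approximations are valid; I would prove this by induction with the box enlarged by a constant margin.

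The difficulty is the width target $\sqrt{N/\varepsilon}$, which suggests the required per-operation accuracy is $\varepsilon/N$ rather than $\varepsilon$. This happens if one works in unscaled variables, or if the dummy-token ratio $\mathcal{K}(\vx_i,0)/(\mD_{ii}+\cdot)=\Theta(1/N)$ must be inverted. Inverting that ratio amplifies an absolute error by a factor of $N$. Square approximation with width $m$ then needs $1/m^2\lesssim\varepsilon/N$, giving $m\sim\sqrt{N/\varepsilon}$.

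So the main obstacle is tracking how the $1/N$-scale quantities produced by softmax normalization are converted to $O(1)$ scale by MLPs, and ensuring the errors do not compound beyond this. I would first try to perform all reciprocals on $O(1)$-scale quantities where possible. Where that fails, I would accept accuracy $\varepsilon/N$ in the single read-in reciprocal and the subsequent products, and bound everything else with $N$-independent constants, yielding $C_{\mathrm{sys}}$.
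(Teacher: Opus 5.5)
Your blueprint matches the paper's: dummy-token attention for the preconditioner, masked softmax for $\mD^{-1}\KerMat\vw$, polarization with spline squares for products, and contraction in the $\mD^{1/2}$-weighted norm with $N$-free constants. The difference is that you rescale to $\vu=N\vw$, $a_i=N/\mD_{ii}$ and track errors in the sup-norm. The paper instead keeps $\vw$ unscaled, demands accuracies $\varepsilon/N$ and $\varepsilon/N^2$, and tracks errors in $\normTwo{\cdot}$.

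The gap is exactly at this change of norm. With $N$-free constants, the weighted norm is equivalent to $\sqrt N\normTwo{\cdot}$, not to $\normMax{\cdot}$. From $\normTwo{\mD^{1/2}(\vw^{(L)}-\vw^*)}\le\rho^L\normTwo{\mD^{1/2}\vw^*}=\mathcal{O}(\rho^L)$ you only get $\normMax{\vu^{(L)}-\vu^*}\le N\normTwo{\vw^{(L)}-\vw^*}=\mathcal{O}(\sqrt N\rho^L)$. Likewise, a per-step error with $\normMax{\bm{\delta}}\le\delta$ can have $\normTwo{\bm{\delta}}=\sqrt N\delta$. Hence three of your claims do not follow: that $\max_i|u_i^{(L)}-u_i^*|\lesssim\varepsilon$ after $\mathcal{O}(\log(1/\varepsilon))$ steps, that the accumulated error is entrywise $\mathcal{O}(\delta/(1-\rho))$, and that $\tilde u$ stays in an $N$-free box. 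Taken at face value, your argument gives $L=\mathcal{O}(\log(N/\varepsilon))$ or a box of radius $\mathcal{O}(\sqrt N)$.

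The same slip affects $\normMax{\vw^*}=\mathcal{O}(1/N)$. A weighted-norm Neumann series yields only $\mathcal{O}(N^{-1/2})$. You need $\lambda\vw^*=\vy-\KerMat\vw^*$ together with an $N$-free bound on $\normMax{\KerMat\vw^*}$. The paper gets $B_y/\sqrt{\lambda_0}$ from $\|f^*\|_{\mathcal{H}}\le B_y/\sqrt{\lambda_0}$ and the reproducing property; the bound $|(\KerMat\vw^*)_i|\le\sqrt N\normTwo{\vw^*}\le B_y/\lambda_0$ also works.

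Either of two repairs closes the gap.

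(a) The paper's route. Bound the prediction by Cauchy--Schwarz, $\frac1N\bigl|\sum_i(u_i-u_i^*)\mathcal{K}(\vx_i,\vx_{N+1})\bigr|\le N^{-1/2}\normTwo{\vu-\vu^*}$, which the weighted-norm recursion controls $N$-independently. Then accept the box $\normMax{\vu}=\mathcal{O}(\sqrt N)$, which is the paper's $B_w=\mathcal{O}(N^{-1/2})$ in Theorem~\ref{theorem: entry-wise boundedness of w ell}. Squaring on such an interval to accuracy $\varepsilon$ costs width $\sqrt{N/\varepsilon}$, and this is exactly where the paper's $\tildensq$ comes from. The paper's other $\sqrt N$ factors come from unscaled products at accuracy $\varepsilon/N$ and from approximating $1/x$ on $[1/(N+1),1]$.

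(b) Add an entrywise estimate. With $\ve=\vu-\vu^*$ you have $e^{(\ell+1)}_i=(1-\eta\lambda/\mD_{ii})e^{(\ell)}_i-\eta(\KerMat\ve^{(\ell)})_i/\mD_{ii}+\delta^{(\ell)}_i$. Here $0\le 1-\eta\lambda/\mD_{ii}\le 1-\eta\lambda_0$ under \eqref{eq: eta condition}, and $|(\KerMat\ve)_i|/\mD_{ii}\le\normTwo{\ve}/(\sqrt N\kappa_0)$. So the $N$-free bound on $N^{-1/2}\normTwo{\ve^{(\ell)}}$ becomes an $N$-free sup-norm bound. This validates your box, and combined with your rescaling it appears to avoid the $\sqrt N$ entirely. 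That is allowed, since $\mathcal{O}(\sqrt{N/\varepsilon})$ is only an upper bound and need not be matched.

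Separately, your read-in does not fit the three-block budget as written. With your score, the dummy weight is $\mathcal{K}(\vx_i,\vzero)/(\mathcal{K}(\vx_i,\vzero)+\mD_{ii})$, so $N/\mD_{ii}$ is not a one-variable reciprocal of an attention output. You would also need $e^{\|\vx_i\|^2/(2v^2)}$ and a product, hence extra MLP layers beyond the three read-in blocks. The paper multiplies the auxiliary query/key coordinates by $1-s$, so the dummy score is exactly $0$ while training scores are exactly $-\|\vx_i-\vx_j\|^2/(2v^2)$. This gives $k_i=1/(1+\mD_{ii})$ and $\mD_{ii}^{-1}=k_i/(1-k_i)$, a single scalar function. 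In the iteration, the paper masks the dummy instead of correcting for it, and resets the attention cache in the second MLP.
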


The formal version of the above theorem appears as
Theorem~\ref{theorem: formal version of KRR ICL} in
Appendix~\ref{app: construction transformers analysis}, with explicit
estimates for the depth, MLP width, and the $N$-independent constant
$C_{\mathrm{sys}}$. The proof is constructive, and we sketch the network
construction here; a full overview is given in
Appendix~\ref{app: overview of transformer construction}. The transformer is
built in three phases. The \emph{read-in phase}
(Appendix~\ref{app: read-in phase}) uses three blocks to prepare quantities
needed by the iterative solver, including approximations of $\mD^{-1}$ and
$\mD^{-1}\vy$. The \emph{iteration phase}
(Appendix~\ref{app: iteration phase}) uses pairs of transformer blocks to
approximately implement preconditioned Richardson updates for the
coefficients $\vw$. Finally, the \emph{read-out phase}
(Appendix~\ref{app: read-out phase}) extracts the prediction at the test
token $\vx_{N+1}$, yielding an approximation of
$\sum_{i=1}^N w_i^*\mathcal{K}(\vx_i,\vx_{N+1})$.

\paragraph{Algorithm-Architecture correspondence.}
Across these phases, the two components of a standard transformer play complementary algorithmic roles. Softmax attention handles the
\emph{cross-token} operations: it produces row-normalized kernel interactions
and computes products such as $\mD^{-1}\KerMat\vw^{(\ell)}$. ReLU MLPs act
locally at each token, approximating the \emph{intra-token} scalar arithmetic,
including elementwise operations such as $\mD^{-1}\vy$ and
$\mD^{-1}\vw^{(\ell)}$. This separation of roles is central to our transformer construction: its forward
pass implements an inexact preconditioned Richardson iteration within the
standard softmax-attention/MLP architecture.

\paragraph{Scaling of depth and width.}
The depth and width scalings in Theorem~\ref{theorem: informal version of KRR ICL}
come from two different sources. The depth
$\mathcal{O}(\log(1/\varepsilon))$ is an \emph{algorithmic count} obtained from
the error analysis of the inexact preconditioned Richardson iteration
(Appendix~\ref{app: error analysis of inexact preconditioned Richardson iterations}).
With a suitable choice of step size $\eta$, the error-propagation matrix of the
underlying iteration is contractive. Consequently, a logarithmic number of
inexact iteration steps suffices to reach the desired prediction accuracy. In contrast, the MLP width scaling is an \emph{approximation count}: ReLU MLPs
are used to approximate the tokenwise scalar arithmetic operations required by
the construction
(Appendix~\ref{app: approximation of single-layer ReLU Networks}). Common single-layer ReLU approximation results used in the ICL literature are
input-dimension-free, thereby avoiding the curse of dimensionality
\citep{bach2017breaking,bai2023transformers}. In our construction, however, the
MLPs only need to approximate a few specific one-dimensional smooth functions,
such as $x/(1-x)$, $x^2$, and $1/x$, on intervals of interest. We can therefore
use explicit linear spline approximations, which are representable by
single-layer ReLU networks and achieve sharper interpolation rates
\citep{de1978practical,devore1998nonlinear,devore2021neural}. By carefully
estimating the relevant approximation intervals and error tolerances, we obtain
the required MLP width scaling $\mathcal{O}(\sqrt{N/\varepsilon})$ for each
scalar approximation. Together, the depth scaling reflects the convergence rate
of the inexact preconditioned Richardson iteration, while the width scaling
reflects the cost of approximating the local scalar arithmetic.

\section{Experiments}
\label{sec:experiments}

We empirically test the algorithmic identification through four claims: (i) preconditioned Richardson is the unique classical method whose per-step dynamics align with the transformer's per-layer dynamics (Section~\ref{subsec:convergence}); (ii) this alignment relies on two structural ingredients, softmax row-normalization and the Gaussian kernel (Section~\ref{subsec:mechanism}); (iii) it is robust across natural architectural variations (Section~\ref{subsec:capacity-ablations}); and (iv) the trained transformer behaves as if regularizing with a fixed $\lambda \approx \sigma_{\mathrm{train}}^2$ and exhibits implicit early stopping under noise mismatch (Section~\ref{subsec:noise-experiment}).

\paragraph{Data generation.}
We consider dynamically generated Gaussian process (GP) regression tasks. Both training and evaluation sequences are constructed as follows. The inputs $\vx_i$, $i\in[N+1]$, are independent and identically distributed (i.i.d.) samples drawn from a prescribed distribution $\mathcal{P}$. Then, a latent function $f\sim\mathrm{GP}(0,\mathcal{K})$ is sampled, where $\mathcal{K}$ is a Gaussian kernel with bandwidth $v$. The context labels are generated with observation noise as $y_i=f(\vx_i)+\epsilon_i$, where $\epsilon_i\sim\mathcal N(0,\sigma_{\mathrm{noise}}^2)$ for $i\in[N]$. Unless otherwise stated, we use a maximum prompt length $N=40$, input dimension $d=5$, noise standard deviation $\sigma_{\mathrm{noise}}=0.05$, and kernel bandwidth $v=1$. We consider three input distributions $\mathcal{P}$: Uniform $\mathrm{Unif}([-1,1]^d)$, Gaussian  $\mathcal{N}(0, 0.6^2 \mI_d)$, and Spherical $\mathrm{Unif}(\mathbb{S}^{d-1})$. 

\paragraph{Transformer architecture and pretraining.}
We pretrain a GPT-2 style decoder-only transformer \citep{radford2019language,garg2022can} from scratch on each input distribution. The model receives the interleaved sequence $(\vx_1,y_1,\ldots,\vx_N,y_N,\vx_{N+1},0)$ and is trained to minimize the mean-squared error between its prediction and the noisy label at every in-context position $n\!\in\!\{1,\dots,N\}$, following the standard ICL-regression objective \citep{garg2022can}, so the model learns to predict at every context length rather than only at $n\!=\!N$. By default, we use a 12-layer transformer with $8$ softmax attention heads, embedding dimension $256$, GELU MLPs, and Pre-LayerNorm; this deliberately differs from the construction's single-head, ReLU-MLP, no-LayerNorm setup, so that the experiments test whether the Richardson signature is visible in a standard GPT-style variant rather than only in the hand-constructed architecture. Pretraining runs for $500{,}000$ optimizer steps with batch size $64$ (approximately $32$M fresh GP sequences sampled on the fly), using AdamW with a cosine learning-rate schedule. Evaluation uses $256$ freshly-sampled GP sequences with a fixed random seed shared across the transformer and all classical baselines, so per-sequence comparisons are deterministic. Full hyperparameters are in Appendix~\ref{app:training_details}.

\paragraph{Classical baselines.}
We compare the per-layer transformer predictions against four classical iterative methods for KRR. Two are linear-system solvers applied to the kernel system \eqref{eq: unnormalized kernel system}: \emph{preconditioned Richardson iteration} and \emph{conjugate gradient (CG)}. The other two are loss-function optimizers applied to the RKHS-regularized loss $L(\vw) = \frac{1}{2} \|\KerMat\vw - \vy\|^{2} + \frac{1}{2}\lambda \vw^{\top} \KerMat \vw$: \emph{gradient descent (GD)} and \emph{Nesterov gradient descent}. Full details for each method are provided in Appendix~\ref{app:solvers}.

\subsection{Preconditioned Richardson uniquely aligns with per-layer dynamics}
\label{subsec:convergence}

\paragraph{Error curves.}
Figure~\ref{fig:convergence} compares the layer-wise MSE of the transformer with the four classical methods. \emph{Preconditioned Richardson iteration shows the closest agreement with the transformer's performance.} Both exhibit a smooth decrease in error and maintain similar relative spacing between the curves for different context lengths $n$. In contrast, the other methods show distinct behaviors: CG converges faster and follows an oscillatory error curve, while the loss-function optimizers GD and Nesterov GD are significantly slower and do not reach the transformer's error floor. Despite prior ICL theory predominantly modeling transformers as gradient-based loss optimizers \citep{vonoswald2023transformers, akyurek2023learning, ahn2023transformers, mahankali2024optimal}, our results suggest that for KRR, the transformer is better characterized as an iterative linear-system solver.

\paragraph{Similarity of errors.} To quantify the alignment between the learned transformer and classical iterative algorithms, we adapt the similarity of errors metric proposed by \citet{fu2024transformers}. Let $B = \{\vx_i, y_i\}_{i=1}^{N+1}$ be a data sequence sampled from a batch $\mathcal{B}$ of test sequences. For a classical algorithm $\mathcal{A}$ evaluated at iteration $t$, let $\mathcal{A}^{(t)}(\vx_{n+1} \mid \{\vx_i, y_i\}_{i=1}^n)$ denote its prediction on the $(n+1)$-th example after observing a context of length $n$. For brevity, we write $\mathcal{A}^{(t)}(\vx_{n+1})$. The vector of prediction errors across the entire sequence is defined by
$\ve(\mathcal{A}^{(t)}|B):=[\mathcal{A}^{(t)}(\vx_2) - y_2,\dots,\mathcal{A}^{(t)}(\vx_{N+1}) - y_{N+1}]^\top.$
Similarly, for the transformer ($\mathrm{TF}$), let $\mathrm{TF}^{(\ell)}(\vx_{n+1})$ denote the scalar prediction mapped by a linear probe from the hidden state at position $n$ in layer $\ell$. Its corresponding error vector on the sequence is denoted as $\ve(\mathrm{TF}^{(\ell)}|B)$. The similarity of errors (SimE) between the transformer at layer $\ell$ and the classical algorithm at step $t$ is defined as the expected cosine similarity of their error vectors over the test sequences
$\mathrm{SimE}(\ell,t):=\frac{1}{|\mathcal{B}|}\sum_{B\in\mathcal{B}}\mathcal{C}\Bigl( \ve(\mathrm{TF}^{(\ell)}|B), \ve(\mathcal{A}^{(t)}|B) \Bigr),$
where $\mathcal{C}(\vu,\vv):=\frac{\langle\vu,\vv\rangle}{\|\vu\|_2 \|\vv\|_2}$ is the cosine similarity. Figure \hyperref[fig:heat_and_argmax]{\ref*{fig:heat_and_argmax}(a)} displays $\mathrm{SimE}(\ell, t)$ as a heatmap of transformer layers against preconditioned Richardson iteration steps; the diagonal high-similarity band confirms that layer $\ell$ tracks Richardson step $t^*(\ell)$, with the alignment most pronounced in the early layers. Heatmaps across all four methods and three distributions are in Figure~\ref{fig:sime-heatmaps-grid}.\\

\begin{figure}[t]
\centering
\begin{minipage}[t]{0.52\linewidth}
    \vspace{0pt}
    \begin{minipage}[c]{0.08\linewidth}
        \centering
        (a)
    \end{minipage}%
    \begin{minipage}[c]{0.92\linewidth}
        \centering
        \includegraphics[width=1.09\linewidth]{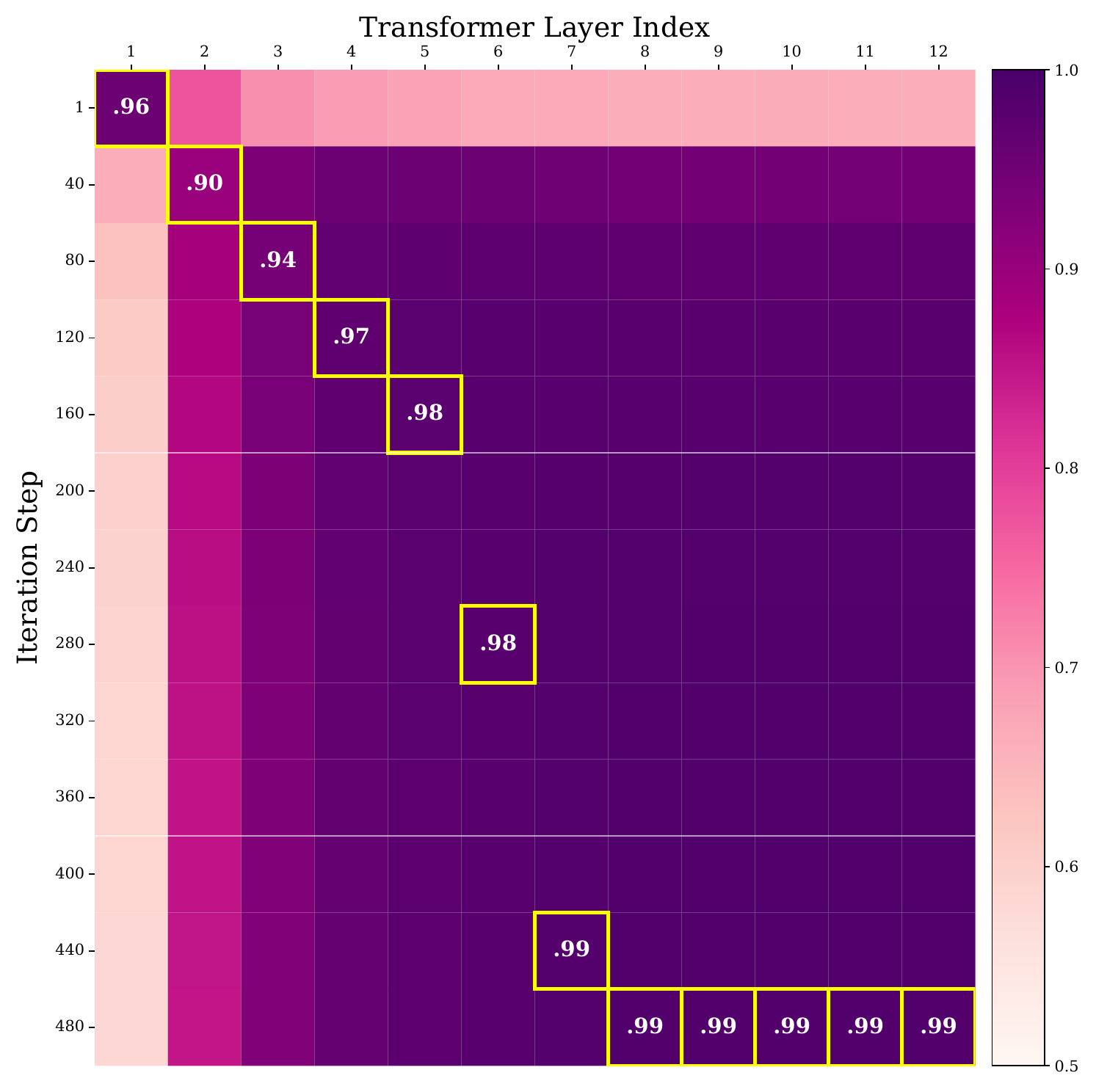}
    \end{minipage}
\end{minipage}
\hfill
\begin{minipage}[t]{0.46\linewidth}
    \vspace{11.5pt}
    \begin{minipage}[c]{0.90\linewidth}
        \centering
        \includegraphics[width=0.885\linewidth]{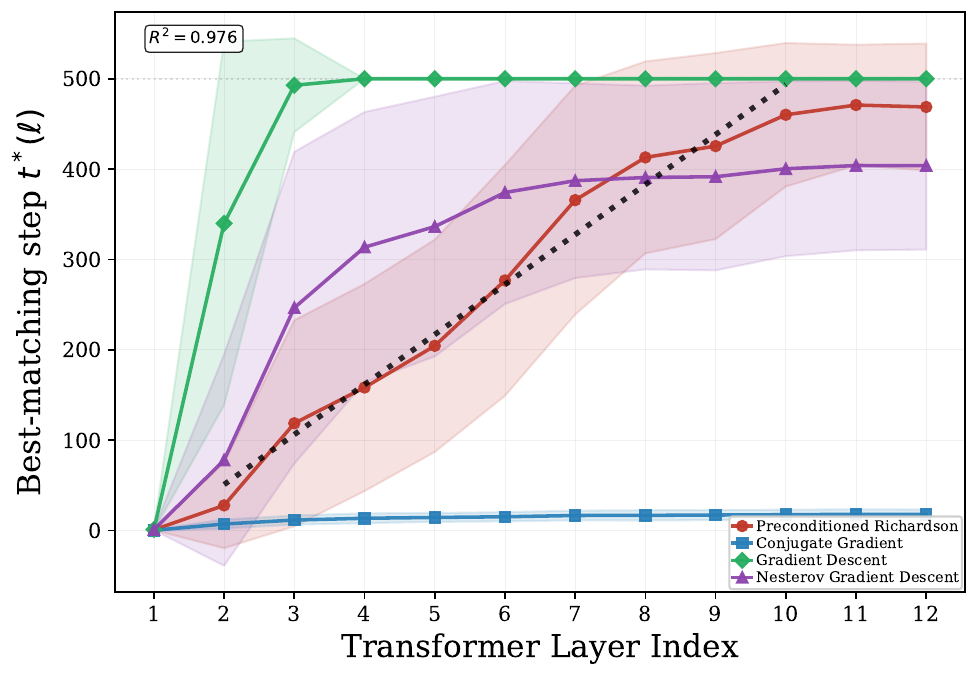}
    \end{minipage}%
    \begin{minipage}[c]{0.10\linewidth}
        \centering
        (b)
    \end{minipage}
    \hfill
    \begin{minipage}[c]{0.90\linewidth}
        \centering
        \includegraphics[width=0.885\linewidth]{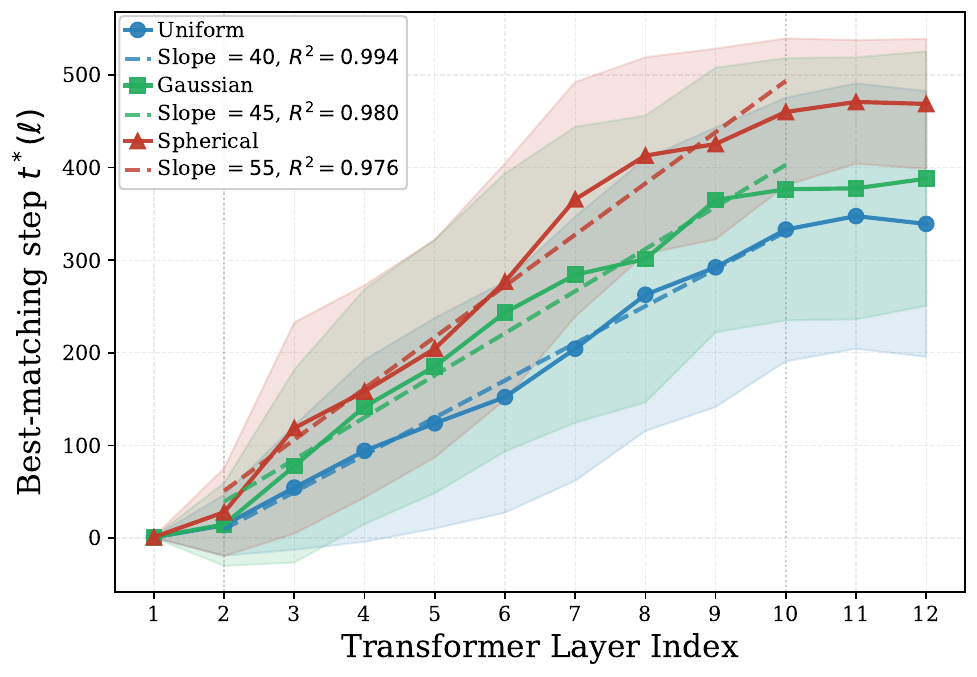}
    \end{minipage}%
    \begin{minipage}[c]{0.10\linewidth}
        \centering
        (c)
    \end{minipage}
\end{minipage}
\caption{\textbf{Error similarity visualizations for spherical data.} \textbf{(a)} SimE heatmap between transformer \emph{layer-wise} outputs and preconditioned Richardson \emph{step-wise} outputs; yellow boxes mark the best-matching step $t^*(\ell)$ per layer, with inset numbers denoting the peak cosine similarity. \textbf{(b)} Argmax trajectory $t^*(\ell)$ for four classical algorithms; \emph{only preconditioned Richardson exhibits a steady linear progression across the transformer's intermediate layers}, with coefficient of determination $R^2 = 0.976$. \textbf{(c)} Best-matching Richardson steps grow approximately linearly across all three input distributions.}
\label{fig:heat_and_argmax}
\end{figure}

\paragraph{Best-matching step.} For each sequence $B$ and layer $\ell$, we identify the iterative step $t^*$ that maximizes cosine similarity, $t^*(B,\ell):=\mathrm{argmax}_{t\in\{0,\dots,T\}} \mathcal{C}\Bigl( \ve(\mathrm{TF}^{(\ell)}|B),\ve(\mathcal{A}^{(t)}|B)\Bigr)$. Figure \hyperref[fig:heat_and_argmax]{\ref*{fig:heat_and_argmax}(b)} reports the mean and single standard deviation band of $t^*(\ell)$ computed across test sequences. A least-squares linear fit on inner layers $\ell \in \{2, \ldots, 10\}$, excluding layer~1 (warmup) and layers~11--12 (saturation), yields $R^2 \approx 0.976$, indicating that each successive layer corresponds to a steady, proportional number of Richardson steps. Argmax plots for all three input distributions are in Figure~\ref{fig:argmax-app}.

\subsection{Mechanism specificity: softmax normalization and the Gaussian kernel}
\label{subsec:mechanism}

The construction in Section~\ref{sec:mechanism} predicts that two structural ingredients are essential for the algorithmic identification: softmax row-normalization, which realizes the row-sum preconditioner $\mD^{-1}$, and the Gaussian kernel structure, for which the row-sum preconditioner is well-conditioned. Replacing either would weaken the empirical alignment.\\

\paragraph{Linear attention.}
We repeat the experiment from Section~\ref{subsec:convergence} using linear attention, i.e., raw dot-product attention without softmax. As shown in Figure~\ref{fig:linear_attn_convergence_all}, no single iterative method simultaneously captures both the pattern of the transformer's error curves and the linearity of the argmax-step map. This suggests that the softmax normalization is crucial for the close connection between transformer behavior and preconditioned Richardson iterations.

\paragraph{Arc-cosine-1 kernel.}
We repeat the training pipeline with the Gaussian kernel replaced by the arc-cosine-1 kernel \citep{cho2009kernel} on spherical inputs. Compared with the Gaussian kernel, the alignment with preconditioned Richardson weakens: SimE is highest for preconditioned Richardson in \(11/12\) layers for the Gaussian kernel but only \(7/12\) layers for the arc-cosine-1 kernel, and the value \(R^2\) decreases from \(0.9760\) to \(0.9275\). These results suggest that the row-sum preconditioner \(\mD=\mathrm{diag}(\KerMat\mathbf{1})\) is aligned with the Gaussian kernel setting, and that extending the identification to non-Gaussian kernels may require a different preconditioner.

\subsection{Architectural robustness}
\label{subsec:capacity-ablations}

The construction in Section~\ref{sec:mechanism} uses a single attention head, depth $\mathcal{O}(1/\epsilon)$ and MLP width $\mathcal{O}(\sqrt{N/\varepsilon})$. We verify that the empirical preconditioned Richardson identification persists across architectural variations in head count, model width, and model depth.

\paragraph{Number of heads.}
We train transformers on spherical GP data with \(H \in \{1,2,4,8\}\). We then evaluate both the final-layer prediction error at \(N=40\) and the value $R^2$ for the argmax trajectory. As shown in Figure~\ref{fig:ablations_head}, both evaluations remain stable across head counts, consistent with Theorem~\ref{theorem: informal version of KRR ICL} that a single head
suffices to realize the relevant iterative computation.

\paragraph{Model width.}
We vary the model width \(d_{\mathrm{model}} \in \{32,64,128,256,512\}\), with
\(d_{\mathrm{ff}} = 4d_{\mathrm{model}}\), and measure the final-layer prediction
error at \(N=40\). As shown in
Figure~\ref{fig:ablations_width}, prediction error decreases rapidly
with increasing width and saturates near the KRR baseline, consistent with the construction's polynomial-in-$1/\varepsilon$ MLP-width requirement.

\paragraph{Model depth.}
We train 24-layer transformers on spherical GP data with bandwidths \(v=1\) and \(v=2\) corresponding to two condition number regimes. As shown in Figures~\ref{fig:convergence-app-24L} and~\ref{fig:sime_24L}, preconditioned Richardson iteration continues to align most closely with the transformer's per-layer error trajectory across both regimes.

\subsection{Noise-level ablation}
\label{subsec:noise-experiment}

We sweep nine test-noise levels $\sigma_{\mathrm{test}}\!\in\!\{10^{-3},\dots,1\}$ and compare the trained 12-layer transformer's final-layer MSE at $N\!=\!40$ against two exact-KRR references: \emph{Bayes-optimal} ($\lambda\!=\!\sigma_{\mathrm{test}}^2$) and \emph{encoded} ($\lambda\!=\!\sigma_{\mathrm{train}}^2$). Across the full sweep the transformer tracks the encoded oracle rather than adapting $\lambda$ to $\sigma_{\mathrm{test}}^2$, and at $\sigma_{\mathrm{test}}\!\gg\!\sigma_{\mathrm{train}}$ it sits \emph{below} the encoded oracle (transformer-to-encoded ratio $0.77$ on Spherical at $\sigma_{\mathrm{test}}\!=\!1.0$), consistent with $L\!=\!12$ truncation acting as implicit early stopping. Together these findings are consistent with the transformer \emph{behaving as if} it were running finite-depth Richardson with $\lambda\!\approx\!\sigma_{\mathrm{train}}^2$. Full sweep, per-distribution curves, and numerical values are in Figure~\ref{fig:noise-ablation} and Table~\ref{tab:noise-ablation-full} (Appendix~\ref{app:noise}).

\section{Discussion}
\label{sec:discussion}
Our construction identifies softmax row-normalization with row-sum Jacobi
preconditioning of the dual kernel system, showing that nonlinear in-context KRR
can be implemented by an inexact preconditioned Richardson iteration: attention
performs cross-token kernel matrix--vector products, while MLPs handle tokenwise
scalar arithmetic. Our analysis is limited to bounded Gaussian-kernel KRR and
assumes fixed $\lambda_0$ and step size $\eta$, rather than prompt-adaptive
solver parameters. Although Section~\ref{subsec:noise-experiment} suggests that
trained transformers behave as if using a single effective regularization
parameter $\lambda\!\approx\!\sigma_{\mathrm{train}}^2$, learning
prompt-adaptive parameters remains open. Such an extension could connect our
optimization-error analysis with sample-complexity guarantees in the style of
\citet{bai2023transformers}. Another direction is to identify the induced solver
for other shift-invariant or dot-product kernels. More broadly, since our
analysis relies on the classical RKHS representer theorem, it would be
interesting to ask whether analogous solver interpretations extend to
Banach-space analogues
\citep{parhi2021banach,bartolucci2023understanding,wang2024sparse,wang2025hypothesis}.

\section*{Acknowledgment}
M.~Yan acknowledges support from an AMS-Simons travel grant.  S.~Tang is supported by NSF DMS CAREER Grant No.~2340631.

\bibliographystyle{plainnat}
\bibliography{references}

\begin{thebibliography}{44}
\providecommand{\natexlab}[1]{#1}
\providecommand{\url}[1]{\texttt{#1}}
\expandafter\ifx\csname urlstyle\endcsname\relax
  \providecommand{\doi}[1]{doi: #1}\else
  \providecommand{\doi}{doi: \begingroup \urlstyle{rm}\Url}\fi

\bibitem[Ahn et~al.(2023)Ahn, Cheng, Daneshmand, and Sra]{ahn2023transformers}
Kwangjun Ahn, Xiang Cheng, Hadi Daneshmand, and Suvrit Sra.
\newblock Transformers learn to implement preconditioned gradient descent for in-context learning.
\newblock In \emph{Advances in Neural Information Processing Systems}, volume~36, pages 45614--45650, 2023.

\bibitem[Aky{\"u}rek et~al.(2023)Aky{\"u}rek, Schuurmans, Andreas, Ma, and Zhou]{akyurek2023learning}
Ekin Aky{\"u}rek, Dale Schuurmans, Jacob Andreas, Tengyu Ma, and Denny Zhou.
\newblock What learning algorithm is in-context learning? investigations with linear models.
\newblock In \emph{The Eleventh International Conference on Learning Representations}, 2023.

\bibitem[Aky{\"u}rek et~al.(2024)Aky{\"u}rek, Wang, Kim, and Andreas]{akyurek2024incontext}
Ekin Aky{\"u}rek, Bailin Wang, Yoon Kim, and Jacob Andreas.
\newblock In-context language learning: Architectures and algorithms.
\newblock In \emph{International Conference on Machine Learning}. PMLR, 2024.

\bibitem[Bach(2017)]{bach2017breaking}
Francis Bach.
\newblock Breaking the curse of dimensionality with convex neural networks.
\newblock \emph{Journal of Machine Learning Research}, 18\penalty0 (19):\penalty0 1--53, 2017.

\bibitem[Bach(2024)]{bach2024book}
Francis Bach.
\newblock \emph{Learning Theory from First Principles}.
\newblock MIT Press, 2024.

\bibitem[Bai et~al.(2023)Bai, Chen, Wang, Xiong, and Mei]{bai2023transformers}
Yu~Bai, Fan Chen, Huan Wang, Caiming Xiong, and Song Mei.
\newblock Transformers as statisticians: Provable in-context learning with in-context algorithm selection.
\newblock In \emph{Advances in Neural Information Processing Systems}, volume~36, pages 57125--57211, 2023.

\bibitem[Bartolucci et~al.(2023)Bartolucci, De~Vito, Rosasco, and Vigogna]{bartolucci2023understanding}
Francesca Bartolucci, Ernesto De~Vito, Lorenzo Rosasco, and Stefano Vigogna.
\newblock Understanding neural networks with reproducing kernel {B}anach spaces.
\newblock \emph{Applied and Computational Harmonic Analysis}, 62:\penalty0 194--236, 2023.

\bibitem[Bhattamishra et~al.(2024)Bhattamishra, Patel, Blunsom, and Kanade]{bhattamishra2024understanding}
Satwik Bhattamishra, Arkil Patel, Phil Blunsom, and Varun Kanade.
\newblock Understanding in-context learning in transformers and {LLM}s by learning to learn discrete functions.
\newblock In \emph{International Conference on Learning Representations}, 2024.

\bibitem[Brown et~al.(2020)Brown, Mann, Ryder, Subbiah, Kaplan, Dhariwal, Neelakantan, Shyam, Sastry, Askell, et~al.]{brown2020language}
Tom Brown, Benjamin Mann, Nick Ryder, Melanie Subbiah, Jared~D Kaplan, Prafulla Dhariwal, Arvind Neelakantan, Pranav Shyam, Girish Sastry, Amanda Askell, et~al.
\newblock Language models are few-shot learners.
\newblock In \emph{Advances in Neural Information Processing Systems}, volume~33, pages 1877--1901, 2020.

\bibitem[Caponnetto and De~Vito(2007)]{caponnetto2007optimal}
Andrea Caponnetto and Ernesto De~Vito.
\newblock Optimal rates for the regularized least-squares algorithm.
\newblock \emph{Foundations of Computational Mathematics}, 7\penalty0 (3):\penalty0 331--368, 2007.

\bibitem[Chen et~al.(2024)Chen, Sheen, Wang, and Yang]{chen2024training}
Siyu Chen, Heejune Sheen, Tianhao Wang, and Zhuoran Yang.
\newblock Training dynamics of multi-head softmax attention for in-context learning: Emergence, convergence, and optimality.
\newblock In \emph{The Thirty Seventh Annual Conference on Learning Theory}, volume 247, pages 4573--4573. PMLR, 2024.

\bibitem[Cheng et~al.(2024)Cheng, Chen, and Sra]{cheng2023transformers}
Xiang Cheng, Yuxin Chen, and Suvrit Sra.
\newblock Transformers implement functional gradient descent to learn non-linear functions in context.
\newblock In \emph{International Conference on Machine Learning}, volume 235, pages 8002--8037. PMLR, 2024.

\bibitem[Cho and Saul(2009)]{cho2009kernel}
Youngmin Cho and Lawrence~K. Saul.
\newblock Kernel methods for deep learning.
\newblock In \emph{Advances in Neural Information Processing Systems 22}, 2009.

\bibitem[Collins et~al.(2024)Collins, Parulekar, Mokhtari, Sanghavi, and Shakkottai]{collins2024context}
Liam Collins, Advait Parulekar, Aryan Mokhtari, Sujay Sanghavi, and Sanjay Shakkottai.
\newblock In-context learning with transformers: Softmax attention adapts to function lipschitzness.
\newblock In \emph{Advances in Neural Information Processing Systems}, volume~37, pages 92638--92696, 2024.

\bibitem[De~Boor(1978)]{de1978practical}
Carl De~Boor.
\newblock \emph{A practical guide to splines}, volume~27.
\newblock springer New York, 1978.

\bibitem[Demir and Dogan(2025)]{demir2025asymptotic}
Samet Demir and Zafer Dogan.
\newblock Asymptotic study of in-context learning with random transformers through equivalent models, 2025.

\bibitem[DeVore(1998)]{devore1998nonlinear}
Ronald DeVore.
\newblock Nonlinear approximation.
\newblock \emph{Acta numerica}, 7:\penalty0 51--150, 1998.

\bibitem[DeVore et~al.(2021)DeVore, Hanin, and Petrova]{devore2021neural}
Ronald DeVore, Boris Hanin, and Guergana Petrova.
\newblock Neural network approximation.
\newblock \emph{Acta Numerica}, 30:\penalty0 327--444, 2021.

\bibitem[Dragutinovi{\'c} et~al.(2025)Dragutinovi{\'c}, Saxe, and Singh]{dragutinovic2025softmax}
Sara Dragutinovi{\'c}, Andrew~M. Saxe, and Aaditya~K. Singh.
\newblock Softmax $\geq$ linear: Transformers may learn to classify in-context by kernel gradient descent.
\newblock \emph{arXiv preprint arXiv:2510.10425}, 2025.

\bibitem[Fu et~al.(2024)Fu, Chen, Jia, and Sharan]{fu2024transformers}
Deqing Fu, Tian-Qi Chen, Robin Jia, and Vatsal Sharan.
\newblock Transformers learn to achieve second-order convergence rates for in-context linear regression.
\newblock In \emph{Advances in Neural Information Processing Systems}, volume~37, pages 98675--98716, 2024.

\bibitem[Garg et~al.(2022)Garg, Tsipras, Liang, and Valiant]{garg2022can}
Shivam Garg, Dimitris Tsipras, Percy~S. Liang, and Gregory Valiant.
\newblock What can transformers learn in-context? a case study of simple function classes.
\newblock In \emph{Advances in Neural Information Processing Systems}, volume~35, pages 30583--30598, 2022.

\bibitem[Han et~al.(2025)Han, Wang, Zhao, and Ji]{han2025understanding}
Chi Han, Ziqi Wang, Han Zhao, and Heng Ji.
\newblock Understanding emergent in-context learning from a kernel regression perspective.
\newblock \emph{Transactions on Machine Learning Research}, 2025.

\bibitem[Huang et~al.(2024)Huang, Cheng, and Liang]{huang2023context}
Yu~Huang, Yuan Cheng, and Yingbin Liang.
\newblock In-context convergence of transformers.
\newblock In \emph{Proceedings of the 41st International Conference on Machine Learning}, volume 235, pages 19660--19722. PMLR, 2024.

\bibitem[Li et~al.(2024)Li, Cao, Gao, He, Liu, Klusowski, Fan, and Wang]{li2024one}
Zihao Li, Yuan Cao, Cheng Gao, Yihang He, Han Liu, Jason~M. Klusowski, Jianqing Fan, and Mengdi Wang.
\newblock One-layer transformer provably learns one-nearest neighbor in context.
\newblock In \emph{Advances in Neural Information Processing Systems}, volume~37, pages 82166--82204, 2024.

\bibitem[Lu et~al.(2025)Lu, Letey, Zavatone-Veth, Maiti, and Pehlevan]{lu2024asymptotic}
Yue~M. Lu, Mary~I. Letey, Jacob~A. Zavatone-Veth, Anindita Maiti, and Cengiz Pehlevan.
\newblock Asymptotic theory of in-context learning by linear attention.
\newblock \emph{Proceedings of the National Academy of Sciences}, 122\penalty0 (28):\penalty0 e2502599122, 2025.

\bibitem[Mahankali et~al.(2024)Mahankali, Hashimoto, and Ma]{mahankali2024optimal}
Arvind~V. Mahankali, Tatsunori~B. Hashimoto, and Tengyu Ma.
\newblock One step of gradient descent is provably the optimal in-context learner with one layer of linear self-attention.
\newblock In \emph{International Conference on Learning Representations}, 2024.

\bibitem[Parhi and Nowak(2021)]{parhi2021banach}
Rahul Parhi and Robert~D Nowak.
\newblock Banach space representer theorems for neural networks and ridge splines.
\newblock \emph{Journal of Machine Learning Research}, 22\penalty0 (43):\penalty0 1--40, 2021.

\bibitem[Radford et~al.(2019)Radford, Wu, Child, Luan, Amodei, and Sutskever]{radford2019language}
Alec Radford, Jeffrey Wu, Rewon Child, David Luan, Dario Amodei, and Ilya Sutskever.
\newblock Language models are unsupervised multitask learners.
\newblock \emph{OpenAI Blog}, 2019.

\bibitem[Rasmussen and Williams(2006)]{rasmussen2006gaussian}
Carl~Edward Rasmussen and Christopher K.~I. Williams.
\newblock \emph{Gaussian Processes for Machine Learning}.
\newblock MIT Press, 2006.

\bibitem[Ryaben’kii and Tsynkov(2006)]{richardson_book}
Victor Ryaben’kii and Semyon Tsynkov.
\newblock \emph{A Theoretical Introduction to Numerical Analysis}.
\newblock Chapman \& Hall, 2006.

\bibitem[Sander and Peyr{\'e}(2024)]{sander2024towards}
Michael~E Sander and Gabriel Peyr{\'e}.
\newblock Towards understanding the universality of transformers for next-token prediction.
\newblock \emph{arXiv preprint arXiv:2410.03011}, 2024.

\bibitem[Sch{\"o}lkopf et~al.(2001)Sch{\"o}lkopf, Herbrich, and Smola]{scholkopf2001generalized}
Bernhard Sch{\"o}lkopf, Ralf Herbrich, and Alex~J Smola.
\newblock A generalized representer theorem.
\newblock In \emph{International conference on computational learning theory}, pages 416--426. Springer, 2001.

\bibitem[Shen et~al.(2025)Shen, Hsu, Lai, and Liao]{shen2025understanding}
Zhaiming Shen, Alexander Hsu, Rongjie Lai, and Wenjing Liao.
\newblock Understanding in-context learning on structured manifolds: Bridging attention to kernel methods.
\newblock \emph{arXiv preprint arXiv:2506.10959}, 2025.

\bibitem[Shewchuk(1994)]{cg_rate}
Jonathan~R Shewchuk.
\newblock An introduction to the conjugate gradient method without the agonizing pain.
\newblock 1994.

\bibitem[Sun et~al.(2025)Sun, Jadbabaie, and Azizan]{sun2025feedforward}
Haoyuan Sun, Ali Jadbabaie, and Navid Azizan.
\newblock On the role of transformer feed-forward layers in nonlinear in-context learning, 2025.

\bibitem[Touvron et~al.(2023)Touvron, Lavril, Izacard, Martinet, Lachaux, Lacroix, Rozi{\`e}re, Goyal, Hambro, Azhar, et~al.]{touvron2023llama}
Hugo Touvron, Thibaut Lavril, Gautier Izacard, Xavier Martinet, Marie-Anne Lachaux, Timoth{\'e}e Lacroix, Baptiste Rozi{\`e}re, Naman Goyal, Eric Hambro, Faisal Azhar, et~al.
\newblock Llama: Open and efficient foundation language models.
\newblock \emph{arXiv preprint arXiv:2302.13971}, 2023.

\bibitem[Vaswani et~al.(2017)Vaswani, Shazeer, Parmar, Uszkoreit, Jones, Gomez, Kaiser, and Polosukhin]{vaswani2017attention}
Ashish Vaswani, Noam Shazeer, Niki Parmar, Jakob Uszkoreit, Llion Jones, Aidan~N Gomez, {\L}ukasz Kaiser, and Illia Polosukhin.
\newblock Attention is all you need.
\newblock \emph{Advances in neural information processing systems}, 30, 2017.

\bibitem[Vladymyrov et~al.(2024)Vladymyrov, von Oswald, Sandler, and Ge]{vladymyrov2024linear}
Max Vladymyrov, Johannes von Oswald, Mark Sandler, and Rong Ge.
\newblock Linear transformers are versatile in-context learners.
\newblock In \emph{Advances in Neural Information Processing Systems}, volume~37, pages 48784--48809, 2024.

\bibitem[von Oswald et~al.(2023)von Oswald, Niklasson, Randazzo, Sacramento, Mordvintsev, Zhmoginov, and Vladymyrov]{vonoswald2023transformers}
Johannes von Oswald, Eyvind Niklasson, Ettore Randazzo, Jo{\~a}o Sacramento, Alexander Mordvintsev, Andrey Zhmoginov, and Max Vladymyrov.
\newblock Transformers learn in-context by gradient descent.
\newblock In \emph{International Conference on Machine Learning}, pages 35151--35174. PMLR, 2023.

\bibitem[Wang et~al.(2024)Wang, Xu, and Yan]{wang2024sparse}
Rui Wang, Yuesheng Xu, and Mingsong Yan.
\newblock Sparse representer theorems for learning in reproducing kernel {B}anach spaces.
\newblock \emph{Journal of Machine Learning Research}, 25\penalty0 (93):\penalty0 1--45, 2024.

\bibitem[Wang et~al.(2025)Wang, Xu, and Yan]{wang2025hypothesis}
Rui Wang, Yuesheng Xu, and Mingsong Yan.
\newblock Hypothesis spaces for deep learning.
\newblock \emph{Neural Networks}, page 107995, 2025.

\bibitem[Zhang et~al.(2024{\natexlab{a}})Zhang, Frei, and Bartlett]{zhang2024trained}
Ruiqi Zhang, Spencer Frei, and Peter~L. Bartlett.
\newblock Trained transformers learn linear models in-context.
\newblock \emph{Journal of Machine Learning Research}, 25\penalty0 (49):\penalty0 1--55, 2024{\natexlab{a}}.

\bibitem[Zhang et~al.(2024{\natexlab{b}})Zhang, Wu, and Bartlett]{zhang2024linearblock}
Ruiqi Zhang, Jingfeng Wu, and Peter~L. Bartlett.
\newblock In-context learning of a linear transformer block: Benefits of the {MLP} component and one-step {GD} initialization.
\newblock In \emph{Advances in Neural Information Processing Systems}, volume~37, pages 18310--18361, 2024{\natexlab{b}}.

\bibitem[Zhang et~al.(2025)Zhang, Behrens, Krzakala, and Zdeborov{\'a}]{zhang2025training}
Yedi Zhang, Freya Behrens, Florent Krzakala, and Lenka Zdeborov{\'a}.
\newblock Training dynamics of in-context learning in linear attention.
\newblock \emph{arXiv preprint arXiv:2501.16265}, 2025.

\end{thebibliography}

\clearpage
\appendix

\section{Error Analysis of Inexact Preconditioned Richardson Iterations}\label{app: error analysis of inexact preconditioned Richardson iterations}
In this section, we present an error analysis of the inexact preconditioned Richardson iteration, an algorithm we will subsequently implement via a specifically constructed transformer. We proceed with a helpful lemma. 
\begin{lemma}\label{lemma: estimate of Kw and w*}
Let $\lambda_0>0$ and $\lambda=\lambda_0N$. Suppose that $\vw^*=[w_i^*:i\in[N]]$ is the solution of linear system \eqref{eq: unnormalized kernel system}. Then it holds that 
    \begin{equation*}
        \normMax{\mK\vw^*}\leq \frac{B_y}{\sqrt{\lambda_0}},\quad \normMax{\vw^*}\leq \frac{\parens{\frac{1}{\sqrt{\lambda_0}}+1}B_y}{\lambda_0 }  \frac{1}{N}.
    \end{equation*}
\end{lemma}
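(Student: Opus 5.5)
The plan is to bound $\mK\vw^*$ through the RKHS minimization problem \eqref{eq: optimization problem in RKHS}. The bound on $\vw^*$ then follows algebraically from the linear system \eqref{eq: unnormalized kernel system}.

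\textbf{Step 1 (the RKHS norm of $f^*$).} Set $f^*(\cdot)=\sum_{i=1}^N w_i^*\,\mathcal{K}(\vx_i,\cdot)$. As recalled in Section~\ref{section: In-context Learning for Kernel Ridge Regression}, the representer theorem together with the dual system \eqref{eq: unnormalized kernel system} shows that $f^*$ minimizes \eqref{eq: optimization problem in RKHS}. If one wants this self-contained, restrict the objective to $f=\sum_i w_i\mathcal{K}(\vx_i,\cdot)$. This gives the convex quadratic $\frac1N\|\mK\vw-\vy\|^2+\lambda_0\vw^\top\mK\vw$. Its stationarity condition is $\mK\bigl((\mK+\lambda\mI)\vw-\vy\bigr)=\vzero$, which $\vw^*$ satisfies. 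Now compare the objective value at $f^*$ with its value at the competitor $f=0$:
\[
\frac1N\sum_{i=1}^N (f^*(\vx_i)-y_i)^2+\lambda_0\|f^*\|_{\mathcal{H}}^2 \le \frac1N\sum_{i=1}^N y_i^2\le B_y^2 .
\]
Dropping the nonnegative data-fit term gives $\|f^*\|_{\mathcal{H}}\le B_y/\sqrt{\lambda_0}$.

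\textbf{Step 2 (the bound on $\mK\vw^*$).} Note that $(\mK\vw^*)_i=f^*(\vx_i)$. By the reproducing property and Cauchy--Schwarz,
\[
|f^*(\vx_i)|=|\iprod{f^*,\mathcal{K}(\vx_i,\cdot)}_{\mathcal{H}}|\le \|f^*\|_{\mathcal{H}}\sqrt{\mathcal{K}(\vx_i,\vx_i)}.
\]
For the Gaussian kernel, $\mathcal{K}(\vx,\vx)=1$. Hence $\normMax{\mK\vw^*}\le B_y/\sqrt{\lambda_0}$, which is the first claim.

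\textbf{Step 3 (the bound on $\vw^*$).} Rearrange the linear system as $\lambda\vw^*=\vy-\mK\vw^*$. Then
\[
\normMax{\vw^*}\le \frac{B_y+B_y/\sqrt{\lambda_0}}{\lambda}.
\]
Substituting $\lambda=\lambda_0N$ gives exactly the stated bound with the factor $1/N$.

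The only point needing care is Step 1: one must make sure the dual solution really is the minimizer of \eqref{eq: optimization problem in RKHS}, or at least that its objective value is no larger than the value at $f=0$. The stationarity check in Step 1 settles this, since the restricted objective is convex. Everything else is a direct estimate.
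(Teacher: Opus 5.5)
Your proposal is correct and follows the paper's proof step for step. You compare the RKHS objective at $f^*$ with $f=0$ to get $\|f^*\|_{\mathcal{H}}\le B_y/\sqrt{\lambda_0}$, use the reproducing property with $\mathcal{K}(\vx,\vx)=1$ to bound $\mK\vw^*$, and rearrange $(\mK+\lambda\mI)\vw^*=\vy$ to bound $\vw^*$. Your stationarity check on the restricted convex quadratic is a valid, self-contained replacement for the paper's appeal to the representer theorem.
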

\begin{proof}
It is known by representer theorem that $f^*(x)=\sum_{i=1}^N w_i^*\mathcal{K}(\vx_i,\cdot)$ is the optimal solution of \eqref{eq: optimization problem in RKHS}. We denote $\mathcal{L}(f):=\frac{1}{N}\sum_{i\in[N]}\parens{f(\vx_i)-y_i}^2 + \lambda_0\norm{f}_{\mathcal{H}}^2$. The optimality of $f^*$ implies 
\begin{equation*}
\lambda_0\norm{f^*}_{\mathcal{H}}^2\leq \mathcal{L}(f^*)\leq \mathcal{L}(0)=\frac{1}{N}\sum_{i=1}^N y_i^2 \leq B_y^2,
\end{equation*}
and hence $\norm{f^*}_{\mathcal{H}}\leq \frac{B_y}{\sqrt{\lambda_0}}$. By the reproducing property in RKHS, i.e., $f^*(\vx)=\iprod{f^*,\mathcal{K}(\vx,\cdot)}_{\mathcal{H}}$, we get that 
\begin{equation*}
    \abs{f^*(\vx)}\leq \norm{f^*}_{\mathcal{H}}\norm{\mathcal{K}(\vx,\cdot)}_{\mathcal{H}}=\norm{f^*}_{\mathcal{H}}\sqrt{\mathcal{K}(\vx,\vx)}\leq \frac{B_y}{\sqrt{\lambda_0}},\quad \text{for all }\vx\in\mathbb{R}^d.
\end{equation*}
Note that $\mK\vw^*=[f^*(\vx_i):i\in[N]]$. Consequently, we obtain from the previous estimate that 
\begin{equation*}
\normMax{\mK\vw^*}=\max_{i\in[N]}\abs{f^*(\vx_i)}\leq \frac{B_y}{\sqrt{\lambda_0}}. 
\end{equation*}
Moreover, from the linear system \eqref{eq: unnormalized kernel system}, we get 
\begin{equation*}
\normMax{\vw^*}=\frac{\normMax{\mK\vw^*-\vy}}{\lambda}\leq \frac{\normMax{\mK\vw^*}+\normMax{\vy}}{\lambda}\leq \frac{\parens{\frac{1}{\sqrt{\lambda_0}}+1}B_y}{\lambda_0 }  \frac{1}{N}.
\end{equation*}
This completes the proof.
\end{proof}
We recall that the bounded data assumption (i.e., $\normTwo{\vx_i}\leq B_x$ for all samples $\vx_i$) and boundedness of the Gaussian kernel imply that for all $i\in[N]$, 
\begin{equation}\label{eq: density assumption}
N\kappa_{\min}\leq\sum_{j\in[N]} \mathcal{K}(\vx_i,\vx_j)\leq N
\end{equation}
where $\kappa_{\min}:=\exp\parens{-\frac{2B_x^2}{v^2}}$.
\begin{theorem}[Error Analysis for Inexact Preconditioned Richardson Iterations]\label{theorem: Error Analysis for Inexact Richardson iteration}
Suppose that the bounded data assumption holds with constants $B_x$ and $B_y$. Let initial value $\vw^{(0)}=\vzero$. Let $\epsflip,\epssq,\tildeepssq\in(0,1)$. Let $\lambda_0,\eta>0$ and $\lambda=\lambda_0N$. Consider inexact preconditioned Richardson iteration 
\begin{equation}\label{eq: TF inexact Richardson iteration}
    \vw\myl[\ell+1] = \vw^{(\ell)} + \eta \parens{\mD^{-1}\vy - \mD^{-1}\parens{\KerMat+\lambda\mI}\vw^{(\ell)}} + \eta\parens{\parens{\vy-\lambda\vw^{(\ell)}}\odot\scalingerror + \frac{\bm{\tau}_+ - \bm{\tau}_- - \lambda\widetilde{\bm{\tau}}_+^{(\ell)} + \lambda\widetilde{\bm{\tau}}_-^{(\ell)}}{4}},
\end{equation}
with
\begin{equation}\label{assumption: error magnitude /N /N2}
\normMax{\scalingerror}\leq\frac{\epsflip}{N},\quad \normMax{\bm{\tau}_+}, \normMax{\bm{\tau}_-}\leq\frac{\epssq}{N},\quad \normMax{\widetilde{\bm{\tau}}_+^{(\ell)}},\normMax{\widetilde{\bm{\tau}}_-^{(\ell)}}\leq\frac{\tildeepssq}{N^2}. 
\end{equation}
Then it holds that 
    \begin{equation}\label{eq: bound of norm w ell - w *}
\sqrt{N}\normTwo{\vw\myl[\ell] - \vw^*}\leq  \normTwoToTwo{\mA}^{\ell}\frac{\parens{\frac{1}{\sqrt{\lambda_0}}+1}B_y}{\lambda_0\sqrt{\kappa_{\min}}} + \frac{\eta\parens{\frac{B_y}{\sqrt{\lambda_0}}\epsflip+\parens{1+\lambda_0}\parens{\epssq+\tildeepssq}}}{\parens{1-\normTwoToTwo{\mA}}\sqrt{\kappa_{\min}}},
    \end{equation}
    where 
\begin{equation}\label{eq: matrix A, tilde K lambda, tilde K}
\mA:=\mI-\eta\parens{\lambda\ \diag(\scalingerror)+\mD^{-1/2}\parens{\KerMat+\lambda\mI}\mD^{-1/2}}.
\end{equation}
\end{theorem}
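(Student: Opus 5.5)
The plan is to track the error in the $\mD^{1/2}$-weighted coordinates. In these coordinates the preconditioned operator becomes the symmetric matrix $\mD^{-1/2}(\KerMat+\lambda\mI)\mD^{-1/2}$, and the iteration becomes a perturbed linear recursion driven by $\mA$ in \eqref{eq: matrix A, tilde K lambda, tilde K}.

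\emph{Error recursion.} Set $\ve^{(\ell)}:=\vw^{(\ell)}-\vw^*$. Since $\vw^*$ solves \eqref{eq: unnormalized kernel system}, we have $\mD^{-1}\vy-\mD^{-1}(\KerMat+\lambda\mI)\vw^*=\vzero$. Also $\vy-\lambda\vw^{(\ell)}=\KerMat\vw^*-\lambda\ve^{(\ell)}$, and hence
\[
(\vy-\lambda\vw^{(\ell)})\odot\scalingerror=(\KerMat\vw^*)\odot\scalingerror-\lambda\,\diag(\scalingerror)\ve^{(\ell)}.
\]
Substituting into \eqref{eq: TF inexact Richardson iteration} gives
\[
\ve^{(\ell+1)}=\bigl(\mI-\eta\lambda\,\diag(\scalingerror)-\eta\mD^{-1}(\KerMat+\lambda\mI)\bigr)\ve^{(\ell)}+\eta\vp^{(\ell)},
\]
where
\[
\vp^{(\ell)}:=(\KerMat\vw^*)\odot\scalingerror+\tfrac14\bigl(\bm{\tau}_+-\bm{\tau}_--\lambda\widetilde{\bm{\tau}}_+^{(\ell)}+\lambda\widetilde{\bm{\tau}}_-^{(\ell)}\bigr).
\]
Now put $\vu^{(\ell)}:=\mD^{1/2}\ve^{(\ell)}$. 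Diagonal matrices commute, so $\diag(\scalingerror)$ commutes with $\mD^{1/2}$. Therefore $\vu^{(\ell+1)}=\mA\vu^{(\ell)}+\eta\mD^{1/2}\vp^{(\ell)}$, with $\mA$ exactly as in \eqref{eq: matrix A, tilde K lambda, tilde K}. Unrolling and summing the geometric series yields
\[
\normTwo{\vu^{(\ell)}}\le\normTwoToTwo{\mA}^{\ell}\normTwo{\vu^{(0)}}+\frac{\eta}{1-\normTwoToTwo{\mA}}\max_k\normTwo{\mD^{1/2}\vp^{(k)}}.
\]
This uses $\normTwoToTwo{\mA}<1$, which I will take as the standing regime of the statement; otherwise the right-hand side of \eqref{eq: bound of norm w ell - w *} is not meaningful. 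Establishing this contraction for a suitable $\eta$ is deferred to where $\eta$ is chosen, and I expect that to be the real obstacle in the larger argument, though not here.

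\emph{Converting norms.} By \eqref{eq: density assumption}, $\normTwoToTwo{\mD^{1/2}}\le\sqrt N$ and $\normTwoToTwo{\mD^{-1/2}}\le(N\kappa_{\min})^{-1/2}$. Hence $\sqrt N\normTwo{\ve^{(\ell)}}\le\normTwo{\vu^{(\ell)}}/\sqrt{\kappa_{\min}}$.

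\emph{Initial term.} With $\vw^{(0)}=\vzero$ we have $\vu^{(0)}=-\mD^{1/2}\vw^*$. Using $\normTwo{\vw^*}\le\sqrt N\normMax{\vw^*}$ and Lemma~\ref{lemma: estimate of Kw and w*},
\[
\normTwo{\vu^{(0)}}\le N\normMax{\vw^*}\le\frac{(1/\sqrt{\lambda_0}+1)B_y}{\lambda_0}.
\]
This gives the first term of \eqref{eq: bound of norm w ell - w *}.

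\emph{Forcing term.} We have $\normTwo{\mD^{1/2}\vp}\le\sqrt N\cdot\sqrt N\normMax{\vp}=N\normMax{\vp}$. Lemma~\ref{lemma: estimate of Kw and w*} gives $\normMax{\KerMat\vw^*}\le B_y/\sqrt{\lambda_0}$. Combining this with \eqref{assumption: error magnitude /N /N2} and $\lambda=\lambda_0N$,
\[
\normMax{\vp}\le\frac1N\Bigl(\frac{B_y}{\sqrt{\lambda_0}}\epsflip+\frac{\epssq}{2}+\frac{\lambda_0\tildeepssq}{2}\Bigr)\le\frac1N\Bigl(\frac{B_y}{\sqrt{\lambda_0}}\epsflip+(1+\lambda_0)(\epssq+\tildeepssq)\Bigr).
\]
Multiplying by $N$ and by $\eta/((1-\normTwoToTwo{\mA})\sqrt{\kappa_{\min}})$ yields the second term of \eqref{eq: bound of norm w ell - w *}.

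The only delicate point is the bookkeeping of powers of $N$. The $1/N$ and $1/N^2$ scalings in \eqref{assumption: error magnitude /N /N2} are needed so that the factors $N$ from $\normTwoToTwo{\mD^{1/2}}\sqrt N$ and from $\lambda=\lambda_0N$ cancel, leaving an $N$-independent bound.
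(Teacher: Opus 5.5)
Your proposal is correct and follows essentially the same route as the paper: subtract $\vw^*$, write the iteration matrix as $\mD^{-1/2}\mA\mD^{1/2}$ to obtain a recursion for $\mD^{1/2}(\vw^{(\ell)}-\vw^*)$, bound the forcing term by $\sqrt N\cdot\sqrt N\normMax{\cdot}$ using Lemma~\ref{lemma: estimate of Kw and w*} and \eqref{assumption: error magnitude /N /N2}, and convert back via $\mD\succeq N\kappa_{\min}\mI$. Your caveat that $\normTwoToTwo{\mA}<1$ is needed for the geometric-series step is well placed; the paper uses this tacitly here and only proves it later, in Lemma~\ref{lemma: estimate of norm A}, under the step-size condition \eqref{eq: eta condition}.
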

\begin{proof}
We subtract $\vw^*$ for both sides of \eqref{eq: TF inexact Richardson iteration}, and note that the exact solution $\vw^*$ satisfies $(\mK+\lambda\mI)\vw^*=\vy$, then
\begin{align}
    \vw\myl[\ell+1] - \vw^* &= \vw^{(\ell)} - \vw^* + \eta \mD^{-1}\parens{\KerMat+\lambda\mI}\parens{\vw^*-\vw^{(\ell)}} + \eta\parens{\parens{(\mK+\lambda\mI)\vw^*-\lambda\vw^{(\ell)}}\odot\scalingerror + \bm{\Delta}^{(\ell)}}\nonumber\\
    &=\parens{\mI-\eta\parens{\lambda\diag(\scalingerror) + \mD^{-1}\parens{\KerMat+\lambda\mI}}}\parens{\vw^{(\ell)} - \vw^*} + \eta\parens{(\mK\vw^*)\odot\scalingerror + \bm{\Delta}^{(\ell)}}\label{in the proof: TF inexact Richardson iteration 2}
\end{align}
where 
\begin{equation*}
\bm{\Delta}^{(\ell)}:=\frac{\bm{\tau}_+ - \bm{\tau}_- - \lambda\widetilde{\bm{\tau}}_+^{(\ell)} + \lambda\widetilde{\bm{\tau}}_-^{(\ell)}}{4}. 
\end{equation*}
Note that 
\begin{align*}
    \mI-\eta\parens{\lambda\diag(\scalingerror) + \mD^{-1}\parens{\KerMat+\lambda\mI}} &= \mD^{-1/2}\parens{\mI-\eta\parens{\lambda\diag(\scalingerror)+\mD^{-1/2}\parens{\KerMat+\lambda\mI}\mD^{-1/2}}}\mD^{1/2},\\
    &= \mD^{-1/2}\mA\mD^{1/2}.
\end{align*}
This, combined with \eqref{in the proof: TF inexact Richardson iteration 2}, implies 
\begin{equation*}
\mD^{1/2}\parens{\vw\myl[\ell+1] - \vw^*}=\mA\underbrace{\mD^{1/2}\parens{\vw^{(\ell)} - \vw^*}}_{\text{denoted by }\ve\myl} + \underbrace{\mD^{1/2}\eta\parens{(\mK\vw^*)\odot\scalingerror + \bm{\Delta}^{(\ell)}}}_{\text{denoted by }\veps\myl}. 
\end{equation*}
We rewrite the above equation as $\ve\myl[\ell+1]=\mA\ve\myl + \veps\myl$, which  with an argument of recursion gives 
\begin{equation}\label{in the proof: eps ell <= bound}
\ve\myl[\ell+1] = \mA^{\ell+1}\ve^{(0)} + \sum_{j=0}^{\ell} \mA^j \veps^{(\ell-j)}. 
\end{equation}
By definition of $\veps\myl$, we have  
\begin{align*}
    \normTwo{\veps\myl} \leq \sqrt{N}\normMax{\veps\myl}&\leq \sqrt{N}\eta\normMax{\diag(\mD^{1/2})}\parens{\normMax{\mK\vw^*}\normMax{\scalingerror}+\normMax{\bm{\Delta}^{(\ell)}}}.
\end{align*}
Recall that \eqref{eq: density assumption} implies $\normMax{\diag(\mD^{1/2})}\leq \sqrt{N}$; Lemma \ref{lemma: estimate of Kw and w*} yields $\normMax{\mK\vw^*}\leq B_y/\sqrt{\lambda_0}$; assumption \eqref{assumption: error magnitude /N /N2} yields $\normMax{\scalingerror}\leq \epsflip/N$ and 
\begin{equation*}
    \normMax{\bm{\Delta}^{(\ell)}}\leq \frac{\normMax{\bm{\tau}_+}+\normMax{\bm{\tau}_-}+\lambda\parens{\normMax{\widetilde{\bm{\tau}}_+}+\normMax{\widetilde{\bm{\tau}}_-}}}{4}\leq \frac{\epssq+\lambda_0\tildeepssq}{2N}\leq \frac{\parens{1+\lambda_0}\parens{\epssq+\tildeepssq}}{N}.
\end{equation*}
Consequently, we have 
\begin{equation*}
\normTwo{\veps\myl}\leq \eta\parens{\frac{B_y}{\sqrt{\lambda_0}}\epsflip+\parens{1+\lambda_0}\parens{\epssq+\tildeepssq}} \equiv B_\epsilon.
\end{equation*}
Therefore, for all $\ell$, we obtain from \eqref{in the proof: eps ell <= bound} that 
    \begin{align*}
        \normTwo{\ve\myl}\leq \normTwoToTwo{\mA}^{\ell}\normTwo{\ve\myl[0]} + B_\epsilon\sum_{j=0}^{\ell-1}\normTwoToTwo{\mA}^j   \leq\normTwoToTwo{\mA}^{\ell}\normTwo{\ve\myl[0]} + \frac{B_\epsilon}{1-\normTwoToTwo{\mA}}.
    \end{align*}
    Note that by \eqref{eq: density assumption}, we have 
    \begin{align*}
\normTwo{\ve\myl}&=\normTwo{\mD^{1/2}\parens{\vw\myl[\ell] - \vw^*}}\geq \sqrt{\kappa_{\min}N}\normTwo{\vw\myl[\ell] - \vw^*},
    \end{align*}
    and by Lemma \ref{lemma: estimate of Kw and w*}, we have 
    \begin{equation*}
        \normTwo{\ve\myl[0]}= \normTwo{\mD^{1/2}\parens{\vw\myl[0] - \vw^*}}=\normTwo{\mD^{1/2} \vw^*}\leq N\normMax{\vw^*}\leq \frac{\parens{\frac{1}{\sqrt{\lambda_0}}+1}B_y}{\lambda_0 }  
    \end{equation*}
    which combining with the previous estimate yields \eqref{eq: bound of norm w ell - w *}.
\end{proof}

\begin{lemma}[Estimate of $\normTwoToTwo{\mA}$]\label{lemma: estimate of norm A}
Under conditions of Theorem \ref{theorem: Error Analysis for Inexact Richardson iteration}, let $\mA$ be defined in \eqref{eq: matrix A, tilde K lambda, tilde K}. If 
\begin{equation}\label{eq: eta condition}
0<\eta<\frac{1}{\lambda_0\epsflip+1+\lambda_0/\kappa_{\min}}, 
\end{equation}
then it holds that
$$\normTwoToTwo{\mA}<1-\eta\lambda_0\parens{1-\epsflip}\in(0,1).$$
\end{lemma}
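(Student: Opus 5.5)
The plan is to exploit the fact that $\mA$ is symmetric, so $\normTwoToTwo{\mA}$ equals its spectral radius. It then suffices to trap the spectrum of
\[
\mM:=\lambda\,\diag(\scalingerror)+\mD^{-1/2}\parens{\KerMat+\lambda\mI}\mD^{-1/2}
\]
in an interval $[m_-,m_+]$ with $m_->0$. Then the eigenvalues of $\mA=\mI-\eta\mM$ lie in $[1-\eta m_+,\,1-\eta m_-]$, and the condition \eqref{eq: eta condition} on $\eta$ should make the left endpoint positive.

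\textbf{Splitting $\mM$.} I would write $\mM=\mathbf{S}_1+\mathbf{S}_2+\mathbf{E}$, where $\mathbf{S}_1:=\mD^{-1/2}\KerMat\mD^{-1/2}$, $\mathbf{S}_2:=\lambda\mD^{-1}$ and $\mathbf{E}:=\lambda\,\diag(\scalingerror)$. All three are symmetric. I then bound each piece via Weyl's inequalities.
\begin{itemize}
\item \emph{The term $\mathbf{E}$.} By \eqref{assumption: error magnitude /N /N2}, $\normTwoToTwo{\mathbf{E}}\le\lambda_0N\cdot\epsflip/N=\lambda_0\epsflip$.
\item \emph{The term $\mathbf{S}_2$.} By \eqref{eq: density assumption}, its diagonal entries lie in $[\lambda_0N/N,\ \lambda_0N/(N\kappa_{\min})]=[\lambda_0,\lambda_0/\kappa_{\min}]$.
\item \emph{The term $\mathbf{S}_1$.} It is positive semidefinite because the Gaussian kernel matrix $\KerMat$ is. It is also similar to $\mD^{-1}\KerMat$, which is entrywise nonnegative and row-stochastic, so its spectral radius is $1$. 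Hence the eigenvalues of $\mathbf{S}_1$ lie in $[0,1]$.
\end{itemize}
Combining the three, the eigenvalues of $\mM$ lie in
\[
\bigl[\lambda_0(1-\epsflip),\ 1+\lambda_0/\kappa_{\min}+\lambda_0\epsflip\bigr].
\]

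\textbf{Concluding.} The condition \eqref{eq: eta condition} gives $\eta m_+<1$, so every eigenvalue of $\mA$ lies in $\bigl(0,\,1-\eta\lambda_0(1-\epsflip)\bigr]$. Membership of $1-\eta\lambda_0(1-\epsflip)$ in $(0,1)$ follows from $\epsflip<1$ and $\eta\lambda_0<\kappa_{\min}\le1$, the latter from \eqref{eq: eta condition} since $\lambda_0/\kappa_{\min}$ appears in the denominator.

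\textbf{Main obstacle: strictness.} The only delicate point is the strict inequality. I would get it from the lower end: $\mathbf{S}_1$ is positive definite when the $\vx_i$ are distinct, since the Gaussian kernel is strictly positive definite. Then $\lambda_{\min}(\mM)>\lambda_0(1-\epsflip)$ strictly. Failing distinctness, the non-strict bound $\le$ is all that is used downstream in Theorem~\ref{theorem: Error Analysis for Inexact Richardson iteration}.
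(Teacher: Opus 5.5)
Your proposal is correct and follows the paper's proof essentially step for step. It uses the same three-way Weyl splitting of $\lambda\,\diag(\scalingerror)+\mD^{-1/2}(\KerMat+\lambda\mI)\mD^{-1/2}$, the same upper eigenvalue bound $\lambda_0\epsflip+1+\lambda_0/\kappa_{\min}$ to make $\mA$ positive definite under \eqref{eq: eta condition}, and the same source of strictness, positive definiteness of $\mD^{-1/2}\KerMat\mD^{-1/2}$. Your row-stochastic justification of $\eig_{\max}(\mD^{-1/2}\KerMat\mD^{-1/2})\le 1$ and your remark that strictness requires distinct $\vx_i$ supply details the paper leaves implicit.
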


\begin{proof}
We denote $\mK_\lambda:=\lambda\diag(\scalingerror) + \mD^{-1/2}\parens{\KerMat+\lambda\mI}\mD^{-1/2}$. Then the matrix $\mA$ can be rewritten as $\mA=\mI-\eta \mK_\lambda$. Since $\mA$ is symmetric, $\normTwoToTwo{\mA}$ equals its maximum absolute eigenvalue. Note that the eigenvalues of $\mA$ satisfy 
$$\eig(\mA)=1-\eta\ \eig(\mK_\lambda).$$
By Weyl's inequality, the maximum eigenvalue of $\mK_\lambda$ is bounded by
\begin{align}
    \eig_{\max}(\mK_\lambda)&\leq \eig_{\max}(\lambda\diag(\scalingerror)) + \eig_{\max}(\mD^{-1/2}\KerMat\mD^{-1/2}) + \eig_{\max}(\lambda\mD^{-1})\nonumber\\
    &\leq \lambda_0\epsflip + 1 + \frac{\lambda_0}{\kappa_{\min}},\label{in the proof: upper bound of eigmax K lambda}
\end{align}
where we used assumption \eqref{assumption: error magnitude /N /N2}, relation \eqref{eq: density assumption}, $\lambda=\lambda_0N$ and $\eig_{\max}(\mD^{-1/2}\KerMat\mD^{-1/2})\leq1$. Assumption \eqref{eq: eta condition} of $\eta$ guarantees $\eta\ \eig_{\max}(\mK_\lambda) < 1$. Consequently, all eigenvalues of $\mA$ are strictly positive, i.e.,
$$\eig_{\min}(\mA) = 1-\eta\ \eig_{\max}(\mK_\lambda) > 0.$$
It follows that the operator norm of $\mA$ is exactly its maximum eigenvalue. Hence, 
$$\normTwoToTwo{\mA} = \eig_{\max}(\mA) = 1-\eta\ \eig_{\min}(\mK_\lambda).$$
Applying Weyl's inequality to the minimum eigenvalue gives
\begin{equation}\label{in the proof: lower bound of eigmin K lambda}
    \eig_{\min}(\mK_\lambda)\geq \eig_{\min}(\lambda\diag(\scalingerror)) + \eig_{\min}(\mD^{-1/2}\KerMat\mD^{-1/2}) + \eig_{\min}(\lambda\mD^{-1}) > -\lambda_0\epsflip + \lambda_0,
\end{equation}
where we used assumption \eqref{assumption: error magnitude /N /N2}, relation \eqref{eq: density assumption}, $\lambda=\lambda_0N$ and the fact that matrix $\mD^{-1/2}\KerMat\mD^{-1/2}$ is positive definite. Therefore, the operator norm is bounded as
$$\normTwoToTwo{\mA} < 1 - \eta\lambda_0\parens{1-\epsflip}.$$
We finally verify the upper bound $1 - \eta\lambda_0\parens{1-\epsflip}$ is in $(0,1)$. It suffices to show that 
\begin{equation}\label{in the proof: in 0,1}
\eta\lambda_0\parens{1-\epsflip}\in(0,1).    
\end{equation}
We obtain from \eqref{in the proof: upper bound of eigmax K lambda} and \eqref{in the proof: lower bound of eigmin K lambda} that 
$\lambda_0(1-\epsflip)< \lambda_0\epsflip + 1 + \frac{\lambda_0}{\kappa_{\min}}$, and hence \eqref{in the proof: in 0,1} is clearly true due to assumption \eqref{eq: eta condition} for $\eta$.
\end{proof}

\begin{theorem}[Entry-wise boundedness of $\vw^{(\ell)}$]\label{theorem: entry-wise boundedness of w ell}
Suppose that the bounded data assumption holds with constants $B_x$ and $B_y$. Let initial value $\vw^{(0)}=\vzero$. Let $\epssq,\tildeepssq\in(0,1)$ and $\epsflip\in(0,c)$ for a constant $c\in(0,1)$. Let $\lambda_0>0$ and $\eta$ satisfy \eqref{eq: eta condition}. Let $\vw^{(\ell)}$ be generated from inexact preconditioned Richardson iteration \eqref{eq: TF inexact Richardson iteration} with condition \eqref{assumption: error magnitude /N /N2}. Then for all $\ell\in\mathbb{N}$, 
\begin{equation}\label{eq: w ell is elementwise bounded}
\normMax{\vw^{(\ell)}}\leq B_w:= \frac{1}{\sqrt{N}} \parens{\frac{\parens{\frac{1}{\sqrt{\lambda_0}}+1}B_y}{\lambda_0\sqrt{\kappa_{\min}}} + \frac{\frac{B_y}{\sqrt{\lambda_0}}+2\parens{1+\lambda_0}}{\lambda_0\parens{1-c}\sqrt{\kappa_{\min}}}} + \frac{1}{N}\parens{\frac{\parens{\frac{1}{\sqrt{\lambda_0}}+1}B_y}{\lambda_0}}.
    \end{equation}
\end{theorem}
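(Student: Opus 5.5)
The bound splits as an error part plus the size of the exact solution:
\[
\normMax{\vw^{(\ell)}}\le \normMax{\vw^{(\ell)}-\vw^*}+\normMax{\vw^*}\le \normTwo{\vw^{(\ell)}-\vw^*}+\normMax{\vw^*}.
\]
The second term is controlled directly by Lemma~\ref{lemma: estimate of Kw and w*}: $\normMax{\vw^*}\le \frac{(\frac{1}{\sqrt{\lambda_0}}+1)B_y}{\lambda_0}\frac1N$, which is exactly the last summand of $B_w$. For the first term we divide \eqref{eq: bound of norm w ell - w *} in Theorem~\ref{theorem: Error Analysis for Inexact Richardson iteration} by $\sqrt N$. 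The only remaining work is to replace the $\ell$-, $\eta$- and $\epsilon$-dependent quantities on its right-hand side by uniform constants.

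Concretely, we first invoke Lemma~\ref{lemma: estimate of norm A}, which applies since $\eta$ satisfies \eqref{eq: eta condition} and $\epsflip<c<1$. It gives $\normTwoToTwo{\mA}<1-\eta\lambda_0(1-\epsflip)\in(0,1)$. Hence $\normTwoToTwo{\mA}^\ell\le 1$, so the first summand of \eqref{eq: bound of norm w ell - w *} is at most $\frac{(\frac{1}{\sqrt{\lambda_0}}+1)B_y}{\lambda_0\sqrt{\kappa_{\min}}}$, uniformly in $\ell$. For the second summand we use
\[
1-\normTwoToTwo{\mA}>\eta\lambda_0(1-\epsflip)\ge \eta\lambda_0(1-c),
\]
so the factor $\eta$ cancels and leaves
\[
\frac{\frac{B_y}{\sqrt{\lambda_0}}\epsflip+(1+\lambda_0)(\epssq+\tildeepssq)}{\lambda_0(1-c)\sqrt{\kappa_{\min}}}.
\]
Bounding $\epsflip<1$ and $\epssq+\tildeepssq<2$ turns the numerator into $\frac{B_y}{\sqrt{\lambda_0}}+2(1+\lambda_0)$, which matches the middle summand of $B_w$.

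Collecting these bounds, the first term is at most $\frac{1}{\sqrt N}$ times the bracketed expression in \eqref{eq: w ell is elementwise bounded}, and adding the $\normMax{\vw^*}$ bound gives $B_w$. The case $\ell=0$ is trivial since $\vw^{(0)}=\vzero$, and it is also covered by the same estimate.

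The only delicate point is that the contraction bound must not degrade as $\epsflip$ grows, since otherwise the denominator $1-\normTwoToTwo{\mA}$ could approach zero. This is exactly why $\epsflip$ is restricted to $(0,c)$ with $c<1$ fixed: it gives the uniform lower bound $\eta\lambda_0(1-c)$ on $1-\normTwoToTwo{\mA}$. One should also check that the $\eta$ in the numerator of \eqref{eq: bound of norm w ell - w *} cancels against this lower bound rather than leaving a $1/\eta$ factor. It does, so the resulting $B_w$ is independent of $\eta$ and $\ell$.
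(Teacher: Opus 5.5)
Your proposal is correct and follows essentially the same route as the paper. Both arguments use the triangle inequality around $\vw^*$, Lemma~\ref{lemma: estimate of Kw and w*} for $\normMax{\vw^*}$, and Theorem~\ref{theorem: Error Analysis for Inexact Richardson iteration} divided by $\sqrt N$, combined with Lemma~\ref{lemma: estimate of norm A} (so that $\normTwoToTwo{\mA}^\ell\le 1$ and $1-\normTwoToTwo{\mA}>\eta\lambda_0(1-c)$ cancels the $\eta$) and $\epsflip,\epssq,\tildeepssq<1$; the only difference is that you state explicitly the step $\normTwoToTwo{\mA}^\ell\le 1$, which the paper uses tacitly.
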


\begin{proof}
    We obtain from Theorem \ref{theorem: Error Analysis for Inexact Richardson iteration} that 
    \begin{equation*}
\normMax{\vw^{(\ell)}-\vw^*}\leq \normTwo{\vw^{(\ell)}-\vw^*}\leq \frac{1}{\sqrt{N}} \parens{\normTwoToTwo{\mA}^{\ell}\frac{\parens{\frac{1}{\sqrt{\lambda_0}}+1}B_y}{\lambda_0\sqrt{\kappa_{\min}}} + \frac{\eta\parens{\frac{B_y}{\sqrt{\lambda_0}}+2\parens{1+\lambda_0}}}{\parens{1-\normTwoToTwo{\mA}}\sqrt{\kappa_{\min}}}},
    \end{equation*}
    where we used assumptions of $\epsflip,\epssq,\tildeepssq<1$. Furthermore, by Lemma \ref{lemma: estimate of norm A}, we apply the estimate of $\normTwoToTwo{\mA}$ and get 
\begin{equation*}
\normMax{\vw^{(\ell)}-\vw^*}\leq \frac{1}{\sqrt{N}} \parens{\frac{\parens{\frac{1}{\sqrt{\lambda_0}}+1}B_y}{\lambda_0\sqrt{\kappa_{\min}}} + \frac{\frac{B_y}{\sqrt{\lambda_0}}+2\parens{1+\lambda_0}}{\lambda_0\parens{1-c}\sqrt{\kappa_{\min}}}},
    \end{equation*}
where we used the assumption of $\epsflip<c$. The desired result \eqref{eq: w ell is elementwise bounded} can be immediately obtained by the triangle inequality $\normMax{\vw^{(\ell)}}\leq\normMax{\vw^{(\ell)}-\vw^*}+\normMax{\vw^*}$ and estimate of $\normMax{\vw^*}$ established in Lemma \ref{lemma: estimate of Kw and w*}. 
\end{proof}

\begin{proposition}\label{proposition: estimate of wil - wi* K(xi,xN+1)} Suppose that assumptions of Theorem \ref{theorem: entry-wise boundedness of w ell} hold and $\vw^*=[w_i^*:i\in[N]]$ is the solution of linear system \eqref{eq: unnormalized kernel system}. Then for all $\ell\in\mathbb{N}$, it holds that
\begin{align*}
\abs{\sum_{i=1}^N\parens{w_i^{(\ell)}-w_i^*}\mathcal{K}(\vx_i,\vx_{N+1})}\leq\parens{1-\eta\lambda_0\parens{1-c}}^{\ell}\frac{\parens{\frac{1}{\sqrt{\lambda_0}}+1}B_y}{\lambda_0\sqrt{\kappa_{\min}}} + \frac{\frac{B_y}{\sqrt{\lambda_0}}\epsflip+\parens{1+\lambda_0}\parens{\epssq+\tildeepssq}}{\lambda_0\parens{1-c}\sqrt{\kappa_{\min}}}.
\end{align*}    
\end{proposition}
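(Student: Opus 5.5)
The plan is to reduce the prediction error to the $\ell_2$ error of the coefficient vector and then apply Theorem~\ref{theorem: Error Analysis for Inexact Richardson iteration} together with Lemma~\ref{lemma: estimate of norm A}.

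\textbf{Step 1: Cauchy--Schwarz.} Let $\vp:=[\mathcal{K}(\vx_i,\vx_{N+1}):i\in[N]]$. Since the Gaussian kernel satisfies $0<\mathcal{K}\le 1$, every entry of $\vp$ lies in $(0,1]$, so $\normTwo{\vp}\le\sqrt{N}$. Cauchy--Schwarz then gives
\[
\abs{\sum_{i=1}^N\parens{w_i^{(\ell)}-w_i^*}\mathcal{K}(\vx_i,\vx_{N+1})}\le \normTwo{\vp}\,\normTwo{\vw^{(\ell)}-\vw^*}\le \sqrt{N}\normTwo{\vw^{(\ell)}-\vw^*}.
\]
The right-hand side is exactly the quantity bounded in \eqref{eq: bound of norm w ell - w *}.

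\textbf{Step 2: invoke the Richardson error bound.} Apply Theorem~\ref{theorem: Error Analysis for Inexact Richardson iteration}. The assumptions of Theorem~\ref{theorem: entry-wise boundedness of w ell} include those of that theorem, since $\epsflip<c<1$.

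\textbf{Step 3: insert the contraction estimate.} Condition \eqref{eq: eta condition} holds, so Lemma~\ref{lemma: estimate of norm A} applies and gives $\normTwoToTwo{\mA}<1-\eta\lambda_0(1-\epsflip)$. Because $\epsflip<c$, this yields
\[
\normTwoToTwo{\mA}<1-\eta\lambda_0(1-c).
\]
The bound is nonnegative and less than $1$. This handles the two terms of \eqref{eq: bound of norm w ell - w *} as follows.
\begin{itemize}
\item \emph{Geometric term:} $\normTwoToTwo{\mA}^\ell\le(1-\eta\lambda_0(1-c))^\ell$.
\item \emph{Accumulated-error term:} $1-\normTwoToTwo{\mA}>\eta\lambda_0(1-\epsflip)\ge\eta\lambda_0(1-c)$. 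The factor $\eta$ in the numerator then cancels, leaving
\[
\frac{\frac{B_y}{\sqrt{\lambda_0}}\epsflip+(1+\lambda_0)(\epssq+\tildeepssq)}{\lambda_0(1-c)\sqrt{\kappa_{\min}}}.
\]
\end{itemize}
Summing the two terms gives exactly the claimed bound.

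\textbf{Where care is needed.} The only subtlety is Step 1: we must not use the cruder estimate $\normMax{\cdot}\,\|\vp\|_1$, which would lose a factor $\sqrt{N}$. The $\sqrt{N}$ from $\normTwo{\vp}$ has to pair exactly with the $\sqrt{N}$ already built into the left side of \eqref{eq: bound of norm w ell - w *}. The remaining steps are direct substitutions.
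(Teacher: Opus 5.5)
Your proposal is correct and follows the paper's proof: Cauchy--Schwarz with $\normTwo{[\mathcal{K}(\vx_i,\vx_{N+1})]_{i\in[N]}}\le\sqrt{N}$ reduces the claim to bounding $\sqrt{N}\normTwo{\vw^{(\ell)}-\vw^*}$. That quantity is then bounded by Theorem~\ref{theorem: Error Analysis for Inexact Richardson iteration} combined with $\normTwoToTwo{\mA}<1-\eta\lambda_0(1-\epsflip)\le 1-\eta\lambda_0(1-c)$ from Lemma~\ref{lemma: estimate of norm A}, exactly as you do.
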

\begin{proof}
By the Cauchy-Schwarz inequality,
    \begin{align*}
\abs{\sum_{i=1}^N\parens{w_i^{(\ell)}-w_i^*}\mathcal{K}(\vx_i,\vx_{N+1})}\leq \norm{\vw^{(\ell)}-\vw^*}_2 \parens{\sum_{i=1}^N \parens{\mathcal{K}(\vx_i,\vx_{N+1})}^2}^{\frac{1}{2}}\leq \sqrt{N}\norm{\vw^{(\ell)}-\vw^*}_2. 
    \end{align*}
Moreover, it follows from Theorem \ref{theorem: Error Analysis for Inexact Richardson iteration} and Lemma \ref{lemma: estimate of norm A} that 
    \begin{equation*}
\sqrt{N}\normTwo{\vw\myl[\ell] - \vw^*}\leq  \parens{1-\eta\lambda_0\parens{1-c}}^{\ell}\frac{\parens{\frac{1}{\sqrt{\lambda_0}}+1}B_y}{\lambda_0\sqrt{\kappa_{\min}}} + \frac{\frac{B_y}{\sqrt{\lambda_0}}\epsflip+\parens{1+\lambda_0}\parens{\epssq+\tildeepssq}}{\lambda_0\parens{1-c}\sqrt{\kappa_{\min}}}.
    \end{equation*}
The desired result immediately follows from the above estimates. 
\end{proof}   
\section{Approximation Power of Single-Layer ReLU Networks}\label{app: approximation of single-layer ReLU Networks}

In the construction of Transformers for in-context learning, we typically use the MLP layer (a single-layer ReLU neural network) to approximate necessary arithmetic operations. For general dimension-free approximation (which holds for arbitrary input dimensions), achieving a target accuracy of $\epsilon$ requires a network width that scales as $\mathcal{O}(\log(1/\epsilon)/\epsilon^2)$ \citep{bach2017breaking, bai2023transformers}. While broadly applicable, these existential guarantees are inherently non-constructive and suffer from a relatively loose dependence on the error tolerance. However, because the specific operations we need to approximate are strictly one-dimensional smooth functions, we can use continuous piecewise-linear splines to reduce this width requirement to $\mathcal{O}(1/\sqrt{\epsilon})$ \citep{de1978practical,devore2021neural,devore1998nonlinear}. Furthermore, unlike dimension-free approaches, spline-based methods provide an explicit mathematical recipe for constructing the exact neural parameters. Therefore, in this section, we adopt these constructive spline arguments to establish sharper, explicitly realizable network width bounds.

To explicitly control the approximation error, we present the following constructive lemma. This result follows directly from standard error analysis of piecewise linear interpolation and well-known ReLU representations of splines~\citep{de1978practical, devore2021neural}. We state it here in the form required for our theoretical constructions and omit the standard proof.

\begin{lemma}[Constructive Approximation of Single-layer ReLU network]\label{lemma: shallow ReLU general grid}
    Let $f \in C^2[M_1, M_2]$ and let $M_1 = x_0 < x_1 < \dots < x_n = M_2$ be an arbitrary partition of $[M_1, M_2]$ with $n \in \mathbb{N}$. We define the network parameters explicitly as follows:
    \begin{align*}
    d &= f(M_1),\quad a_s = 1,\quad b_s = -x_{s-1}, \quad \text{for } s = 1, \dots, n, \\
    c_s &= \displaystyle\begin{cases}
        \displaystyle\frac{f(x_1) - f(x_0)}{x_1 - x_0}, & s=1, \\
        \displaystyle\frac{f(x_s) - f(x_{s-1})}{x_s - x_{s-1}} - \frac{f(x_{s-1}) - f(x_{s-2})}{x_{s-1} - x_{s-2}}, & s = 2, \dots, n.
    \end{cases}
    \end{align*}
    Then the neural network $\phi(x) = \sum_{s=1}^n c_s \mathrm{ReLU}(a_s x + b_s) + d$ satisfies the uniform approximation bound
    \begin{equation*}
        \sup_{x \in [M_1, M_2]} |f(x) - \phi(x)| \le \frac{1}{8} \max_{k \in [n]} \left( (x_k - x_{k-1})^2 \sup_{x \in [x_{k-1}, x_k]} |f''(x)| \right).
    \end{equation*}
\end{lemma}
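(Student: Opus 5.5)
The plan has two independent parts: show that $\phi$ coincides with the continuous piecewise-linear interpolant $\mathcal{I}f$ of $f$ at the nodes $x_0,\dots,x_n$, and then invoke the classical error bound for linear interpolation on each subinterval.

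\textbf{Step 1: $\phi=\mathcal{I}f$ on $[M_1,M_2]$.} For $x\in[M_1,M_2]$ we have $x-x_{s-1}\ge 0$ exactly when $x\ge x_{s-1}$. Hence on $[x_{k-1},x_k]$ the active units are $s=1,\dots,k$, and
\[
\phi(x)=f(M_1)+\sum_{s=1}^{k}c_s\,(x-x_{s-1}).
\]
This function is continuous, since each ReLU is, and it is affine on every subinterval. Its slope on $[x_{k-1},x_k]$ is $\sum_{s=1}^k c_s$. The sum telescopes, because $c_s$ is the difference of consecutive divided differences, so the slope equals
\[
m_k:=\frac{f(x_k)-f(x_{k-1})}{x_k-x_{k-1}}.
\]
Induct on $k$. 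At $x_0$ all terms vanish, so $\phi(x_0)=f(x_0)$. If $\phi(x_{k-1})=f(x_{k-1})$, then affinity with slope $m_k$ gives $\phi(x_k)=f(x_{k-1})+m_k(x_k-x_{k-1})=f(x_k)$. So $\phi$ interpolates $f$ at every node and is linear between nodes, which means $\phi=\mathcal{I}f$.

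\textbf{Step 2: local error bound.} Fix $k$ and set $e=f-\phi$ on $[x_{k-1},x_k]$. Then $e\in C^2$, $e(x_{k-1})=e(x_k)=0$, and $e''=f''$. Take any $x$ in the interior and put
\[
g(t)=e(t)-\frac{e(x)}{(x-x_{k-1})(x-x_k)}\,(t-x_{k-1})(t-x_k).
\]
The function $g$ has three zeros: $x_{k-1}$, $x$, and $x_k$. Applying Rolle's theorem twice gives a point $\xi\in(x_{k-1},x_k)$ with $g''(\xi)=0$. This yields
\[
e(x)=\tfrac12 f''(\xi)\,(x-x_{k-1})(x-x_k).
\]
The factor $|(x-x_{k-1})(x-x_k)|$ is maximized at the midpoint, where it equals $(x_k-x_{k-1})^2/4$. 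Therefore
\[
|e(x)|\le \tfrac18 (x_k-x_{k-1})^2\sup_{[x_{k-1},x_k]}|f''|.
\]

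\textbf{Step 3: conclusion.} Taking the supremum over $x$, and then the maximum over $k\in[n]$, gives the stated bound.

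Neither step is a real obstacle. The only points needing care are bookkeeping: checking which ReLUs are active (at a node, $\mathrm{ReLU}(0)=0$ is consistent from both sides) and carrying out the telescoping of the $c_s$ in Step 1. Step 2 is textbook material.
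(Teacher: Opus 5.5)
Your proof is correct and follows the same approach the paper relies on. The paper omits the proof and cites the standard error analysis of piecewise-linear interpolation together with the ReLU representation of splines. Your two steps are exactly these ingredients: identifying $\phi$ with the nodal interpolant of $f$ (the $c_s$ telescope to the divided-difference slopes and $\phi(x_0)=f(x_0)$), then applying the Rolle-based formula $e(x)=\tfrac12 f''(\xi)(x-x_{k-1})(x-x_k)$ on each subinterval.
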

We now apply the general bound from Lemma \ref{lemma: shallow ReLU general grid} to derive explicit network widths for approximating specific one-dimensional smooth functions. Because our transformer blocks rely on these functions to compute intermediate variables, we formulate their approximation bounds as direct corollaries below.
\begin{corollary}[Approximation of $x^2$ on $\lbrack-\delta,\delta\rbrack$]\label{coro: approx x2}
    Let $\delta > 0$ and target accuracy $\epsilon > 0$. If the network width $n \in \mathbb{N}$ satisfies $n \ge \delta / \sqrt{\epsilon}$, then there exist parameters $a_s, b_s, c_s \in \mathbb{R}$ for $s \in [n]$, and $d \in \mathbb{R}$, such that the neural network $\phi(x) = \sum_{s=1}^n c_s \mathrm{ReLU}(a_s x + b_s) + d$ satisfies the uniform bound $\sup_{x \in [-\delta, \delta]} |x^2 - \phi(x)| \le \epsilon$.
\end{corollary}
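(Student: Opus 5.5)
The plan is to apply Lemma~\ref{lemma: shallow ReLU general grid} directly with $f(x)=x^2$ on $[M_1,M_2]=[-\delta,\delta]$, using the uniform partition $x_k=-\delta+2k\delta/n$, $k=0,\dots,n$. Since $f\in C^2[-\delta,\delta]$, the lemma supplies explicit parameters: $a_s=1$, $b_s=-x_{s-1}$, $d=\delta^2$, and $c_s$ given by the first-slope and slope-difference formulas. For $x^2$ these are explicit. The first slope is $c_1=x_0+x_1=-2\delta+2\delta/n$, and every later coefficient is the constant second difference $c_s=(x_s+x_{s-1})-(x_{s-1}+x_{s-2})=4\delta/n$. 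Because the statement is only existential, we do not need these values, but they confirm that the network is realizable.

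The error bound then follows from the lemma. On each cell the mesh size is $x_k-x_{k-1}=2\delta/n$ and $\sup|f''|=2$. Hence
\[
\sup_{x\in[-\delta,\delta]}|x^2-\phi(x)|\le \frac18\cdot\frac{4\delta^2}{n^2}\cdot 2=\frac{\delta^2}{n^2}.
\]
The assumption $n\ge\delta/\sqrt{\epsilon}$ gives $\delta^2/n^2\le\epsilon$, which is the claim.

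There is essentially no obstacle here. The only point to check is the constant: the mesh is $2\delta/n$ rather than $\delta/n$, and this factor is exactly cancelled by the $1/8$ and the value $f''=2$. The width $n$ in the statement counts the ReLU units, and the constant $d$ is treated as a bias term separate from the width, consistent with the form of $\phi$.
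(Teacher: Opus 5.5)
Your proposal is correct and follows essentially the same route as the paper: you apply Lemma~\ref{lemma: shallow ReLU general grid} with the uniform partition $x_k=-\delta+2\delta k/n$ and get the bound $\frac{1}{8}\cdot\frac{4\delta^2}{n^2}\cdot 2=\delta^2/n^2\le\epsilon$. Your explicit values $d=\delta^2$, $c_1=-2\delta+2\delta/n$, and $c_s=4\delta/n$ for $s\ge 2$ are also correct, though the existential statement does not need them.
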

\begin{proof}
    We apply Lemma \ref{lemma: shallow ReLU general grid} using a uniform partition $x_k = -\delta + 2\delta k / n$. The interval width is constantly $2\delta/n$. Because $f''(x) = 2$, the error bound becomes $\frac{1}{8} (4\delta^2 / n^2)(2) = \delta^2 / n^2$. Enforcing $\delta^2 / n^2 \le \epsilon$ yields the condition on $n$.
\end{proof}

\begin{corollary}[Approximation of $x/(1-x)$ on $\lbrack0,\delta\rbrack$]\label{coro: approx x over 1-x adaptive}
    Let $\delta \in (0, 1)$ and target accuracy $\epsilon\in(0,1)$. If the network width $n \in \mathbb{N}$ satisfies 
    $$n \ge \frac{2 \left( (1-\delta)^{-1/2} - 1 \right)}{\sqrt{\epsilon}}=\mathcal{O}(1/\sqrt{\epsilon(1-\delta)}),$$ 
    then there exist parameters $a_s, b_s, c_s \in \mathbb{R}$ for $s \in [n]$, and $d \in \mathbb{R}$, such that the neural network $\phi(x) = \sum_{s=1}^n c_s \mathrm{ReLU}(a_s x + b_s) + d$ satisfies $\sup_{x \in [0, \delta]} \left| \frac{x}{1-x} - \phi(x) \right| \le \epsilon$. 
\end{corollary}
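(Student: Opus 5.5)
The plan is to apply Lemma~\ref{lemma: shallow ReLU general grid} with a \emph{non-uniform} partition adapted to the blow-up of $f(x)=x/(1-x)$ near $x=1$. The lemma is used only for the representation: its parameters give a ReLU network $\phi$ that is exactly the piecewise-linear interpolant of $f$ at the nodes. The error itself I will compute exactly on each subinterval, rather than through the crude bound $\frac18 h_k^2\sup|f''|$. That crude bound evaluates $f''(x)=2(1-x)^{-3}$ at the right endpoint, and the ratio $(1-x_{k-1})/(1-x_k)$ would then spoil the constant.

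\textbf{Choice of grid.} Let $g(x):=(1-x)^{-1/2}$, which is increasing on $[0,\delta]$, and set $\Delta:=\bigl(g(\delta)-1\bigr)/n$. Define the nodes $x_k\in[0,\delta]$ by $g(x_k)=1+k\Delta$, i.e.\ $x_k=1-(1+k\Delta)^{-2}$ for $k=0,\dots,n$. Then $x_0=0$, $x_n=\delta$, and the nodes are equispaced in the variable $g$. This is the equidistribution of $\sqrt{f''}\propto(1-x)^{-3/2}$, since $\int_0^\delta(1-x)^{-3/2}\,dx=2\bigl(g(\delta)-1\bigr)$. Lemma~\ref{lemma: shallow ReLU general grid} with this partition yields $a_s,b_s,c_s,d$ such that $\phi$ coincides with the linear interpolant of $f$ on each $[x_{k-1},x_k]$.

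\textbf{Exact local error.} Since $f(x)=\frac1{1-x}-1$ and the constant term is interpolated exactly, it suffices to consider $h(x)=1/(1-x)$ on a cell $[a,b]$. Substitute $t=1-x$, $A=1-a$, $B=1-b$, so $B\le t\le A$. A direct computation shows that the interpolant of $1/t$ through $(A,1/A)$ and $(B,1/B)$ is $(A+B-t)/(AB)$. Hence the error equals $(A-t)(t-B)/(ABt)\ge0$. Maximizing over $t$ puts the maximum at $t=\sqrt{AB}$, with value
\[
\frac{(\sqrt A-\sqrt B)^2}{AB}=\Bigl(\tfrac{1}{\sqrt B}-\tfrac{1}{\sqrt A}\Bigr)^2=\bigl(g(b)-g(a)\bigr)^2 .
\]
With the chosen grid, every cell therefore has maximal error exactly $\Delta^2$.

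\textbf{Conclusion and obstacle.} We get $\sup_{[0,\delta]}|f-\phi|=\Delta^2=\bigl((1-\delta)^{-1/2}-1\bigr)^2/n^2$. This is at most $\epsilon$ as soon as $n\ge\bigl((1-\delta)^{-1/2}-1\bigr)/\sqrt\epsilon$, so the stated requirement, which carries an extra factor $2$, is more than sufficient. For the $\mathcal O$ form, note that $(1-\delta)^{-1/2}-1\le(1-\delta)^{-1/2}$. The main obstacle is the one already flagged: the generic Lemma~\ref{lemma: shallow ReLU general grid} bound, applied to an adaptive grid, loses a factor $\bigl((1-x_{k-1})/(1-x_k)\bigr)^3$. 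I avoid this with the exact interpolation-error identity for $1/t$, which is available because the target is rational.
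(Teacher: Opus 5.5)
Your proof is correct. It uses exactly the paper's grid ($x_k=1-y_k^{-2}$ with $y_k$ equispaced between $1$ and $(1-\delta)^{-1/2}$) and the explicit ReLU parameters of Lemma~\ref{lemma: shallow ReLU general grid}. The two arguments differ only in how the error on each cell is bounded.

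The paper plugs the grid into the lemma's generic estimate $\tfrac18 h_k^2\sup|f''|$ and uses $y_k-y_{k-1}\le 1$ to control the ratio $r=y_k/y_{k-1}$. You instead compute the interpolation error of $1/t$ exactly and get $\Delta^2$ on every cell.

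A short computation shows that the generic estimate on a cell equals $\tfrac14(1+r)^2r^2\Delta^2$. It is therefore at most $4\Delta^2$ only once $\Delta$ is below roughly $1/2$. With merely $\Delta\le 1$, as the paper's sentence suggests, the first cell can reach $9\Delta^2$ (take $\Delta=1$). The paper's conclusion is still valid: the final hypothesis $n\ge 2\bigl((1-\delta)^{-1/2}-1\bigr)/\sqrt\epsilon$ with $\epsilon<1$ forces $\Delta\le\sqrt\epsilon/2<1/2$. Hence $r\le 3/2$, and the overhead is at most $225/64\approx 3.5$.

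So your remark that the crude bound ``spoils the constant'' is too pessimistic under the corollary's own hypothesis. Also, its actual overhead relative to the exact error is $\tfrac14(1+r)^2r^2$, not the $f''$ ratio $r^6$.

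What your route buys is an equality in place of an inequality. There is no ratio bookkeeping, and you get the sharper width $n\ge\bigl((1-\delta)^{-1/2}-1\bigr)/\sqrt\epsilon$, which shows the factor $2$ is unnecessary. The same identity would also sharpen Corollary~\ref{coro: approx 1 over x}, giving $(\delta^{-1/2}-1)/\sqrt\epsilon$ instead of $3/\sqrt{\delta\epsilon}$. What the paper's route buys is uniformity: it reuses the lemma's $C^2$ bound verbatim for every target, without exploiting that $x/(1-x)$ is rational.
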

\begin{proof}
    Apply Lemma \ref{lemma: shallow ReLU general grid} to $f(x)=x/(1-x)$ on $[0, \delta]$. Following de Boor's equidistribution principle \citep{de1978practical}, we select the adaptive grid nodes $x_k = 1 - y_k^{-2}$ where $y_k = 1 + \frac{k}{n}((1-\delta)^{-1/2} - 1)$. Provided $n \ge (1-\delta)^{-1/2} - 1$, we have $y_k - y_{k-1} \le 1$, which strictly bounds the local error as $\frac{1}{8}(x_k - x_{k-1})^2 \sup |f''(x)| \le \frac{4}{n^2} \left( (1-\delta)^{-1/2} - 1 \right)^2$. Enforcing $\frac{4}{n^2} \left( (1-\delta)^{-1/2} - 1 \right)^2 \le \epsilon$ yields the required condition $n \ge 2\left( (1-\delta)^{-1/2} - 1 \right)/\sqrt{\epsilon}$.
\end{proof}

\begin{corollary}[Approximation of $1/x$ on $\lbrack\delta,1\rbrack$]\label{coro: approx 1 over x}
    Let $\delta \in (0, 1)$ and target accuracy $\epsilon\in(0,1)$. If the network width $n \in \mathbb{N}$ satisfies $$n \ge 3 / \sqrt{\delta \epsilon}=\mathcal{O}(1/\sqrt{\delta\epsilon}),$$ 
    then there exist parameters $a_s, b_s, c_s \in \mathbb{R}$ for $s \in [n]$, and $d \in \mathbb{R}$, such that the neural network $\phi(x) = \sum_{s=1}^n c_s \mathrm{ReLU}(a_s x + b_s) + d$ satisfies $\sup_{x \in [\delta, 1]} |1/x - \phi(x)| \le \epsilon$.
\end{corollary}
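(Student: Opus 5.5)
The plan is to apply Lemma~\ref{lemma: shallow ReLU general grid} to $f(x)=1/x$ on $[\delta,1]$, where $f''(x)=2/x^3$ blows up near $\delta$. A uniform grid would need spacing about $\delta^{3/2}\sqrt{\epsilon}$ and hence width $\mathcal{O}(1/(\delta^{3/2}\sqrt{\epsilon}))$, which is too large. I will therefore use an adaptive grid chosen by de~Boor's equidistribution principle, exactly as in Corollary~\ref{coro: approx x over 1-x adaptive}. The grid should make $(x_k-x_{k-1})^2\sup_{[x_{k-1},x_k]}|f''|$ roughly constant. This suggests equispacing in the variable $y=x^{-1/2}$, since $\int\sqrt{|f''|}\,dx=\int\sqrt{2}\,x^{-3/2}\,dx\propto x^{-1/2}$.

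\textbf{Grid.} Set $y_k=\delta^{-1/2}-\frac{k}{n}\bigl(\delta^{-1/2}-1\bigr)$ and $x_k=y_k^{-2}$ for $k=0,\dots,n$. Then $x_0=\delta$, $x_n=1$, the nodes are increasing, and $h:=y_{k-1}-y_k=(\delta^{-1/2}-1)/n\le \delta^{-1/2}/n$.

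\textbf{Local error.} Write $x_k-x_{k-1}=y_k^{-2}-y_{k-1}^{-2}=h\,(y_{k-1}+y_k)/(y_k^2y_{k-1}^2)$. On $[x_{k-1},x_k]$ we have $\sup|f''|=2x_{k-1}^{-3}=2y_{k-1}^6$. The local error term is then
\[
\tfrac18 (x_k-x_{k-1})^2\cdot 2y_{k-1}^6=\tfrac14 h^2 (y_{k-1}+y_k)^2\,\frac{y_{k-1}^2}{y_k^4}.
\]

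\textbf{Bounding the ratio.} Let $r=y_{k-1}/y_k\ge1$, so the term equals $\tfrac14 h^2 (1+r)^2 r^2$. Since $y_k\ge1$, we get $r=1+h/y_k\le 1+h$. Requiring $n\ge 3/\sqrt{\delta\epsilon}\ge\delta^{-1/2}$ (valid because $\epsilon<1$ and $\delta<1$) gives $h\le1$, so $r\le 2$. The term is then at most $\tfrac14 h^2\cdot 9\cdot 4=9h^2\le 9/(n^2\delta)$.

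Imposing $9/(n^2\delta)\le\epsilon$ gives exactly $n\ge 3/\sqrt{\delta\epsilon}$, and Lemma~\ref{lemma: shallow ReLU general grid} then supplies the explicit parameters.

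\textbf{Main obstacle.} The only delicate step is controlling the ratio $y_{k-1}/y_k$ so that the constant comes out at most $9$. If the crude bound $r\le2$ turned out too loose, I would sharpen it using $h\le 1/3$ when $n\ge3\delta^{-1/2}$.
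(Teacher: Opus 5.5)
Your proposal is correct and is essentially the paper's proof: you use the same equidistributed grid $x_k=y_k^{-2}$ with $y_k$ equally spaced from $\delta^{-1/2}$ down to $1$, the same observation that the spacing $y_{k-1}-y_k$ is at most $1$, and the same local bound $9/(\delta n^2)$, which forces $n\ge 3/\sqrt{\delta\epsilon}$. You also spell out the ratio estimate $y_{k-1}/y_k\le 1+h\le 2$ that produces the constant $9$, a step the paper states without derivation.
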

\begin{proof}
    Apply Lemma \ref{lemma: shallow ReLU general grid} to $f(x)=1/x$ on $[\delta, 1]$. Following de Boor's equidistribution principle \citep{de1978practical}, we select the adaptive grid nodes $x_k = y_k^{-2}$ where $y_k = \delta^{-1/2} - \frac{k}{n}(\delta^{-1/2} - 1)$. Provided $n \ge \delta^{-1/2} - 1$, we have $y_{k-1} - y_k \le 1$, which strictly bounds the local error as $\frac{1}{8}(x_k - x_{k-1})^2 \sup |f''(x)| < \frac{9}{\delta n^2}$. Enforcing $\frac{9}{\delta n^2} \le \epsilon$ yields the required condition $n \ge 3/\sqrt{\delta \epsilon}$.
\end{proof}

\section{Construction of Transformers}\label{app: construction of transformers}

\subsection{Overview of the Construction}\label{app: overview of transformer construction}
We provide an overview of the transformer construction designed to implement the inexact preconditioned Richardson iteration. The construction includes three phases: read-in phase, iteration phase and read-out phase, where their detailed construction can be found in Appendices \ref{app: read-in phase}, \ref{app: iteration phase}, and \ref{app: read-out phase}, respectively. 

\paragraph{Read-in Phase.} This phase serves as a preparatory step for the subsequent inexact Richardson iteration. Across three blocks, it approximates the preconditioner $\mD^{-1}$ and $\mD^{-1}\vy$:
\begin{itemize}
    \item One transformer block approximates $\alpha_i \approx \mD_{ii}^{-1}$. Specifically, its single-head attention layer exactly computes $k_i \equiv 1 / (1+\sum_{j'=1}^{N}\mathcal{K}(\vx_i,\vx_{j'}))$, and its MLP layer approximates $\alpha_i \approx k_i / (1-k_i) = \mD_{ii}^{-1}$.
    \item One MLP-only transformer block zeros out $\alpha_{N+1}$.
    \item One MLP-only transformer block approximates $\beta_i \approx y_i\alpha_i$.
\end{itemize}

\paragraph{Iteration Phase.} The core optimization executes the iterative updates on $\vw$:
\begin{itemize}
    \item One transformer block exactly computes $-\mD^{-1}\mK\vw$. Its single-head attention layer computes $p_i \equiv -\mD_{ii}^{-1}\sum_{j=1}^N \mathcal{K}(\vx_i,\vx_j)w_j$, while its MLP layer zeros out $p_{N+1}$.
    \item One MLP-only transformer block implements the one-step inexact iteration update
    \begin{equation*}
        \vw^{(\ell+1)} = \vw^{(\ell)} + \eta \parens{\mD^{-1}\vy - \mD^{-1}\parens{\KerMat+\lambda\mI}\vw^{(\ell)}} + \ve^{(\ell)},
    \end{equation*}
    where $\ve^{(\ell)}$ is the approximation error vector. This block also clears the $p_i$ coordinate to reset the cache for future iterations.
\end{itemize}
    
\paragraph{Read-out Phase.} The final blocks extract the prediction for the test token:
\begin{itemize}
    \item One transformer block computes the normalized prediction and its inverse scaling factor for the test token. Its single-head attention layer exactly computes $\widehat{p}_{N+1} \equiv k_{N+1}\sum_{j=1}^{N}\mathcal{K}(\vx_{N+1},\vx_{j})w_j$, and its MLP layer approximates $\widehat{k}_{N+1} \approx 1/k_{N+1}$.
    
    \item One MLP-only transformer block approximates the product $\widehat{p}_{N+1}\widehat{k}_{N+1}$ and writes the result to the target coordinate as the approximation for the Bayes optimal prediction $\sum_{j=1}^{N} w_j^* \mathcal{K}(\vx_{N+1},\vx_{j})$.
\end{itemize}

As detailed across the three phases above, each token $\vz_i$ is explicitly augmented to track these intermediate coordinates, structured as follows
\begin{equation*}
\vz_i = [\vx_i^\top, y_i, w_i, \norm{\vx_i}_2^2, k_i, \alpha_i, \beta_i, p_i \text{ (or } \widehat{p}_i\text{)}, \widehat{k}_i, s_i, t_i, 1]^\top.
\end{equation*}

\subsection{Read-in Phase}\label{app: read-in phase}
\paragraph{One transformer block approximates $\mD^{-1}$.} We first construct a single-head attention layer. Set $\attnQ,\attnK,\attnV\in\mathbb{R}^{D\times D}$ such that 
\begin{equation*}
        \attnQ\vz_i=[\vx_i/v;\norm{\vx_i}^2/v;-(1-s_i)/(2v);\vzero],\ \attnK\vz_j=[\vx_j/v;-(1-s_j)/(2v);\norm{\vx_j}^2/v;\vzero],\ \attnV \vz_j=s_j\ve_{d+4}. 
\end{equation*}
Note that for $i,j\in\mathbb{Z}_{N+2}$, 
\begin{equation*}
    \iprod{\attnQ\vz_i,\attnK\vz_j}=\begin{cases}
    -\frac{\norm{\vx_i-\vx_j}^2}{2v^2},&\text{if }i,j\in[N+1],\\
    0,& \text{if }i=0\text{ or }j=0,
    \end{cases}\qquad
    \attnV\vz_j=\begin{cases}
        \vzero, &\text{if }j\in[N+1],\\
        \ve_{d+4}, &\text{if }j=0.
    \end{cases}
\end{equation*}
We mask out the test token, i.e., setting mask matrix $\mM\in\mathbb{R}^{(N+2)\times(N+2)}$ to be 
\begin{equation*}
    \mM_{ij}=\begin{cases}
        -\infty,& \text{if }j=N+1,\\
        0, &\text{otherwise}. 
    \end{cases}
\end{equation*}
It follows that for all $i\in[N]$, 
\begin{align*}
    \widetilde{\vz}_i &= \vz_i + \sum_{j=0}^{N+1} \frac{\exp\parens{\iprod{ \attnQ \vz_i, \attnK \vz_j } + \mM_{ij}}}{\sum_{j'=0}^{N+1}\exp\parens{\iprod{ \attnQ \vz_i, \attnK \vz_{j'} } + \mM_{ij'}}}\attnV \vz_j\\
     &= \vz_i + \sum_{j=0}^{N} \frac{\exp\parens{\iprod{ \attnQ \vz_i, \attnK \vz_j }}}{\sum_{j'=0}^{N}\exp\parens{\iprod{ \attnQ \vz_i, \attnK \vz_{j'} } }}\attnV \vz_j\\
     &= \vz_i + \frac{\exp\parens{\iprod{ \attnQ \vz_i, \attnK \vz_0 }}}{\sum_{j'=0}^{N}\exp\parens{\iprod{ \attnQ \vz_i, \attnK \vz_{j'} } }}\attnV \vz_0 + \sum_{j=1}^{N} \frac{\exp\parens{\iprod{ \attnQ \vz_i, \attnK \vz_j }}}{\sum_{j'=0}^{N}\exp\parens{\iprod{ \attnQ \vz_i, \attnK \vz_{j'} } }}\attnV \vz_j\\
     &=\vz_i + \frac{1}{1+\sum_{j'=1}^N \mathcal{K}(\vx_i,\vx_{j'})}\ve_{d+4}
\end{align*}
and when $i=0$, 
\begin{align*}
    \widetilde{\vz}_0 &= \vz_0 + \sum_{j=0}^{N+1} \frac{\exp\parens{\iprod{ \attnQ \vz_0, \attnK \vz_j } + \mM_{0j}}}{\sum_{j'=0}^{N+1}\exp\parens{\iprod{ \attnQ \vz_0, \attnK \vz_{j'} } + \mM_{0j'}}}\attnV \vz_j\\
    &=\vz_0 + \frac{1}{\sum_{j'=0}^{N}\exp\parens{\iprod{ \attnQ \vz_0, \attnK \vz_{j'} } }}\ve_{d+4} = \vz_0 + \frac{1}{1+N}\ve_{d+4}
\end{align*}
and when $i=N+1$, 
\begin{align*}
    \widetilde{\vz}_{N+1} &= \vz_{N+1} + \sum_{j=0}^{N+1} \frac{\exp\parens{\iprod{ \attnQ \vz_{N+1}, \attnK \vz_j } + \mM_{N+1,j}}}{\sum_{j'=0}^{N+1}\exp\parens{\iprod{ \attnQ \vz_{N+1}, \attnK \vz_{j'} } + \mM_{N+1,j'}}}\attnV \vz_j\\
    &= \vz_{N+1} +  \frac{1}{\sum_{j'=0}^{N}\exp\parens{\iprod{ \attnQ \vz_{N+1}, \attnK \vz_{j'} }}}\ve_{d+4} = \vz_{N+1} +  \frac{1}{1+ \sum_{j'=1}^{N}\mathcal{K}(\vx_{N+1},\vx_{j'})}\ve_{d+4}
\end{align*}
Let 
\begin{equation}
k_i:=\begin{cases}\displaystyle
    \frac{1}{1+N}, & \text{if }i=0,\\
    \displaystyle\frac{1}{1+\sum_{j'=1}^N\mathcal{K}(\vx_i,\vx_{j'})},& \text{if }i\in[N+1].
\end{cases}\label{def: feature ki}
\end{equation}
We remark that the value of $k_0$ is not important in the later analysis, and hence we replace this value by $*$. Then, after this attention layer, we have output\footnotemark
\footnotetext{Here, the entries wrapped in a box indicate the values that have been updated compared to the input.}
\begin{equation*}
\begin{bmatrix}
\vx_0 & \vx_1 & \vx_2 & \dots & \vx_N & \vx_{N+1} \\
0 & y_1 & y_2 & \dots & y_N & 0\\
0 & w_1 & w_2 & \dots & w_N & 0\\
0 & \norm{\vx_1}^2 & \norm{\vx_2}^2 & \dots & \norm{\vx_N}^2  & \norm{\vx_{N+1}}^2 \\
\highlight{*} & \highlight{k_1} & \highlight{k_2} & \dots & \highlight{k_N} & \highlight{k_{N+1}}\\
\vzero_4 & \vzero_4 & \vzero_4 & \dots & \vzero_4 & \vzero_4\\
s_0 & s_1 & s_2 & \dots & s_N & s_{N+1}\\
t_0 & t_1 & t_2 & \dots & t_N & t_{N+1}\\
1 & 1 & 1 & \dots & 1 & 1
\end{bmatrix}.
\end{equation*}
We next construct the MLP layer in this transformer block. Let $\epsflip\in(0,1)$. According to Corollary \ref{coro: approx x over 1-x adaptive} with $\delta=1/(1+N\kappa_{\min})$, there exist $a_s,b_s,c_s\in[\nflip]$ with $\nflip = \mathcal{O}\parens{\sqrt{N/\epsflip}}$ such that 
\begin{equation}\label{eq: approximate flip function}
\sup_{x\in\left[0,\frac{1}{1+N\kappa_{\min}}\right]}\abs{\frac{x}{1-x}-\phi_{\mathrm{flip}}(x)}<\frac{\epsflip}{N}
\end{equation}
where $\phi_{\mathrm{flip}}(x):=\sum_{s=1}^{\nflip}c_s\ReLU\parens{a_s x+b_s}$. 
Set $\mlpin\in\mathbb{R}^{\nflip\times D}$ and $\mlpout\in\mathbb{R}^{D\times\nflip}$ such that 
\begin{equation*}
    \mlpin\vz_i=\begin{bmatrix}
        a_1k_i + b_1\\
        a_2k_i + b_2\\
        \vdots\\
        a_{\nflip}k_i+b_{\nflip}
    \end{bmatrix},\quad 
    [\mlpout]_{j,:}=\begin{cases}
[c_1,c_2,\dots,c_{\nflip}], &\text{if }j=d+5, \\
    \vzero, &\text{otherwise}. 
\end{cases}
\end{equation*}
For all $i\in\mathbb{Z}_{N+2}$, let $\alpha_i:=\phi_{\mathrm{flip}}(k_i)$. We ignore the estimate for $i=0$ as this value is not essential. Therefore, after this MLP layer, we have output 
\begin{align*}
\begin{bmatrix}
\vx_0 & \vx_1 & \vx_2 & \dots & \vx_N & \vx_{N+1} \\
0 & y_1 & y_2 & \dots & y_N & 0\\
0 & w_1 & w_2 & \dots & w_N & 0\\
0 & \norm{\vx_1}^2 & \norm{\vx_2}^2 & \dots & \norm{\vx_N}^2 & \norm{\vx_{N+1}}^2 \\
* & k_1 & k_2 & \dots & k_N & k_{N+1}\\
\highlight{*} & \highlight{\alpha_1} & \highlight{\alpha_2} & \dots & \highlight{\alpha_N} & \highlight{\alpha_{N+1}}\\
\vzero_3 & \vzero_3 & \vzero_3 & \dots & \vzero_3 & \vzero_3\\
s_0 & s_1 & s_2 & \dots & s_N & s_{N+1}\\
t_0 & t_1 & t_2 & \dots & t_N & t_{N+1}\\
1 & 1 & 1 & \dots & 1 & 1
\end{bmatrix}.
\end{align*}
As a remark at the end of this block, we let 
\begin{equation}\label{def: ri}
    r_i:=\alpha_i-\mD_{ii}^{-1},\quad i\in[N+1], 
\end{equation}
and since $0<k_i<\frac{1}{1+N\kappa_{\min}}$, 
we obtain from  \eqref{eq: approximate flip function} that
\begin{equation}\label{eq: di approximates 1/sum K}
\abs{r_i}=\abs{\mD_{ii}^{-1} - \alpha_i}=\abs{\frac{k_i}{1-k_i} - \alpha_i} < \frac{\epsflip}{N}.
\end{equation}

\paragraph{One MLP-only transformer block zeros out $\alpha_{N+1}$.}
It follows from \eqref{eq: di approximates 1/sum K} and the triangle inequality that for all $i\in[N+1]$, 
\begin{equation}
    \abs{\alpha_i} < \frac{1}{\sum_{j'=1}^N\mathcal{K}(\vx_i,\vx_{j'})} + \frac{\epsflip}{N} < \frac{1}{N\kappa_{\min}}+\frac{1}{N}\equiv B_\alpha\label{def: B alpha}
\end{equation}
Then we set $\mlpin\in\mathbb{R}^{2\times D}$ and $\mlpout\in\mathbb{R}^{D\times 2}$ such that 
\begin{equation*}
    \mlpin\vz_i=\begin{bmatrix}
        -\alpha_i-B_\alpha(1-t_i)\\
        \alpha_i-B_\alpha(1-t_i)
    \end{bmatrix}, \quad [\mlpout]_{j,:}=\begin{cases}
[1,-1],&\text{if }j=d+5,\\
\mathbf{0},&\text{otherwise}.
    \end{cases}
\end{equation*}
Recalling the definition of $t_i$, when $i\in[N]$, we have $t_i=0$, and hence 
\begin{equation*}
\vz_i+\mlpout\ReLU(\mlpin\vz_i) = \vz_i + \ReLU(-\alpha_i-B_\alpha)\ve_{d+5} - \ReLU(\alpha_i-B_\alpha)\ve_{d+5} = \vz_i;
\end{equation*}
moreover, when $i=N+1$, we have $t_{N+1}=1$, which leads to 
\begin{equation*}
\vz_{N+1}+\mlpout\ReLU(\mlpin\vz_{N+1}) = \vz_{N+1} + \ReLU(-\alpha_{N+1})\ve_{d+5} - \ReLU(\alpha_{N+1})\ve_{d+5} = \vz_{N+1}-\alpha_{N+1}\ve_{d+5}.
\end{equation*}
So, after this MLP layer, we have output
\begin{align*}
\begin{bmatrix}
\vx_0 & \vx_1 & \vx_2 & \dots & \vx_N & \vx_{N+1} \\
0 & y_1 & y_2 & \dots & y_N & 0\\
0 & w_1 & w_2 & \dots & w_N & 0\\
0 & \norm{\vx_1}^2 & \norm{\vx_2}^2 & \dots & \norm{\vx_N}^2 & \norm{\vx_{N+1}}^2 \\
* & k_1 & k_2 & \dots & k_N & k_{N+1}\\
* & \alpha_1 & \alpha_2 & \dots & \alpha_N & \highlight{0}\\
\vzero_3 & \vzero_3 & \vzero_3 & \dots & \vzero_3 & \vzero_3\\
s_0 & s_1 & s_2 & \dots & s_N & s_{N+1}\\
t_0 & t_1 & t_2 & \dots & t_N & t_{N+1}\\
1 & 1 & 1 & \dots & 1 & 1
\end{bmatrix}.
\end{align*}
\paragraph{One MLP-only transformer block approximates $y_i\alpha_i$.} Let $\epssq\in(0,1)$. According to Corollary \ref{coro: approx x2} with $\delta=B_y+B_\alpha$, there exist $a_s,b_s,c_s\in[\nsq]$ with $\nsq = \frac{B_y+B_\alpha}{\sqrt{\frac{\epssq}{N}}} = \mathcal{O}\parens{\sqrt{\frac{N}{\epssq}}}$, such that 
\begin{equation}\label{eq: approximate square function y alpha}
    \sup_{|x|\leq B_y+B_\alpha}\abs{x^2-\phisq(x)}<\frac{\epssq}{N},
\end{equation}
where $\phisq(x):=\sum_{s=1}^{\nsq}c_s\ReLU\parens{a_s x+b_s}$. We set $\mlpin\in\mathbb{R}^{2\nsq\times D}$ and $\mlpout\in\mathbb{R}^{D\times 2\nsq}$ such that 
\begin{equation*}
    \mlpin\vz_i=\begin{bmatrix}
        a_1\parens{y_i + \alpha_i} + b_1\\
        \vdots\\
        a_{\nsq}\parens{y_i + \alpha_i} + b_{\nsq}\\
        a_1\parens{y_i - \alpha_i} + b_1\\
        \vdots\\
        a_{\nsq}\parens{y_i - \alpha_i} + b_{\nsq}
    \end{bmatrix}, \quad [\mlpout]_{j,:}=\begin{cases}
\frac{\eta}{4}[c_1,\ldots,c_{\nsq},-c_1,\ldots,-c_{\nsq}],&\text{if }j=d+6,\\
\mathbf{0},&\text{otherwise}.
    \end{cases}
\end{equation*}
It turns out that for $i\in\mathbb{Z}_{N+2}$,
\begin{align*}
    \beta_i\equiv [\mlpout\ReLU(\mlpin\mZ)]_{d+6,i}&=\frac{\eta}{4}\sum_{s=1}^{\nsq}c_s\ReLU\parens{a_s(y_i+\alpha_i) + b_s} -  \frac{\eta}{4}\sum_{s=1}^{\nsq}c_s\ReLU\parens{a_s(y_i-\alpha_i) + b_s}\\
    &=\frac{\eta}{4}\parens{\phisq(y_i+\alpha_i) - \phisq(y_i-\alpha_i)}.
\end{align*}
Since $y_{N+1}=0$ and $\alpha_{N+1}=0$, we have 
$\beta_{N+1} = \frac{\eta}{4}\parens{\phisq(0) - \phisq(0)} = 0$. So, after this MLP layer, we have output
\begin{align*}
\begin{bmatrix}
\vx_0 & \vx_1 & \vx_2 & \dots & \vx_N & \vx_{N+1} \\
0 & y_1 & y_2 & \dots & y_N & 0\\
0 & w_1 & w_2 & \dots & w_N & 0\\
0 & \norm{\vx_1}^2 & \norm{\vx_2}^2 & \dots & \norm{\vx_N}^2 & \norm{\vx_{N+1}}^2 \\
* & k_1 & k_2 & \dots & k_N & k_{N+1}\\
* & \alpha_1 & \alpha_2 & \dots & \alpha_N & 0\\
\highlight{*} & \highlight{\beta_1} & \highlight{\beta_2} & \dots & \highlight{\beta_N} & \highlight{0}\\
\vzero_2 & \vzero_2 & \vzero_2 & \dots & \vzero_2 & \vzero_2\\
s_0 & s_1 & s_2 & \dots & s_N & s_{N+1}\\
t_0 & t_1 & t_2 & \dots & t_N & t_{N+1}\\
1 & 1 & 1 & \dots & 1 & 1
\end{bmatrix}.
\end{align*}
Moreover, when $i\in[N]$, let 
\begin{equation*}
    \tau_{+,i}:=\phisq\parens{y_i + \alpha_i} - \parens{y_i+\alpha_i}^2,\quad \tau_{-,i}:=\phisq\parens{y_i - \alpha_i} - \parens{y_i-\alpha_i}^2. 
\end{equation*}
Then we can rewrite $\beta_i$ as 
\begin{equation}\label{def: beta i}
\beta_i=\frac{\eta}{4}\parens{(y_i+\alpha_i)^2 + \tau_{+,i} - (y_i-\alpha_i)^2 - \tau_{-,i}}=\eta y_i\alpha_i + \frac{\eta}{4}\parens{\tau_{+,i}-\tau_{-,i}}. 
\end{equation}
Noting that $\abs{y_i\pm\alpha_i}\leq \abs{y_i}+\abs{\alpha_i}\leq B_y+B_{\alpha}$, and by \eqref{eq: approximate square function y alpha}, we have 
\begin{equation}\label{eq: condition for tau+-}
\abs{\tau_{+,i}}\leq\frac{\epssq}{N},\quad \abs{\tau_{-,i}}\leq\frac{\epssq}{N}.
\end{equation}

\subsection{Iteration Phase}\label{app: iteration phase}
\paragraph{One transformer block exactly computes $-\mD^{-1}\mK\vw$ and zeros out the corresponding feature for the test token.} 
We first construct a single-head attention layer. Set $\sigma$ to be Softmax, and $\nhead=1$. Set $\attnQ$, $\attnK$ and $\attnV$ such that 
\begin{equation*}
        \attnQ\vz_i=\frac{1}{v}[\vx_i;\norm{\vx_i}^2;-1/2;\vzero],\quad\attnK\vz_j=\frac{1}{v}[\vx_j;-1/2;\norm{\vx_j}^2;\vzero],\quad \attnV\vz_j=- w_j\ve_{d+6}
\end{equation*}
Note that $
\iprod{\attnQ\vz_i,\attnK\vz_j}=-\frac{\norm{\vx_i-\vx_j}^2}{2v^2}$ for $i,j\in\mathbb{Z}_{N+2}$. We mask out the dummy token and the test token, i.e., setting mask matrix $\mM\in\mathbb{R}^{(N+2)\times (N+2)}$ to be 
\begin{equation*}
    \mM_{ij}=\begin{cases}
        -\infty,& \text{if }j=0 \text{ or }j=N+1, \\
        0, &\text{otherwise}. 
    \end{cases}
\end{equation*}
It follows that for all $i\in\mathbb{Z}_{N+2}$, 
\begin{align*}
&\sum_{j=0}^{N+1} \frac{\exp\parens{\iprod{ \attnQ \vz_i, \attnK \vz_j } + \mM_{ij}}}{\sum_{j'=0}^{N+1} \exp\parens{\iprod{ \attnQ \vz_i, \attnK \vz_{j'} } + \mM_{ij'}}}\attnV \vz_j=-\sum_{j=1}^{N}\frac{\exp\parens{-\frac{\norm{\vx_i-\vx_{j}}^2}{2v^2}}w_j}{\sum_{j'=1}^N \exp\parens{-\frac{\norm{\vx_i-\vx_{j'}}^2}{2v^2}}}\ve_{d+6} = p_i \ve_{d+6}
\end{align*}
where we denote 
\begin{equation}\label{def: pi}
    p_i:=-\sum_{j=1}^N\frac{\mathcal{K}(\vx_i,\vx_j)w_j}{\sum_{j'=1}^N \mathcal{K}(\vx_i,\vx_{j'})}.
\end{equation}
Again, the value of $p_0$ is not important, hence we will hide it as $*$. So, after this attention layer, we have output
\begin{align*}
\begin{bmatrix}
\vx_0 & \vx_1 & \vx_2 & \dots & \vx_N & \vx_{N+1} \\
0 & y_1 & y_2 & \dots & y_N & 0\\
0 & w_1 & w_2 & \dots & w_N & 0\\
0 & \norm{\vx_1}^2 & \norm{\vx_2}^2 & \dots & \norm{\vx_N}^2 & \norm{\vx_{N+1}}^2 \\
* & k_1 & k_2 & \dots & k_N & k_{N+1}\\
* & \alpha_1 & \alpha_2 & \dots & \alpha_N & 0\\
* & \beta_1 & \beta_2 & \dots & \beta_N & 0\\
\highlight{*} & \highlight{p_1} & \highlight{p_2} & \dots & \highlight{p_N} & \highlight{p_{N+1}}\\
0 & 0 & 0 & \dots & 0 & 0\\
s_0 & s_1 & s_2 & \dots & s_N & s_{N+1}\\
t_0 & t_1 & t_2 & \dots & t_N & t_{N+1}\\
1 & 1 & 1 & \dots & 1 & 1
\end{bmatrix}.
\end{align*}
We next construct an MLP layer to explicitly zero out $p_{N+1}$. Even if $p_{N+1}$ is currently zero due to the initialization of the weights $w_i = 0$, it may become non-zero in later iterations. As we will see in the construction of the next block, the network executes the inexact preconditioned Richardson iteration \eqref{eq: TF inexact Richardson iteration}, which updates the weights $w_i$ for $i \in [N]$. Therefore, we would still need this layer to enforce $p_{N+1} = 0$ across all iterations. By Theorem \ref{theorem: entry-wise boundedness of w ell}, we know that $|w_j|$ for $j\in[N]$ is bounded above by $B_w$ during iterations, which guarantees that $|p_{i}|$ is also bounded by $B_w$. That is,
\begin{equation*}
    |p_i|=\abs{-\sum_{j=1}^N\frac{\mathcal{K}(\vx_i,\vx_{j})w_j}{\sum_{j'=1}^N \mathcal{K}(\vx_i,\vx_{j'})}}\leq \max_{j\in[N]}\abs{w_j}\leq B_w, \quad\forall i\in[N+1]. 
\end{equation*}
Then we set $\mlpin\in\mathbb{R}^{2\times D}$ and $\mlpout\in\mathbb{R}^{D\times 2}$ such that 
\begin{equation*}
    \mlpin\vz_i=\begin{bmatrix}
        -p_i-B_w(1-t_i)\\
        p_i-B_w(1-t_i)
    \end{bmatrix}, \quad [\mlpout]_{j,:}=\begin{cases}
        [1,-1],&\text{if }j=d+7,\\
        \mathbf{0},&\text{otherwise}.
    \end{cases}
\end{equation*}
Recalling the definition of $t_i$, when $i\in[N]$, $t_i=0$, and hence 
\begin{equation*}
\vz_i+\mlpout\ReLU(\mlpin\vz_i) = \vz_i + \ReLU(-p_i-B_w)\ve_{d+7} - \ReLU(p_i-B_w)\ve_{d+7} = \vz_i,
\end{equation*}
moreover, when $i=N+1$, $t_{N+1}=1$ gives 
\begin{equation*}
\vz_{N+1}+\mlpout\ReLU(\mlpin\vz_{N+1}) = \vz_{N+1} + \ReLU(-p_{N+1})\ve_{d+7} - \ReLU(p_{N+1})\ve_{d+7} = \vz_{N+1}-p_{N+1}\ve_{d+7}.
\end{equation*}
So, after this MLP layer, we have output
\begin{align*}
\begin{bmatrix}
\vx_0 & \vx_1 & \vx_2 & \dots & \vx_N & \vx_{N+1} \\
0 & y_1 & y_2 & \dots & y_N & 0\\
0 & w_1 & w_2 & \dots & w_N & 0\\
0 & \norm{\vx_1}^2 & \norm{\vx_2}^2 & \dots & \norm{\vx_N}^2 & \norm{\vx_{N+1}}^2 \\
* & k_1 & k_2 & \dots & k_N & k_{N+1}\\
* & \alpha_1 & \alpha_2 & \dots & \alpha_N & 0\\
* & \beta_1 & \beta_2 & \dots & \beta_N & 0\\
* & p_1 & p_2 & \dots & p_N & \highlight{0}\\
0 & 0 & 0 & \dots & 0 & 0\\
s_0 & s_1 & s_2 & \dots & s_N & s_{N+1}\\
t_0 & t_1 & t_2 & \dots & t_N & t_{N+1}\\
1 & 1 & 1 & \dots & 1 & 1
\end{bmatrix}.
\end{align*}

\paragraph{An MLP-only transformer block implements one-step inexact precondition Richardson update.} Let $\tildeepssq\in(0,1)$. According to Corollary \ref{coro: approx x2} with $\delta=B_w+B_\alpha$, there exist $$\tildensq = \frac{B_w+B_\alpha}{\sqrt{\frac{\tildeepssq}{N^2}}} = \mathcal{O}\parens{\frac{\frac{1}{\sqrt{N}}+\frac{1}{N}}{\sqrt{\frac{\tildeepssq}{N^2}}}} = \mathcal{O}\parens{\sqrt{\frac{N}{\tildeepssq}}},$$ 
such that 
\begin{equation}\label{eq: approximate square function}
    \sup_{|x|\leq B_w + B_\alpha}\abs{x^2-\tildephisq(x)}<\frac{\tildeepssq}{N^2},
\end{equation}
where $\tildephisq(x):=\sum_{s=1}^{\tildensq}\widetilde{c}_s\ReLU\parens{\widetilde{a}_s x+\widetilde{b}_s}$. Set $\mlpin\in\mathbb{R}^{(2\tildensq+4)\times D}$ such that 
\begin{equation*}
    \mlpin\vz_i=\begin{bmatrix}
    \widetilde{a}_1\parens{w_i +  \alpha_i} + \widetilde{b}_1\\
        \vdots\\
        \widetilde{a}_{\tildensq}\parens{w_i + \alpha_i} + \widetilde{b}_{\tildensq}\\
         \widetilde{a}_1\parens{w_i - \alpha_i} + \widetilde{b}_1\\
        \vdots\\
        \widetilde{a}_{\tildensq}\parens{w_i - \alpha_i} + \widetilde{b}_{\tildensq}\\
        \beta_i\\
        -\beta_i\\
        p_i\\
        -p_i
    \end{bmatrix}
\end{equation*}
and set $\mlpout\in\mathbb{R}^{D\times (2\tildensq+4)}$ such that 
\begin{align*}
    [\mlpout]_{j,:}=\begin{cases}
        \left[
        -\frac{\eta\lambda \widetilde{c}_1}{4},\dots,-\frac{\eta\lambda \widetilde{c}_{\tildensq}}{4},\frac{\eta\lambda \widetilde{c}_1}{4},\dots,\frac{\eta\lambda \widetilde{c}_{\tildensq}}{4},1,-1,\eta,-\eta\right], &\text{if }j=d+2,\\
    [0,0,\dots,0,-1,1], &\text{if }j=d+7, \\
    \vzero, &\text{otherwise}.
    \end{cases}
\end{align*}
It turns out that for $i\in\mathbb{Z}_{N+2}$,
\begin{align}
    &[\mlpout\ReLU(\mlpin\mZ)]_{d+2,i}\nonumber\\
=\ & -\frac{\eta\lambda}{4}\parens{\sum_{s=1}^{\tildensq}\widetilde{c}_s\ReLU\parens{\widetilde{a}_s \parens{w_i + \alpha_i}+\widetilde{b}_s} - \sum_{s=1}^{\tildensq}\widetilde{c}_s\ReLU\parens{\widetilde{a}_s \parens{w_i - \alpha_i}+\widetilde{b}_s}}\nonumber\\
&+\parens{\ReLU\parens{\beta_i}-\ReLU\parens{-\beta_i}}+\eta\parens{\ReLU\parens{p_i}-\ReLU\parens{-p_i}}\nonumber\\
=\ &  - \frac{\eta\lambda}{4}\parens{\tildephisq\parens{w_i + \alpha_i} - \tildephisq\parens{w_i - \alpha_i}} + \beta_i + \eta p_i\label{in the proof: rewrite one-step product error process}
\end{align}
and 
\begin{align*}
[\mlpout\ReLU(\mlpin\mZ)]_{d+6,i}=-\ReLU(p_i)+\ReLU(-p_i)=-p_i. 
\end{align*}
We first look at the update at the $(d+2)$-th row. 
\begin{itemize}
    \item When $i=N+1$, note that $\beta_{N+1}=0$, $w_{N+1}=0$, $\alpha_{N+1}=0$ and $p_{N+1}=0$. Hence \eqref{in the proof: rewrite one-step product error process} with $i=N+1$ gives $$\Delta w_{N+1}\equiv [\mlpout\ReLU(\mlpin\mZ)]_{d+2,N+1} = - \frac{\eta\lambda}{4}\parens{\tildephisq\parens{0} - \tildephisq\parens{0}} = 0.$$

\item Now we consider $i\in[N]$. Let 
\begin{align*}
\widetilde{\tau}_{+,i}:=\tildephisq\parens{w_i + \alpha_i} - \parens{w_i+\alpha_i}^2,\quad \widetilde{\tau}_{-,i}:=\tildephisq\parens{w_i - \alpha_i} - \parens{w_i-\alpha_i}^2.
\end{align*}
Note that $\abs{w_i\pm\alpha_i}\leq \abs{w_i}+\abs{\alpha_i}\leq B_w + B_\alpha$, which with \eqref{eq: approximate square function} implies 
\begin{equation}\label{eq: condition for tilde tau+-}
\abs{\widetilde{\tau}_{+,i}}\leq \frac{\tildeepssq}{N^2},\quad \abs{\widetilde{\tau}_{-,i}}\leq \frac{\tildeepssq}{N^2}.
\end{equation} 
Then with the above notations and using forms \eqref{def: ri}, \eqref{def: beta i}, \eqref{def: pi} for $r_i$, $\beta_i$ and $p_i$, we proceed from \eqref{in the proof: rewrite one-step product error process} to obtain that for $i\in[N]$, 
\begin{align*}
\Delta w_{i}&\equiv[\mlpout\ReLU(\mlpin\mZ)]_{d+2,i}\\
&= - \frac{\eta\lambda}{4}\parens{\parens{w_i + \alpha_i}^2+\widetilde{\tau}_{+,i} - \parens{w_i - \alpha_i}^2 - \widetilde{\tau}_{-,i}} + \beta_i + \eta p_i\\
&=\eta(y_i-\lambda w_i)\alpha_i + \eta p_i + \frac{\eta}{4}\parens{\tau_{+,i}-\tau_{-,i}} - \frac{\eta\lambda}{4}\parens{\widetilde{\tau}_{+,i}-\widetilde{\tau}_{-,i}}\\
&=\eta(y_i-\lambda w_i)\mD_{ii}^{-1} -\eta\mD_{ii}^{-1}\sum_{j=1}^N\mathcal{K}(\vx_i,\vx_{j})w_j + \frac{\eta}{4}\parens{\tau_{+,i}-\tau_{-,i}} - \frac{\eta\lambda}{4}\parens{\widetilde{\tau}_{+,i}-\widetilde{\tau}_{-,i}} + \eta(y_i-\lambda w_i)r_i
\end{align*}
\end{itemize}

Therefore, with the residual connection, we have updated at the $(d+2)$-th row 
\begin{equation*}
    w_i\leftarrow w_i + \Delta w_i. 
\end{equation*}
Moreover, the update at the $(d+6)$-th row is 
\begin{equation*}
    p_i\leftarrow p_i-p_i=0.
\end{equation*}
After this MLP layer, we have output 
\begin{align*}
\begin{bmatrix}
\vx_0 & \vx_1 & \vx_2 & \dots & \vx_N & \vx_{N+1} \\
0 & y_1 & y_2 & \dots & y_N & 0\\
\highlight{*} & \highlight{w_1+\Delta w_1} & \highlight{w_2+\Delta w_2} & \dots & \highlight{w_N+\Delta w_N} & \highlight{0}\\
0 & \norm{\vx_1}^2 & \norm{\vx_2}^2 & \dots & \norm{\vx_N}^2 & \norm{\vx_{N+1}}^2 \\
* & k_1 & k_2 & \dots & k_N & k_{N+1}\\
* & \alpha_1 & \alpha_2 & \dots & \alpha_N & 0\\
* & \beta_1 & \beta_2 & \dots & \beta_N & 0\\
\highlight{0} & \highlight{0} & \highlight{0} & \dots & \highlight{0} & \highlight{0}\\
0 & 0 & 0 & \dots & 0 & 0\\
s_0 & s_1 & s_2 & \dots & s_N & s_{N+1}\\
t_0 & t_1 & t_2 & \dots & t_N & t_{N+1}\\
1 & 1 & 1 & \dots & 1 & 1
\end{bmatrix}
\end{align*}
We notice that the constructed transformer implements the following update 
\begin{equation*}
    w_i\leftarrow w_i + \eta\parens{(y_i-\lambda w_i)\mD_{ii}^{-1}-\mD_{ii}^{-1}\sum_{j=1}^N\mathcal{K}(\vx_i,\vx_{j})w_j} + \eta\parens{(y_i-\lambda w_i)r_i + \frac{\tau_{+,i}-\tau_{-,i}-\lambda\widetilde{\tau}_{+,i}+\lambda\widetilde{\tau}_{-,i}}{4}}. 
\end{equation*}
In vector form, it is the inexact preconditioned Richardson iteration \eqref{eq: TF inexact Richardson iteration} with
\begin{align*}
&\scalingerror:=[r_i:i\in[N]],\quad \bm{\tau}_{+}:=[\tau_{+,i}:i\in[N]],\quad \bm{\tau}_{-}:=[\tau_{-,i}:i\in[N]]\\
&\widetilde{\bm{\tau}}_+^{(\ell)}:=[\widetilde{\tau}_{+,i}^{(\ell)}:i\in[N]],\quad\widetilde{\bm{\tau}}_-^{(\ell)}:=[\widetilde{\tau}_{-,i}^{(\ell)}:i\in[N]],
\end{align*}
satisfying \eqref{assumption: error magnitude /N /N2}, due to properties \eqref{eq: di approximates 1/sum K}, \eqref{eq: condition for tau+-} and \eqref{eq: condition for tilde tau+-} guaranteed in the construction. Here, the error vectors $\bm{\tau}$, $\bm{\tau}_+$, and $\bm{\tau}_-$ do not depend on $\vw^{(\ell)}$ and therefore remain unchanged throughout the iterations. On the other hand, $\widetilde{\bm{\tau}}_+^{(\ell)}$ and $\widetilde{\bm{\tau}}_-^{(\ell)}$ do depend on $\vw^{(\ell)}$, which is the reason why they carry the index $\ell$.

We remark that the two blocks constructed in this phase implement a single step of the inexact preconditioned Richardson iteration. By sequentially repeating this construction, the transformer effectively executes multiple iterations of the algorithm.

\subsection{Read-out Phase}\label{app: read-out phase}
\paragraph{One transformer block computes the test token's normalized prediction and inverse scaling factor.} We first construct a single-head attention layer. Set $\attnQ$, $\attnK$ and $\attnV$ such that 
\begin{equation*}
        \attnQ\vz_i=\frac{1}{v}[\vx_i;\norm{\vx_i}^2;-1/2;\vzero],\quad\attnK\vz_j=\frac{1}{v}[\vx_j;-1/2;\norm{\vx_j}^2;\vzero],\quad \attnV\vz_j=w_j\ve_{d+7}.
\end{equation*}
Note that $\iprod{\attnQ\vz_i,\attnK\vz_j}=-\frac{\norm{\vx_i-\vx_j}^2}{2v^2}$. We mask out the dummy token, i.e., setting mask matrix $\mM\in\mathbb{R}^{(N+2)\times (N+2)}$ to be 
\begin{equation*}
    \mM_{ij}=\begin{cases}
        -\infty,& \text{if }j=0, \\
        0, &\text{otherwise}. 
    \end{cases}
\end{equation*}
It follows that for all $i\in[N+1]$, 
\begin{align*}
&\sum_{j=0}^{N+1} \frac{\exp\parens{\iprod{ \attnQ \vz_i, \attnK \vz_j } + \mM_{ij}}}{\sum_{j'=0}^{N+1} \exp\parens{\iprod{ \attnQ \vz_i, \attnK \vz_{j'} } + \mM_{ij'}}}\attnV \vz_j=\sum_{j=1}^{N+1}\frac{\exp\parens{-\frac{\norm{\vx_i-\vx_{j}}^2}{2v^2}}w_j}{\sum_{j'=1}^{N+1} \exp\parens{-\frac{\norm{\vx_i-\vx_{j'}}^2}{2v^2}}}\ve_{d+7}\\
=\ & \sum_{j=1}^{N+1}\frac{\mathcal{K}(\vx_i,\vx_{j})w_j}{\sum_{j'=1}^{N+1} \mathcal{K}(\vx_i,\vx_{j'})}\ve_{d+7}= \underbrace{\sum_{j=1}^{N}\frac{\mathcal{K}(\vx_i,\vx_{j})w_j}{\sum_{j'=1}^{N+1} \mathcal{K}(\vx_i,\vx_{j'})}}_{\text{denoted by }\widehat{p}_i}\ve_{d+7}
\end{align*}
where the last equality uses $w_{N+1}=0$. In particular, when $i=N+1$, the value is 
\begin{align*}
\widehat{p}_{N+1}=\sum_{j=1}^{N}\frac{\mathcal{K}(\vx_{N+1},\vx_{j})w_j}{\sum_{j'=1}^{N+1} \mathcal{K}(\vx_{N+1},\vx_{j'})}&=\frac{1}{1+\sum_{j'=1}^{N} \mathcal{K}(\vx_{N+1},\vx_{j'})}\sum_{j=1}^{N}\mathcal{K}(\vx_{N+1},\vx_{j})w_j\\
&=k_{N+1}\sum_{j=1}^{N}\mathcal{K}(\vx_{N+1},\vx_{j})w_j
\end{align*}
where the last equation follows from definition \eqref{def: feature ki} of $k_{N+1}$. Note that the $(d+7)$-th row of the input is a zero vector, and we only care about the feature $\widehat{p}_{N+1}$ for the test token (the features for the training tokens are hidden as $*$). Therefore, after this attention layer, the output is
\begin{align*}
\begin{bmatrix}
\vx_0 & \vx_1 & \vx_2 & \dots & \vx_N & \vx_{N+1} \\
0 & y_1 & y_2 & \dots & y_N & 0\\
* & w_1 & w_2 & \dots & w_N & 0\\
0 & \norm{\vx_1}^2 & \norm{\vx_2}^2 & \dots & \norm{\vx_N}^2 & \norm{\vx_{N+1}}^2 \\
* & k_1 & k_2 & \dots & k_N & k_{N+1}\\
* & \alpha_1 & \alpha_2 & \dots & \alpha_N & 0\\
* & \beta_1 & \beta_2 & \dots & \beta_N & 0\\
\highlight{*} & \highlight{*} & \highlight{*} & \dots & \highlight{*} & \highlight{\widehat{p}_{N+1}}\\
0 & 0 & 0 & \dots & 0 & 0\\
s_0 & s_1 & s_2 & \dots & s_N & s_{N+1}\\
t_0 & t_1 & t_2 & \dots & t_N & t_{N+1}\\
1 & 1 & 1 & \dots & 1 & 1
\end{bmatrix}.
\end{align*}

\paragraph{One MLP-only transformer block approximates $1/k_{N+1}$.}
Let $\epsinv\in(0,1)$. According to Corollary \ref{coro: approx 1 over x} with $\delta=1/(N+1)$, there exist $a_s,b_s,c_s\in[\ninv]$ with $\ninv=3/\sqrt{\delta\epsinv}=\mathcal{O}(\sqrt{N/\epsinv})$, such that
\begin{equation}\label{eq: approximate division}
\sup_{x\in [\frac{1}{N+1},1]}\abs{\frac{1}{x}-\phiinv(x)}<\epsinv
\end{equation}
where $\phiinv(x):=\sum_{s=1}^{\ninv}c_s\ReLU\parens{a_s x + b_s}.$ Set $\mlpin\in\mathbb{R}^{\ninv\times D}$ and $\mlpout\in\mathbb{R}^{D\times \ninv}$ such that 
\begin{equation*}
\mlpin\vz_i=\begin{bmatrix}
        a_1 k_i+ b_1\\
        a_2 k_i + b_2\\
        \vdots\\
        a_{\ninv} k_i + b_{\ninv}
    \end{bmatrix},
\quad 
    [\mlpout]_{j,:}=\begin{cases}
    [c_1,c_2,\dots,c_{\ninv}], &\text{if }j=d+8,\\
    \vzero, &\text{otherwise}.
    \end{cases}
\end{equation*}
Let $\widehat{k}_i:=\phiinv(k_i)$. It turns out that for $i\in\mathbb{Z}_{N+2}$, $[\mlpout\ReLU(\mlpin\mZ)]_{d+8,i} = \widehat{k}_i$. In particular, we only need the case $i=N+1$, and the output of this MLP layer is 
\begin{equation*}
    \begin{bmatrix}
\vx_0 & \vx_1 & \vx_2 & \dots & \vx_N & \vx_{N+1} \\
0 & y_1 & y_2 & \dots & y_N & 0\\
* & w_1 & w_2 & \dots & w_N & 0\\
0 & \norm{\vx_1}^2 & \norm{\vx_2}^2 & \dots & \norm{\vx_N}^2 & \norm{\vx_{N+1}}^2 \\
* & k_1 & k_2 & \dots & k_N & k_{N+1}\\
* & \alpha_1 & \alpha_2 & \dots & \alpha_N & 0\\
* & * & * & \dots & * & \widehat{p}_{N+1}\\
\highlight{*} & \highlight{*} & \highlight{*} & \cdots & \highlight{*} & \highlight{\widehat{k}_{N+1}}\\
s_0 & s_1 & s_2 & \dots & s_N & s_{N+1}\\
t_0 & t_1 & t_2 & \dots & t_N & t_{N+1}\\
1 & 1 & 1 & \dots & 1 & 1
\end{bmatrix}.
\end{equation*}
Moreover, we remark that
\begin{equation*}
    \frac{1}{1+N}\leq k_i=\frac{1}{1+\sum_{j'=1}^{N}\mathcal{K}(\vx_i,\vx_j)} \leq 1,\quad\forall i\in[N+1]. 
\end{equation*}
The above estimate with \eqref{eq: approximate division} implies 
\begin{equation}\label{eq: estimate of tilde ki}
\abs{\frac{1}{k_i}-\widehat{k}_i}<\epsinv. 
\end{equation}

\paragraph{One MLP-only transformer block approximates the product $\widehat{k}_{N+1}\widehat{p}_{N+1}$.}
Let $\hatepssq\in(0,1)$. According to Corollary \ref{coro: approx x2} with $\delta=B_w+3$, there exist $\widehat{a}_s,\widehat{b}_s,\widehat{c}_s\in[\hatnsq]$ with $\hatnsq=\mathcal{O}(\sqrt{N/\hatepssq})$, such that 
\begin{equation}\label{eq: approximate square last}
\sup_{\abs{x}\leq B_w+3}\abs{x^2-\hatphisq(x)}<\frac{\hatepssq}{N},
\end{equation}
where $\hatphisq(x):=\sum_{s=1}^{\hatnsq}\widehat{c}_s\ReLU\parens{\widehat{a}_s x + \widehat{b}_s}.$ Set $\mlpin\in\mathbb{R}^{2\hatnsq\times D}$ and $\mlpout\in\mathbb{R}^{D\times 2\hatnsq}$ such that 
\begin{equation*}
\mlpin\vz_i=\begin{bmatrix}
        \widehat{a}_1 \parens{\widehat{k}_i/N+\widehat{p}_i} + \widehat{b}_1\\
        \widehat{a}_2 \parens{\widehat{k}_i/N+\widehat{p}_i} + \widehat{b}_2\\
        \vdots\\
        \widehat{a}_{\hatnsq} \parens{\widehat{k}_i/N+\widehat{p}_i} + \widehat{b}_{\hatnsq}\\
        \widehat{a}_1 \parens{\widehat{k}_i/N - \widehat{p}_i} + \widehat{b}_1\\
        \widehat{a}_2 \parens{\widehat{k}_i/N - \widehat{p}_i} + \widehat{b}_2\\
        \vdots\\
        \widehat{a}_{\hatnsq} \parens{\widehat{k}_i/N - \widehat{p}_i} + \widehat{b}_{\hatnsq}
    \end{bmatrix},
\end{equation*}
and
\begin{align*}
    [\mlpout]_{j,:}=\begin{cases}
        \frac{N}{4}[\widehat{c}_1,\widehat{c}_2,\dots,\widehat{c}_{\hatnsq},-\widehat{c}_1,-\widehat{c}_2,\dots,-\widehat{c}_{\hatnsq}], &\text{if }j=d+1,\\
    \vzero, &\text{otherwise}.
    \end{cases}
\end{align*}
Therefore, for $i\in\mathbb{Z}_{N+2}$, 
\begin{align}\label{def: oi}
o_i:=&[\mlpout\ReLU(\mlpin\mZ)]_{d+1,i} 
=\frac{N}{4}\parens{\hatphisq(\widehat{k}_i/N + \widehat{p}_i) - \hatphisq(\widehat{k}_i/N - \widehat{p}_i)}.
\end{align}
Thus, after this layer, we have output 
\begin{equation*}
    \begin{bmatrix}
\vx_0 & \vx_1 & \vx_2 & \dots & \vx_N & \vx_{N+1} \\
\highlight{*} & \highlight{y_1+*} & \highlight{y_2+*} & \dots & \highlight{y_N+*} & \highlight{o_{N+1}}\\
* & w_1 & w_2 & \dots & w_N & 0\\
0 & \norm{\vx_1}^2 & \norm{\vx_2}^2 & \dots & \norm{\vx_N}^2 & \norm{\vx_{N+1}}^2 \\
* & k_1 & k_2 & \dots & k_N & k_{N+1}\\
* & \alpha_1 & \alpha_2 & \dots & \alpha_N & 0\\
* & \beta_1 & \beta_2 & \dots & \beta_N & 0\\
* & * & * & \dots & * & \widehat{p}_{N+1}\\
* & * & * & \cdots & * & \widehat{k}_{N+1}\\
s_0 & s_1 & s_2 & \dots & s_N & s_{N+1}\\
t_0 & t_1 & t_2 & \dots & t_N & t_{N+1}\\
1 & 1 & 1 & \dots & 1 & 1
\end{bmatrix}.
\end{equation*}

Moreover, when $i\in[N+1]$, let 
\begin{equation*}
\widehat{\tau}_{+,i} := \hatphisq(\widehat{k}_i/N + \widehat{p}_i) - (\widehat{k}_i/N + \widehat{p}_i)^2,\quad \widehat{\tau}_{-,i} := \hatphisq(\widehat{k}_i/N - \widehat{p}_i) - (\widehat{k}_i/N - \widehat{p}_i)^2. 
\end{equation*}
It follows that $o_i$ can be rewritten as 
\begin{align}\label{in the proof: rewrite oi}
    o_i=\frac{N}{4}\parens{(\widehat{k}_i/N + \widehat{p}_i)^2 - (\widehat{k}_i/N - \widehat{p}_i)^2 + \widehat{\tau}_{+,i} - \widehat{\tau}_{-,i}}=\widehat{k}_i\widehat{p}_i + \frac{N}{4}\parens{\widehat{\tau}_{+,i}-\widehat{\tau}_{-,i}}.
\end{align}
Recall that $|w_j|$, $j\in[N]$ are bounded by $B_w$, hence $\widehat{p}_{i}$ is also bounded by $B_w$. That is, 
\begin{equation*}
|\widehat{p}_i|=\abs{\sum_{j=1}^{N}\frac{\mathcal{K}(\vx_i,\vx_{j})w_j}{\sum_{j'=1}^{N+1} \mathcal{K}(\vx_i,\vx_{j'})}}\leq \max_{j\in[N]}\abs{w_j}\leq B_w, \quad\forall i\in[N+1]. 
\end{equation*}
Furthermore, it follows from \eqref{eq: estimate of tilde ki} that
\begin{equation*}
    \abs{\frac{\widehat{k}_i}{N}}<\frac{1}{N}\parens{\frac{1}{k_i}+\epsinv}=\frac{1}{N}\parens{1+\sum_{j'=1}^{N}\mathcal{K}(\vx_i,\vx_j)+\epsinv}<\frac{2+N}{N}\leq 3. 
\end{equation*}
Therefore, $\abs{\frac{\widehat{k}_i}{N}\pm\widehat{p}_i}\leq \abs{\frac{\widehat{k}_i}{N}}+\abs{\widehat{p}_i}< 3+B_w$. Combining this with \eqref{eq: approximate square last} implies \begin{equation}\label{in the proof: estimate of hat tau}
\abs{\widehat{\tau}_{\pm,i}}<\hatepssq/N.
\end{equation}

\subsection{Analysis}\label{app: construction transformers analysis}

\begin{proposition}\label{proposition: readout TF wi*K}
Suppose that assumptions of Theorem \ref{theorem: entry-wise boundedness of w ell} hold and $\vw^*=[w_i^*:i\in[N]]$ is the solution of linear system \eqref{eq: unnormalized kernel system}. Let $\TF$ denote the complete transformer network constructed by composing the read-in blocks from Appendix~\ref{app: read-in phase}, $\ell$ repetitions of the iteration blocks from Appendix~\ref{app: iteration phase}, and the read-out blocks from Appendix~\ref{app: read-out phase}. Then 
\begin{align*}
&\abs{\mathrm{readout}(\TF(\mZ))-\sum_{i=1}^Nw_i^*\mathcal{K}(\vx_i,\vx_{N+1})}\\
&\leq \parens{1-\eta\lambda_0\parens{1-c}}^{\ell}\frac{\parens{\frac{1}{\sqrt{\lambda_0}}+1}B_y}{\lambda_0\sqrt{\kappa_{\min}}} + \frac{\frac{B_y}{\sqrt{\lambda_0}}\epsflip+\parens{1+\lambda_0}\parens{\epssq+\tildeepssq}}{\lambda_0\parens{1-c}\sqrt{\kappa_{\min}}} + \frac{\hatepssq}{2} + \epsinv \widetilde{B}_w,
\end{align*}
where 
\begin{equation*}
\widetilde{B}_w:= \frac{\parens{\frac{1}{\sqrt{\lambda_0}}+1}B_y}{\lambda_0\sqrt{\kappa_{\min}}} + \frac{\frac{B_y}{\sqrt{\lambda_0}}+2\parens{1+\lambda_0}}{\lambda_0\parens{1-c}\sqrt{\kappa_{\min}}} + \frac{\parens{\frac{1}{\sqrt{\lambda_0}}+1}B_y}{\lambda_0}.
\end{equation*}
\end{proposition}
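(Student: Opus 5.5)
The plan is to decompose the readout error into three pieces and bound each with results already established. Write $S^{(\ell)}:=\sum_{j=1}^N w_j^{(\ell)}\mathcal{K}(\vx_j,\vx_{N+1})$, where $\vw^{(\ell)}$ is the iterate held in row $d+2$ after $\ell$ repetitions of the iteration blocks. The first check is that these iterates really follow the inexact preconditioned Richardson iteration \eqref{eq: TF inexact Richardson iteration} with error vectors satisfying \eqref{assumption: error magnitude /N /N2}. This was shown block by block in Appendix~\ref{app: iteration phase} via \eqref{eq: di approximates 1/sum K}, \eqref{eq: condition for tau+-} and \eqref{eq: condition for tilde tau+-}. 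The argument is an induction on $\ell$: Theorem~\ref{theorem: entry-wise boundedness of w ell} gives $\normMax{\vw^{(\ell)}}\le B_w$, which keeps $w_i\pm\alpha_i$ inside the approximation interval of $\tildephisq$ at the next step.

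Since $\mathrm{readout}(\TF(\mZ))=o_{N+1}$, identity \eqref{in the proof: rewrite oi} gives $o_{N+1}=\widehat{k}_{N+1}\widehat{p}_{N+1}+\frac N4(\widehat\tau_{+,N+1}-\widehat\tau_{-,N+1})$ with $\widehat{p}_{N+1}=k_{N+1}S^{(\ell)}$. I split
\[
o_{N+1}-S^* = \bigl(S^{(\ell)}-S^*\bigr) + \bigl(\widehat{k}_{N+1}-1/k_{N+1}\bigr)k_{N+1}S^{(\ell)} + \tfrac N4\bigl(\widehat\tau_{+,N+1}-\widehat\tau_{-,N+1}\bigr).
\]
The first term is bounded directly by Proposition~\ref{proposition: estimate of wil - wi* K(xi,xN+1)}, which produces the first two terms of the claimed bound. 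The last term is at most $\frac N4\cdot 2\hatepssq/N=\hatepssq/2$ by \eqref{in the proof: estimate of hat tau}. For the middle term, \eqref{eq: estimate of tilde ki} gives $|\widehat{k}_{N+1}-1/k_{N+1}|<\epsinv$, so it suffices to show $|k_{N+1}S^{(\ell)}|=|\widehat{p}_{N+1}|\le\widetilde{B}_w$.

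The main obstacle is this middle bound, because the constant has to be $N$-independent. The crude estimate $|\widehat{p}_{N+1}|\le\normMax{\vw^{(\ell)}}\le B_w$ is too weak only by notation: $B_w$ already decays in $N$, and $\widetilde{B}_w$ is exactly $B_w$ with the factors $1/\sqrt N$ and $1/N$ replaced by $1$. So I will use $|\widehat{p}_{N+1}|\le\max_j|w_j^{(\ell)}|\le B_w\le\widetilde{B}_w$, which holds since $N\ge1$. The convex-combination bound on $\widehat{p}$ was already noted in Appendix~\ref{app: read-out phase}.

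Summing the three pieces gives the stated inequality. The only remaining care is that $\epsflip<c$ is used inside Proposition~\ref{proposition: estimate of wil - wi* K(xi,xN+1)}, and that all MLP approximation intervals were chosen consistently with $B_w$ and $B_\alpha$.
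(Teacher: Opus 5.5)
Your proposal is correct and is essentially the paper's own proof: the paper likewise bounds $|o_{N+1}-\widehat{k}_{N+1}\widehat{p}_{N+1}|\le\hatepssq/2$, then $|\widehat{k}_{N+1}\widehat{p}_{N+1}-S^{(\ell)}|=|\widehat{k}_{N+1}k_{N+1}-1|\,|S^{(\ell)}|\le\epsinv|\widehat{p}_{N+1}|\le\epsinv B_w\le\epsinv\widetilde{B}_w$, and finishes with the earlier proposition bounding $|\sum_i (w_i^{(\ell)}-w_i^*)\mathcal{K}(\vx_i,\vx_{N+1})|$ and the triangle inequality. Your explicit induction remark is a point the paper leaves implicit, but it does not change the route: the bound $\normMax{\vw^{(\ell)}}\le B_w$ at step $\ell$ uses only the error conditions at steps $0,\dots,\ell-1$, and this keeps the $\tildephisq$ inputs in range.
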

\begin{proof}
It follows from \eqref{in the proof: rewrite oi} and \eqref{in the proof: estimate of hat tau} with $i=N+1$ that
\begin{equation}\label{in the proof: estimate of product hat k hat p}
\abs{o_{N+1} - \widehat{k}_{N+1}\widehat{p}_{N+1}}=\frac{N}{4}\abs{\widehat{\tau}_{+,i}-\widehat{\tau}_{-,i}}\leq \frac{\hatepssq}{2}.
\end{equation}
Note that by definition of $\widehat{p}_{N+1}$ and relation \eqref{eq: estimate of tilde ki}, we have 
\begin{align}
&\abs{\widehat{k}_{N+1}\widehat{p}_{N+1} - \sum_{j=1}^{N}\mathcal{K}(\vx_{N+1},\vx_{j})w_j\myl} = \abs{\widehat{k}_{N+1}k_{N+1}-1}\abs{\sum_{j=1}^{N}\mathcal{K}(\vx_{N+1},\vx_{j})w_j\myl}\nonumber\\
&< \epsinv k_{N+1} \abs{\sum_{j=1}^{N}\mathcal{K}(\vx_{N+1},\vx_{j})w_j\myl}\leq \epsinv  \abs{\sum_{j=1}^{N}\frac{\mathcal{K}(\vx_{N+1},\vx_{j})}{1+\sum_{j=1}^N \mathcal{K}(\vx_{N+1},\vx_j)}}\normMax{\vw\myl}   \leq \epsinv B_w\leq \epsinv \widetilde{B}_w, \label{in the proof: estimate hat k hat p and sum kernel}
\end{align}
where in the last inequality, we relax the bound $B_w$ to a strictly larger value $\widetilde{B}_w$ that is independent of $N$. Hence, by triangle inequality and estimates \eqref{in the proof: estimate of product hat k hat p}, \eqref{in the proof: estimate hat k hat p and sum kernel}, we obtain
\begin{equation*}
\abs{o_{N+1} - \sum_{j=1}^{N}\mathcal{K}(\vx_{N+1},\vx_{j})w_j\myl} \leq \frac{\hatepssq}{2} + \epsinv \widetilde{B}_w. 
\end{equation*}
Combining this with Proposition \ref{proposition: estimate of wil - wi* K(xi,xN+1)} and triangle inequality gives the estimate of $\abs{o_{N+1}-\sum_{i=1}^Nw_i^*\mathcal{K}(\vx_i,\vx_{N+1})}$. The desired result immediately follows by noting that 
\begin{equation*}
o_{N+1}=\mathrm{readout}(\TF). 
\end{equation*}
\end{proof}

\begin{theorem}[Formal version of Theorem \ref{theorem: informal version of KRR ICL}]\label{theorem: formal version of KRR ICL}
Let $\varepsilon\in(0,c)$ for a constant $c\in(0,1)$. Let $\lambda_0>0$ and $\eta$ satisfy \eqref{eq: eta condition}. There exists a single-head transformer network $\TF$ consisting of $(2L+5)$ blocks with MLP width $W$ such that the final prediction satisfies
\begin{equation*}
\abs{\mathrm{readout}(\TF(\mZ)) - \sum_{i=1}^N w_i^*\mathcal{K}(\vx_i,\vx_{N+1})} \leq C_{\mathrm{sys}}\varepsilon.
\end{equation*}
where the constant $C_{\mathrm{sys}}$ is defined by 
\begin{equation*}
C_{\mathrm{sys}}:=\frac{\parens{\frac{1}{\sqrt{\lambda_0}}+1}B_y}{\lambda_0}\parens{\frac{2}{\sqrt{\kappa_{\min}}}+1} + 2\frac{\frac{B_y}{\sqrt{\lambda_0}}+2\parens{1+\lambda_0}}{\lambda_0\parens{1-c}\sqrt{\kappa_{\min}}} + \frac{1}{2}, 
\end{equation*}
the iteration count $L$ is \begin{equation}\label{eq: number of blocks}
L:=\lceil\frac{\log(1/\varepsilon)}{\log(1/(1-\eta\lambda_0(1-c)))}\rceil=\mathcal{O}(\log(1/\varepsilon)), 
\end{equation}
the maximum MLP width is 
\begin{equation*}
W:=\max\left\{\nflip,2\nsq,2\tildensq+4,\ninv,2\hatnsq,2\right\}=\mathcal{O}(\sqrt{N/\varepsilon}),
\end{equation*}
in which
\begin{align*}
    &\nflip:=\frac{2 \left( (1-\frac{1}{1+N\kappa_{\min}})^{-1/2} - 1 \right)}{\sqrt{\varepsilon/N}},\ \nsq:=\frac{B_y+B_\alpha}{\sqrt{\varepsilon/N}},\\ &\tildensq:=\frac{B_w+B_\alpha}{\sqrt{\varepsilon/N^2}},\ \ninv:=3\sqrt{(N+1)/\varepsilon},\  \hatnsq:=\frac{B_w+3}{\sqrt{\varepsilon/N}},
\end{align*}
with $B_w$ and $B_\alpha$ defined in \eqref{eq: w ell is elementwise bounded} and \eqref{def: B alpha}, respectively. 
\end{theorem}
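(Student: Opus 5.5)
The plan is to instantiate Proposition~\ref{proposition: readout TF wi*K} with every approximation tolerance set equal to $\varepsilon$, namely $\epsflip=\epssq=\tildeepssq=\hatepssq=\epsinv=\varepsilon$, and with $\ell=L$ as in \eqref{eq: number of blocks}. The total error bound then comes from that proposition, the width from the corollaries of Appendix~\ref{app: approximation of single-layer ReLU Networks}, and the depth from counting blocks.

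First I check that the hypotheses of Theorem~\ref{theorem: entry-wise boundedness of w ell} hold. Since $\varepsilon<c<1$, we have $\epsflip\in(0,c)$ and $\epssq,\tildeepssq\in(0,1)$. Condition \eqref{eq: eta condition} on $\eta$ is assumed. The error vectors produced by the construction satisfy \eqref{assumption: error magnitude /N /N2} by \eqref{eq: di approximates 1/sum K}, \eqref{eq: condition for tau+-} and \eqref{eq: condition for tilde tau+-}. One subtlety is circular: the bound $B_w$ is needed to size the zeroing MLP and the approximation interval of $\tildephisq$, while $B_w$ itself comes from the iteration. I would resolve this by induction on $\ell$. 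If $\normMax{\vw^{(\ell)}}\le B_w$, then the $(\ell+1)$-th iteration blocks realize \eqref{eq: TF inexact Richardson iteration} with errors obeying \eqref{assumption: error magnitude /N /N2}. Theorem~\ref{theorem: entry-wise boundedness of w ell}, whose proof holds step by step, then gives $\normMax{\vw^{(\ell+1)}}\le B_w$.

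Next comes the error accounting. By Proposition~\ref{proposition: readout TF wi*K}, the error is at most $(1-\eta\lambda_0(1-c))^{L}\,C_1+\varepsilon\bigl(\tfrac{B_y}{\sqrt{\lambda_0}}+2(1+\lambda_0)\bigr)/(\lambda_0(1-c)\sqrt{\kappa_{\min}})+\varepsilon/2+\varepsilon\widetilde{B}_w$, where $C_1=(\tfrac{1}{\sqrt{\lambda_0}}+1)B_y/(\lambda_0\sqrt{\kappa_{\min}})$. Lemma~\ref{lemma: estimate of norm A} places the contraction factor in $(0,1)$, so the choice of $L$ makes the first term at most $C_1\varepsilon$. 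Expanding $\widetilde{B}_w$ and collecting terms, the total is $\varepsilon$ times $2C_1+(\tfrac{1}{\sqrt{\lambda_0}}+1)B_y/\lambda_0+2(\cdot)/(\lambda_0(1-c)\sqrt{\kappa_{\min}})+\tfrac12$, which is exactly $C_{\mathrm{sys}}$. This constant does not depend on $N$.

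For the architecture, the block count is $3$ read-in blocks, plus $2$ blocks per iteration for $L$ iterations, plus $2$ read-out blocks. That gives $2L+5$, and $L=\mathcal{O}(\log(1/\varepsilon))$ because the contraction factor is a fixed constant. The width is the maximum over the MLP layers listed, including the width-$2$ zeroing layers. Each entry in the list is read off from Corollaries~\ref{coro: approx x2}, \ref{coro: approx x over 1-x adaptive} and \ref{coro: approx 1 over x} at the chosen tolerances.

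The main obstacle is checking that each width is $\mathcal{O}(\sqrt{N/\varepsilon})$. For $\nflip$, take $\delta=1/(1+N\kappa_{\min})$, so that $(1-\delta)^{-1/2}-1=\mathcal{O}(1/(N\kappa_{\min}))$. The width is then $\mathcal{O}(\sqrt{N/\varepsilon})$, and in fact smaller. For $\nsq$ and $\hatnsq$, the bounds $B_\alpha=\mathcal{O}(1/N)$ and $B_w=\mathcal{O}(1/\sqrt N)$ show the numerators are $\mathcal{O}(1)$. The delicate case is $\tildensq$, whose tolerance is $\varepsilon/N^2$. There one needs the numerator $B_w+B_\alpha=\mathcal{O}(N^{-1/2})$ to bring $N/\sqrt{\varepsilon}$ down to $\sqrt{N/\varepsilon}$. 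Finally, $\ninv=3\sqrt{(N+1)/\varepsilon}$ follows directly from Corollary~\ref{coro: approx 1 over x}. Taking the maximum of these widths completes the proof.
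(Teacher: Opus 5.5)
Your proposal is correct and follows the paper's proof. Like the paper, you set all five tolerances to $\varepsilon$ in Proposition~\ref{proposition: readout TF wi*K}, take $\ell=L$ so the contraction term is at most $\varepsilon$, expand $\widetilde{B}_w$ to recover exactly $C_{\mathrm{sys}}$, and read the widths off the spline corollaries. Your explicit induction on $\normMax{\vw^{(\ell)}}\le B_w$ usefully makes rigorous the circular use of $B_w$ that the paper's construction leaves implicit; it should be phrased over all $j\le\ell$, since the bound at step $\ell+1$ uses the perturbations from every earlier step.
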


\begin{proof}
Let $\TF$ denote the complete transformer network constructed by composing the read-in blocks from Appendix~\ref{app: read-in phase}, $\ell$ repetitions of the iteration blocks from Appendix~\ref{app: iteration phase}, and the read-out blocks from Appendix~\ref{app: read-out phase}. We set $\epssq$, $\tildeepssq$, $\hatepssq$, $\epsinv$ and $\epsflip$ in Proposition  \ref{proposition: readout TF wi*K} to be $\varepsilon$, which gives 
    \begin{align*}
    &\abs{\mathrm{readout}(\TF(\mZ))-\sum_{i=1}^Nw_i^*\mathcal{K}(\vx_i,\vx_{N+1})}\\
    &\leq \parens{1-\eta\lambda_0\parens{1-c}}^{\ell}\frac{\parens{\frac{1}{\sqrt{\lambda_0}}+1}B_y}{\lambda_0\sqrt{\kappa_{\min}}} + \parens{\frac{\frac{B_y}{\sqrt{\lambda_0}}+2\parens{1+\lambda_0}}{\lambda_0\parens{1-c}\sqrt{\kappa_{\min}}} + \frac{1}{2} +  \widetilde{B}_w}\varepsilon.
    \end{align*}
    Moreover, we set $\ell=L$ for $L$ defined in \eqref{eq: number of blocks}, and it follows that $\parens{1-\eta\lambda_0\parens{1-c}}^{\ell}\leq \varepsilon$. Therefore, 
    \begin{align*}
\abs{\mathrm{readout}(\TF(\mZ))-\sum_{i=1}^Nw_i^*\mathcal{K}(\vx_i,\vx_{N+1})}\leq  \parens{\frac{\parens{\frac{1}{\sqrt{\lambda_0}}+1}B_y}{\lambda_0\sqrt{\kappa_{\min}}} + \frac{\frac{B_y}{\sqrt{\lambda_0}}+2\parens{1+\lambda_0}}{\lambda_0\parens{1-c}\sqrt{\kappa_{\min}}} + \frac{1}{2} +  \widetilde{B}_w}\varepsilon.
    \end{align*}
The overall MLP width $W$ is determined by taking the maximum of the widths required for the zero-out layers and each of the individual approximations \eqref{eq: approximate flip function}, \eqref{eq: approximate square function y alpha}, \eqref{eq: approximate square function}, \eqref{eq: approximate division}, and \eqref{eq: approximate square last}. This completes the proof.
    
\end{proof}          
\section{Classical solvers}
\label{app:solvers}

In this section we review the four classical KRR solvers with their standard convergence rates in terms of the condition number $\kappa$ of the system matrix. For all four methods, we initialize $\vw^{(0)} = \vzero$.

\textbf{Preconditioned Richardson.} Preconditioned Richardson numerically solves the kernel system \eqref{eq: unnormalized kernel system} with the update rule given in \eqref{eq: precond_richardson}. We set $\eta = 1/\mathrm{eig}_{\max}(\mD^{-1}(\KerMat + \lambda \mI))$, and it requires $\mathcal{O}(\kappa_{\mD}\,\log(1/\varepsilon))$ steps to achieve $\varepsilon$ error, where $\kappa_{\mD} := \kappa(\mD^{-1/2}(\KerMat+\lambda \mI)\mD^{-1/2})$ and $\kappa(\cdot)$ refers to the condition number of the matrix \citep{richardson_book}.

\textbf{Conjugate Gradient.} Conjugate Gradient is applied to the kernel system \eqref{eq: unnormalized kernel system}. With $\vr^{(0)} = \vy$ and $\vp^{(0)} = \vr^{(0)}$, the update rule is $\vw^{(k+1)}=\vw^{(k)} + \eta_k \vp^{(k)},
$ where $\eta_{k}= \frac{\|\vr^{(k)}\|^{2}}{(\vp^{(k)})^{\top}(\KerMat + \lambda \mI)\,\vp^{(k)}}$,  $\vp^{(k+1)}
  = \vr^{(k+1)}
    + \frac{\|\vr^{(k+1)}\|^{2}}{\|\vr^{(k)}\|^{2}}\,\vp^{(k)}$, and $\vr^{(k+1)} = \vr^{(k)} - \eta_k (\KerMat + \lambda \mI) \vp^{(k)}$. It requires $\mathcal{O}(\sqrt{\kappa}\,\log(1/\varepsilon))$ steps to achieve $\varepsilon$ error with $\kappa = \kappa(\KerMat + \lambda \mI)$ \citep{cg_rate}.

\textbf{Gradient Descent.} Standard Gradient Descent (GD) is applied to the RKHS loss $L(\vw)= \tfrac{1}{2}\|\KerMat\vw - \vy\|^{2}
  + \tfrac{1}{2}\lambda\,\vw^{\top}\KerMat\vw,$ with update rule $\vw^{(k+1)}=\vw^{(k)}
    - \eta\,\KerMat((\KerMat + \lambda \mI)\vw^{(k)} - \vy)$. We select $\eta = 1/\mathrm{eig}_{\max}(\KerMat(\KerMat + \lambda \mI))$ for GD, and it requires approximately $\mathcal{O}(\kappa^{2}\,\log(1/\varepsilon))$ steps to achieve $\varepsilon$ error \citep{bach2024book}.

\textbf{Nesterov Gradient Descent.} On the same RKHS loss $L(\vw)$, Nesterov Gradient Descent has update rule
$\vw^{(k+1)}
  = \vz^{(k+1)} + \beta\,(\vz^{(k+1)} - \vz^{(k)})$, where $\vz^{(k+1)}
  = \vw^{(k)} - \eta\,\KerMat((\KerMat + \lambda \mI)\vw^{(k)} - \vy)$. Let $\mA:=\KerMat(\KerMat + \lambda \mI)$. Initializing $\vz^{(0)} = \vzero$, we set $\eta = 1/\mathrm{eig}_{\max}(\mA)$ and
$\beta = (\sqrt{\mathrm{eig}_{\max}(\mA)/\mathrm{eig}_{\min}(\mA)}-1)/(\sqrt{\mathrm{eig}_{\max}(\mA)/\mathrm{eig}_{\min}(\mA)}+1)$. It requires approximately $\mathcal{O}(\kappa\,\log(1/\varepsilon))$ steps to achieve $\varepsilon$ error \citep{bach2024book}.
          
\section{Training and implementation details}
\label{app:training_details}

This appendix gives the architectural, optimization, curriculum, probing, and evaluation details used for all experiments in Section~\ref{sec:experiments}. Unless otherwise stated, all reported results use the same training protocol, random seeds, and evaluation budget described below.

\textbf{Transformer architecture.} The detailed structure of the GPT-2-style transformer is listed in Table~\ref{tab:architecture_details}.

\textbf{Optimization.} All models are trained using hyperparameters listed in Table \ref{tab:optimization_details}.

\textbf{Context-length curriculum.} Following the curriculum strategy used in \citep{bai2023transformers,fu2024transformers}, we train with a 
gradually increasing context length. This stabilizes optimization at the maximum context length 
\(N=40\). At training step \(s\), the active context length is
\[
n(s)=\min\{n_{\mathrm{start}}+\lfloor s/\Delta s\rfloor\Delta n,\; N\},
\]
with \(n_{\mathrm{start}}=11\), \(\Delta n=2\), and \(\Delta s=2000\). The curriculum reaches the 
maximum context length \(N=40\) after approximately \(3\times 10^4\) steps, after which all 
remaining training is performed at full context length.

\begin{table}[h]
\centering
\caption{Default transformer architecture used in the main experiments.}
\label{tab:architecture_details}
\vspace{4pt} 
\begin{tabular}{ll}
\toprule
Hyperparameter & Value \\
\midrule
Layers \(L\) & 12 (main experiments); 24 (depth ablation) \\
Heads \(H\) & 8 \\
Hidden width \(d_{\mathrm{model}}\) & 256 \\
Feedforward width \(d_{\mathrm{ff}}\) & 1024 \((=4d_{\mathrm{model}})\) \\
Dropout & 0.0 \\
Layer normalization & Pre-LN \\
Activation & GELU \\
Positional encoding & Learned absolute embeddings \\
Attention mask & Causal \\
\bottomrule
\end{tabular}
\end{table}

\begin{table}[h]
\centering
\caption{Optimization hyperparameters for transformer training runs.}
\vspace{4pt} 
\label{tab:optimization_details}
\begin{tabular}{ll}
\toprule
Hyperparameter & Value \\
\midrule
Optimizer & AdamW \\
\(\beta_1,\beta_2\) & \(0.9,0.999\) \\
Weight decay & \(10^{-4}\) \\
Initial learning rate & \(10^{-4}\) \\
Final learning rate & \(10^{-5}\) \\
Schedule & Cosine annealing \\
Training steps & \(500{,}000\) \\
Batch size & 64 \\
Gradient clipping & 1.0 \\
Loss & MSE on causal \(x\)-token predictions \\
\bottomrule
\end{tabular}
\end{table}

\textbf{Probe training.} After training the transformer, we freeze its weights and train a separate linear probe with weight
\(\mW^{\mathrm{probe}}_\ell \in \mathbb{R}^{d_{\mathrm{model}}\times 1}\) and bias $b^{\mathrm{probe}}_\ell \in \mathbb{R}$
for each layer \(\ell \in [L]\). Each probe maps the hidden state at the final query 
\(x\)-token to a scalar prediction. Probes are trained for 500,000 steps on the same data distribution 
and context-length curriculum as the base model, using learning rate \(10^{-4}\).

\textbf{Evaluation protocol.} SimE and per-layer MSE metrics are evaluated on 256 held-out test sequences with a fixed random seed; the same 256 sequences are shared across the transformer and all classical baselines so per-sequence comparisons are deterministic. Unless otherwise stated, the iterative baselines are run for 500 iterations, except for the \(\kappa \approx 10{,}000\) depth ablation, where we use 1{,}500 iterations. CG is run until convergence or for at most 100 iterations. For the arc-cosine-1 kernel experiment, we use the same evaluation budget as in the corresponding Gaussian kernel setting.

\textbf{Computational resources.} For all experiments, the transformer is trained on a single NVIDIA V100 or RTX A4000 GPU. Layer-wise probes are trained in parallel across 4 RTX A4000 GPUs. Depending on exact model size, hardware, and experimental setup, a full training run takes between 6 and 12 wall-clock hours. Memory requirements are relatively lax; RTX A4000s offered 16 GB VRAM while models generally required around 2 GB VRAM for training, with exceptions for size ablation studies.
             
\section{Additional experimental results}
\label{app:experiments}

\subsection{Convergence and similarity for 12-layer transformers}
\label{app:additional-12L}

In Figure~\ref{fig:argmax-app} we report the argmax trajectory $t^*(\ell)$ for the uniform and Gaussian distributions, which each show the linear growth of the preconditioned Richardson method. In Figure~\ref{fig:convergence-app} we show the corresponding error curves for the uniform and Gaussian distributions. We additionally plot heatmaps for all combinations of distribution and iterative method in Figure~\ref{fig:sime-heatmaps-grid}.

\begin{figure}[h]
\centering
\subfigure[Uniform]{%
\includegraphics[width=0.45\linewidth]{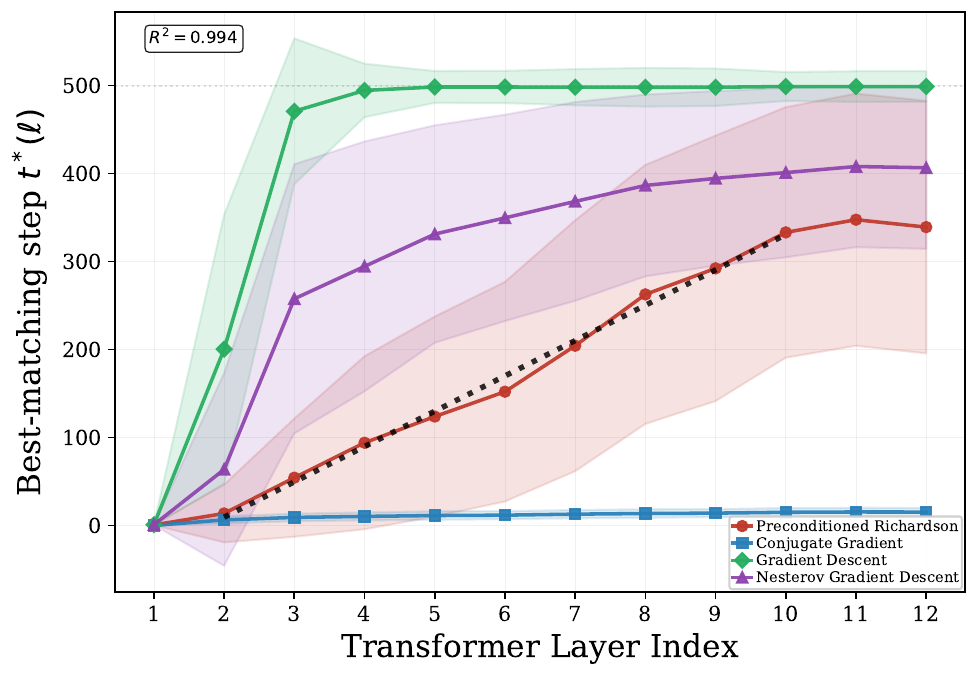}
}
\subfigure[Gaussian]{%
\includegraphics[width=0.45\linewidth]{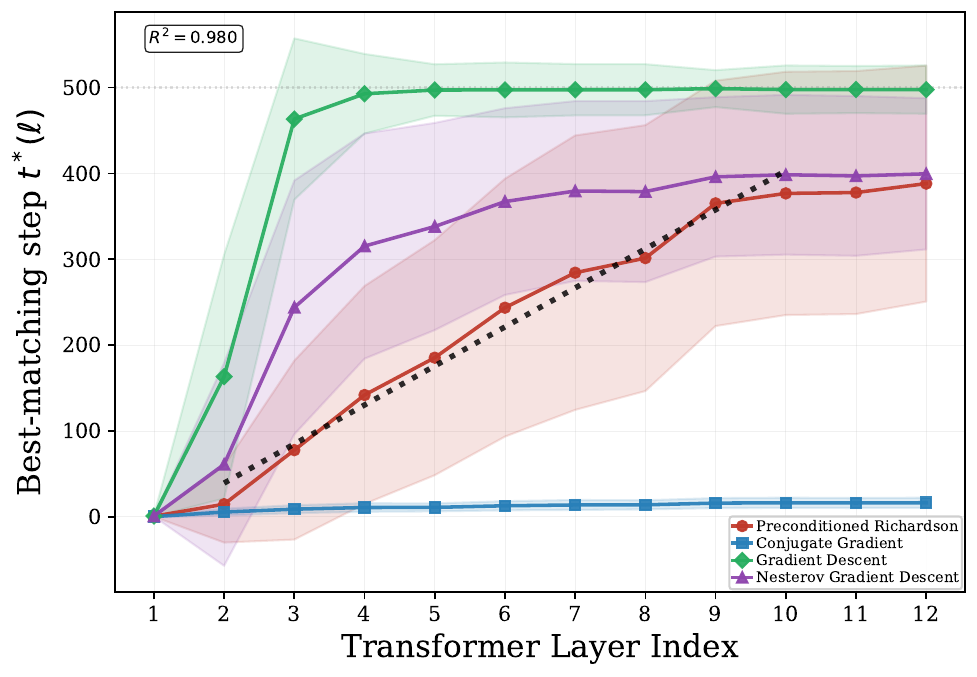}
}\caption{Best-matching step per transformer layer for Uniform and Gaussian inputs, supplementing the argmax plot of the spherical distribution in Figure~\ref{fig:heat_and_argmax}. Preconditioned Richardson (red) exhibits a reliably linear growth across inner transformer layers. CG shows a higher characteristic speed that surpasses the transformer's learned algorithm, while other methods are noticeably slower than the transformer and saturate earlier.}
  \label{fig:argmax-app}
\end{figure}

\begin{figure}[h!]
\centering
\subfigure[Uniform]{%
\includegraphics[width=\linewidth]{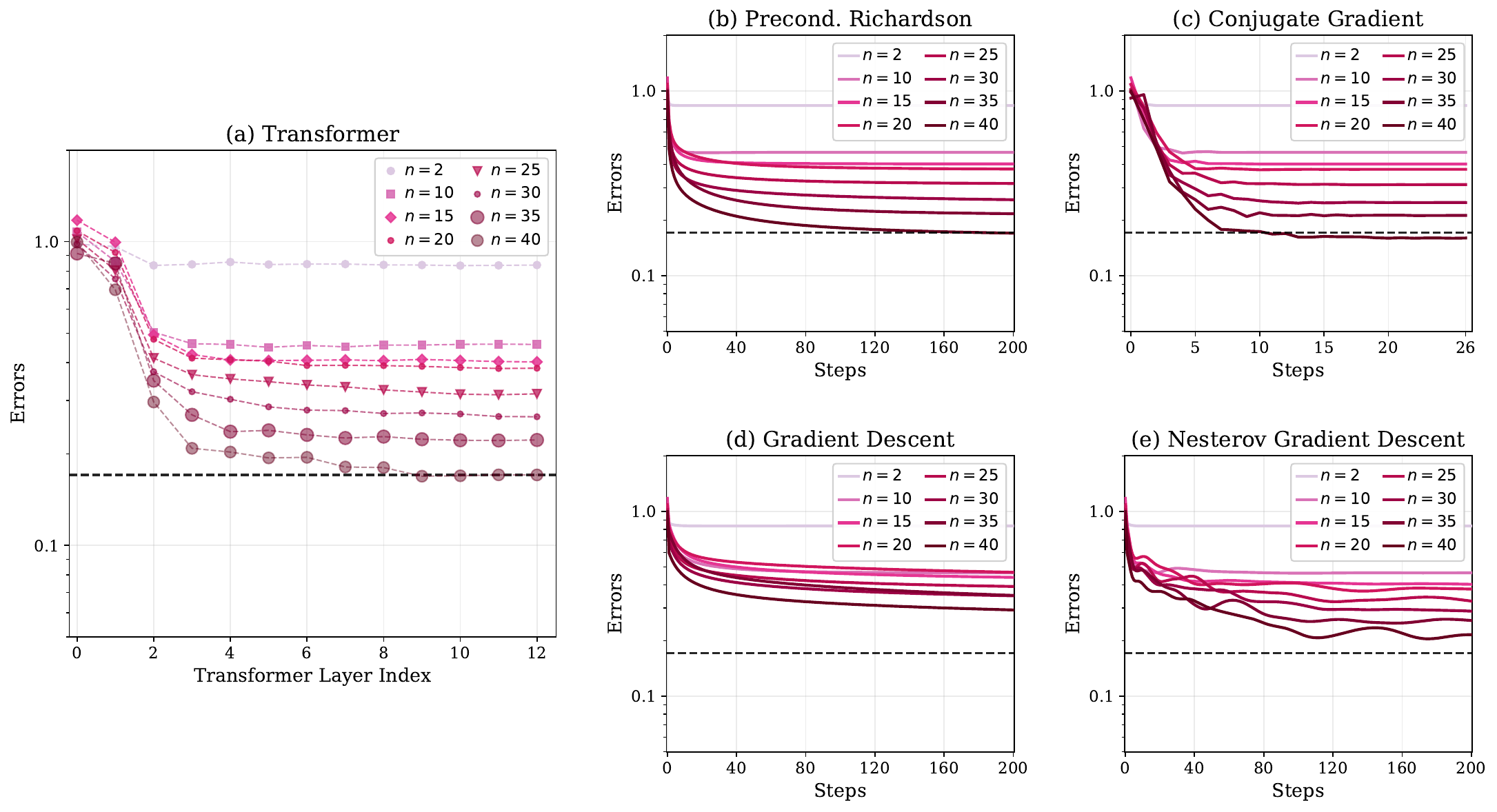}}\\
\subfigure[Gaussian ]{%
\includegraphics[width=\linewidth]{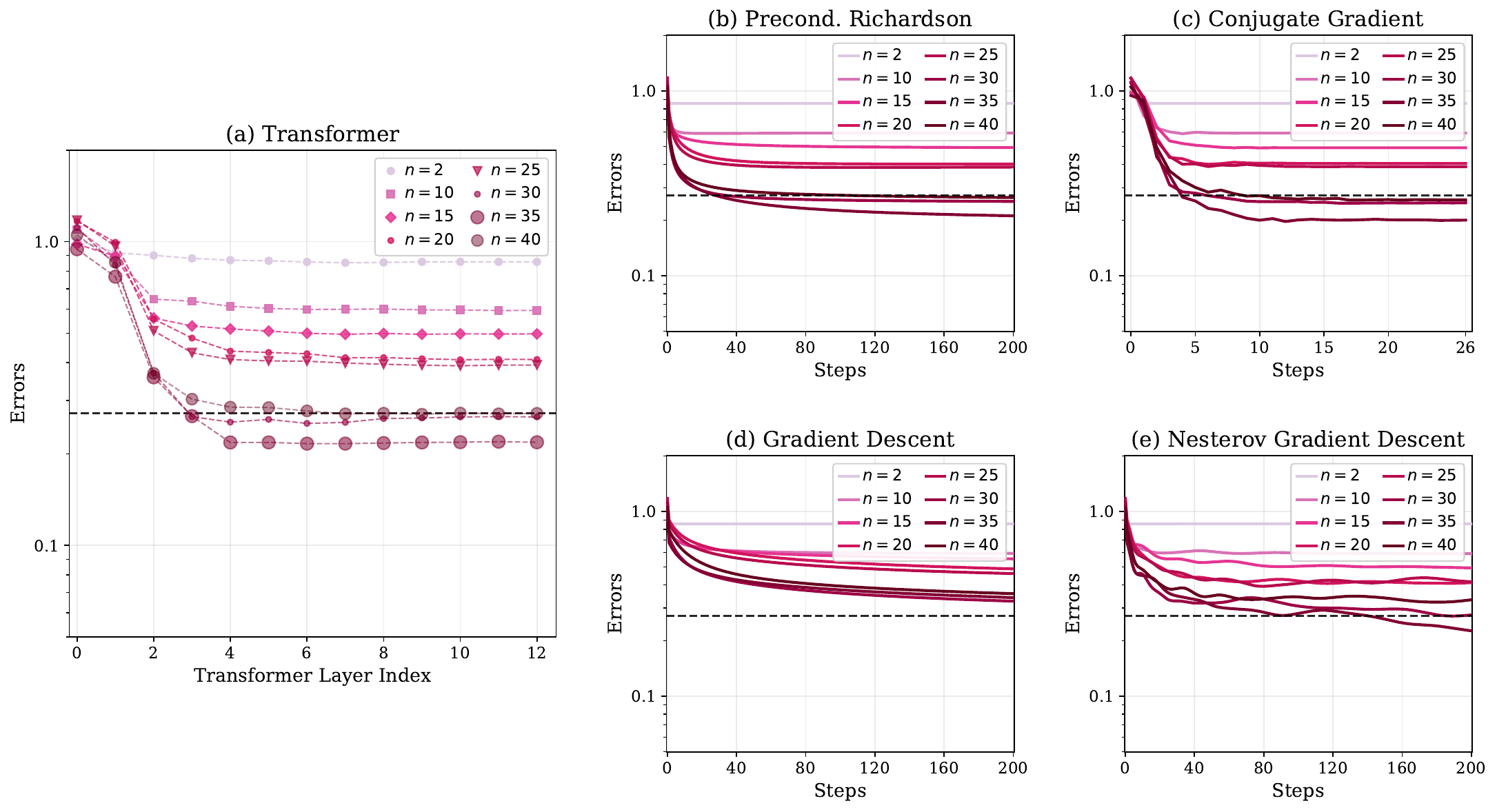}}\\
\caption{MSE convergence for (a) Uniform and (b) Gaussian input distributions, supplementing the spherical distribution in Figure~\ref{fig:convergence} in the main text. All four classical methods are presented for comparison in each scenario.}
\label{fig:convergence-app}
\end{figure}

\newlength{\simecolwidth}
\setlength{\simecolwidth}{0.285\linewidth}

\newlength{\cbarwidth}
\setlength{\cbarwidth}{0.055\linewidth}

\newlength{\simerowheight}
\settoheight{\simerowheight}{%
  \includegraphics[width=\simecolwidth]%
    {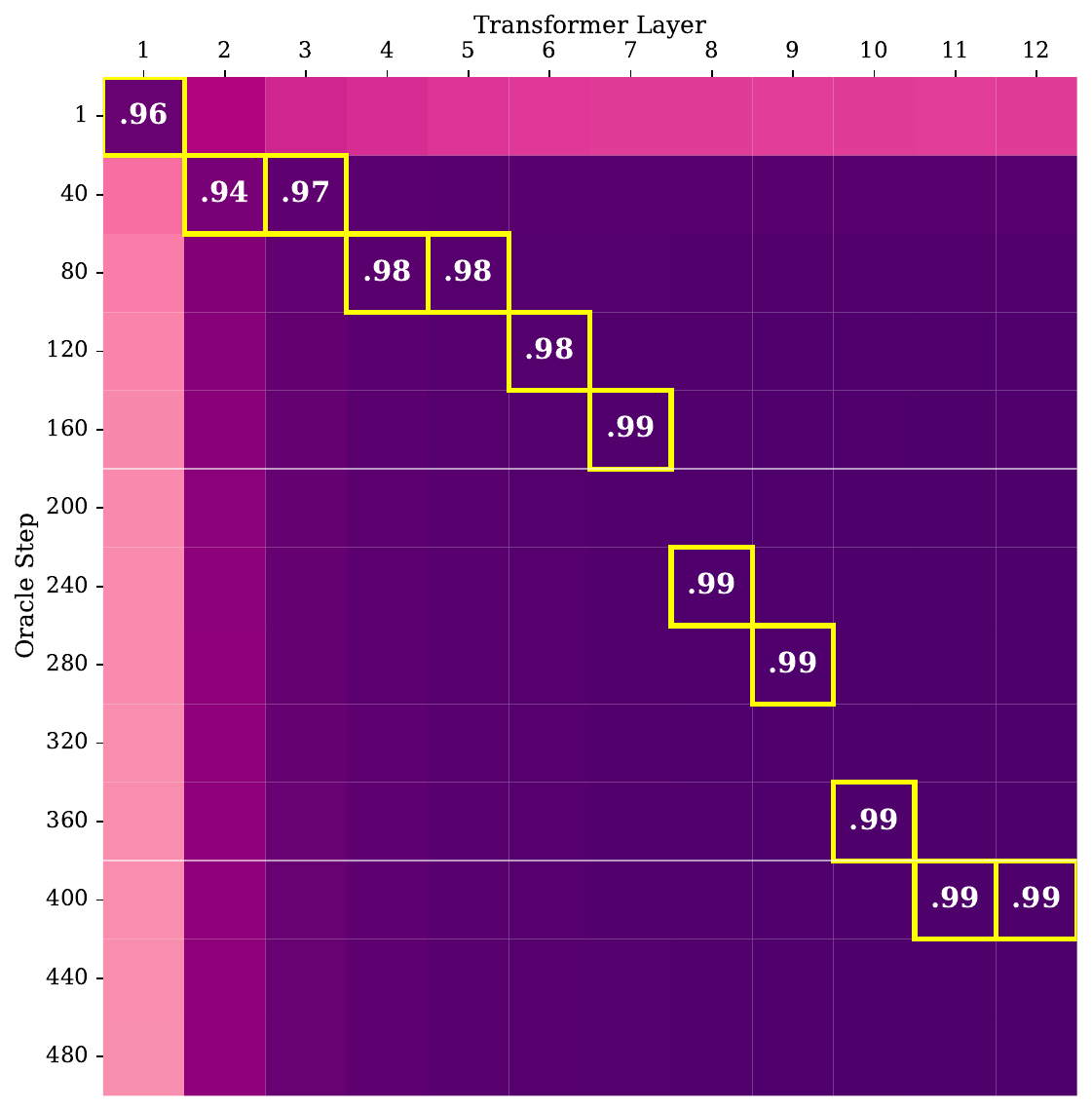}%
}

\begin{figure}[p]
\centering
\setlength{\tabcolsep}{3pt}
\begin{tabular}{>{\centering\arraybackslash}m{1.1em}
                >{\centering\arraybackslash}m{\simecolwidth}
                >{\centering\arraybackslash}m{\simecolwidth}
                >{\centering\arraybackslash}m{\simecolwidth}
                >{\centering\arraybackslash}m{\cbarwidth}}
& \textbf{Uniform}
& \textbf{Gaussian}
& \textbf{Spherical}
&
\\[4pt]
\rotatebox{90}{\footnotesize Preconditioned Richardson}
& \includegraphics[width=\simecolwidth]{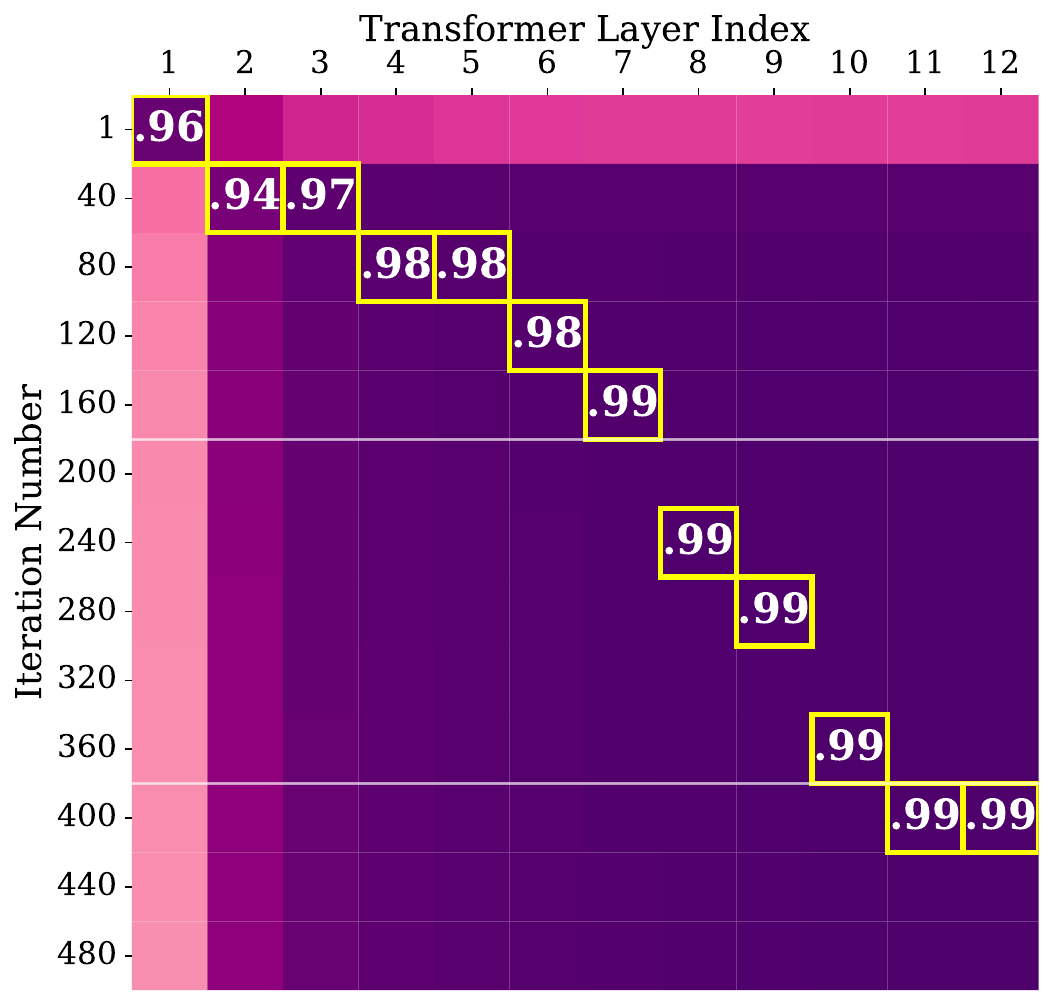}
& \includegraphics[width=\simecolwidth]{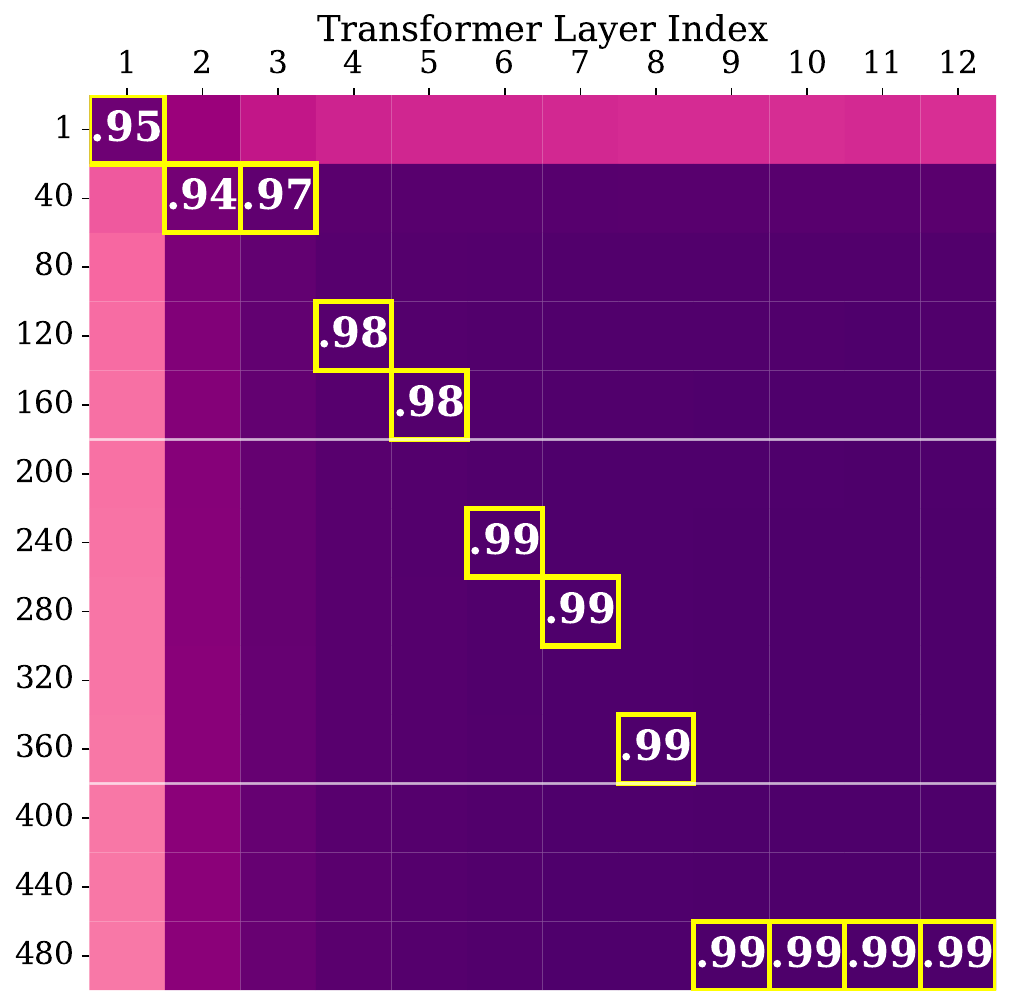}
& \includegraphics[width=\simecolwidth]{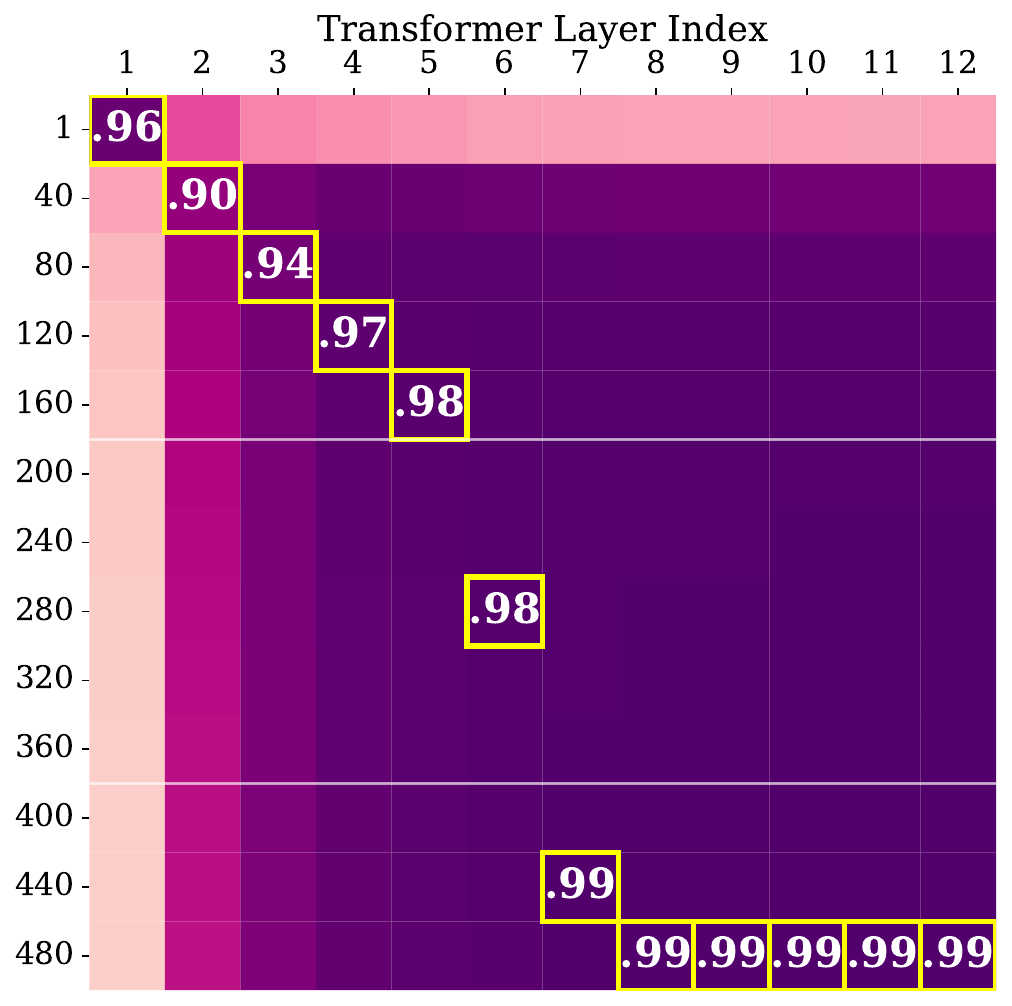}
& \includegraphics[height=\simerowheight]{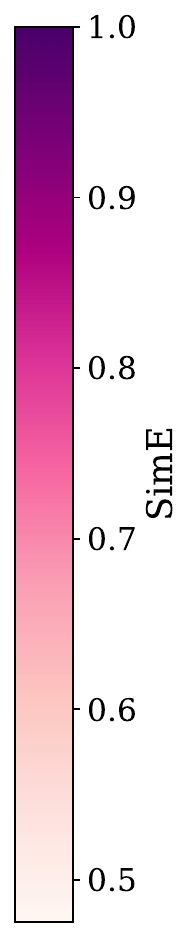}
\\[4pt]
\rotatebox{90}{\footnotesize Conjugate Gradient}
& \includegraphics[width=\simecolwidth]{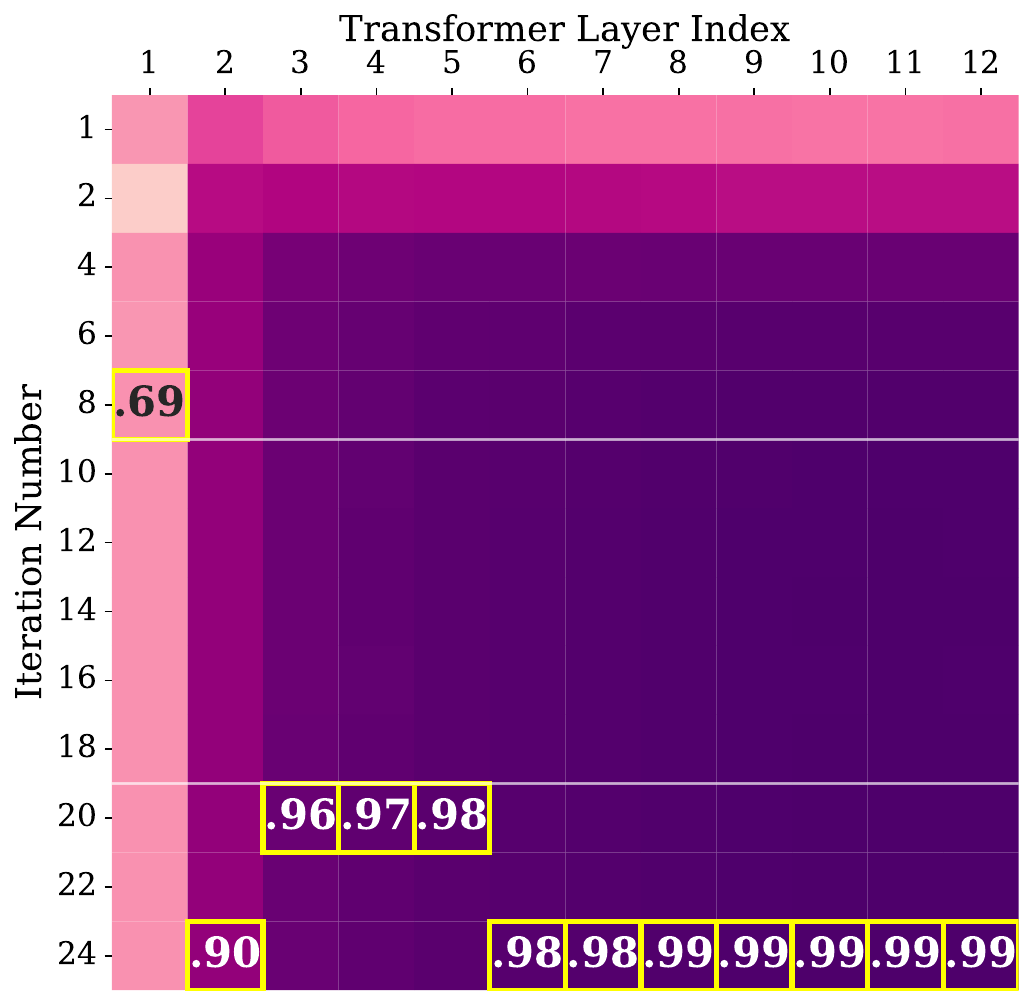}
& \includegraphics[width=\simecolwidth]{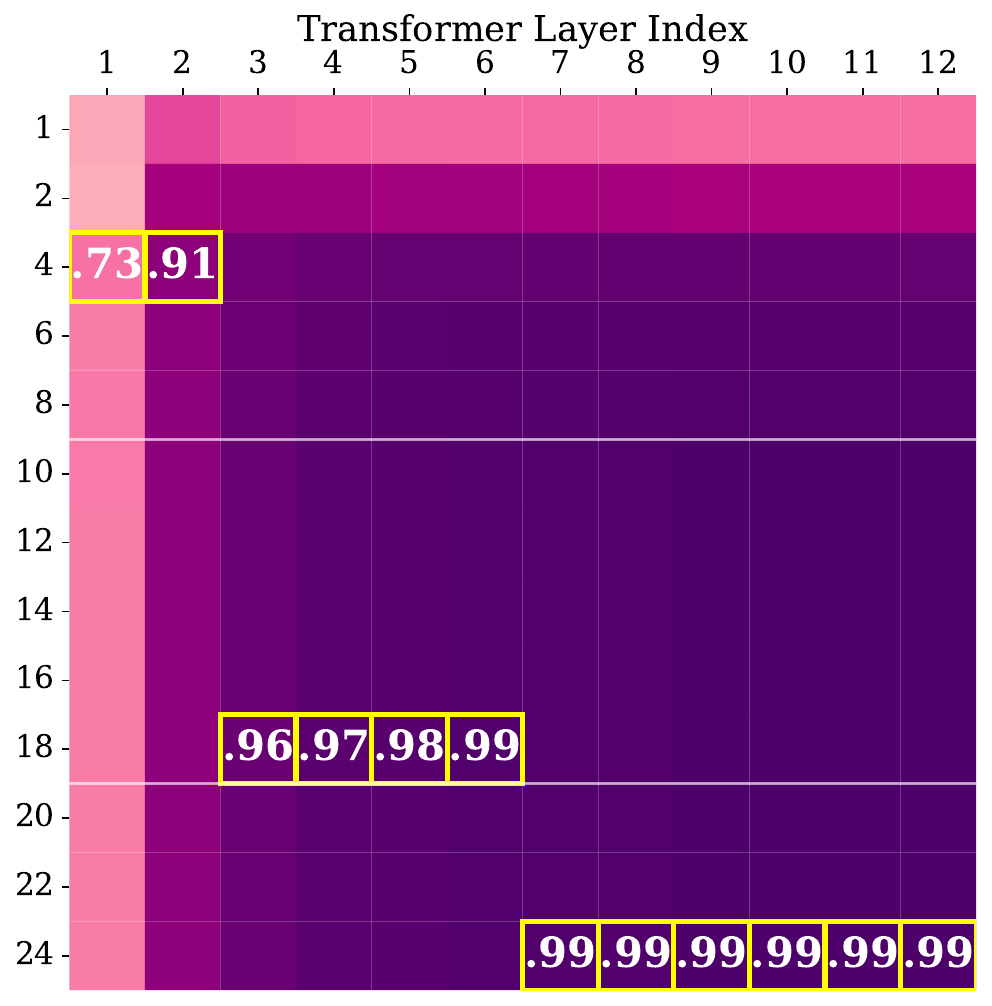}
& \includegraphics[width=\simecolwidth]{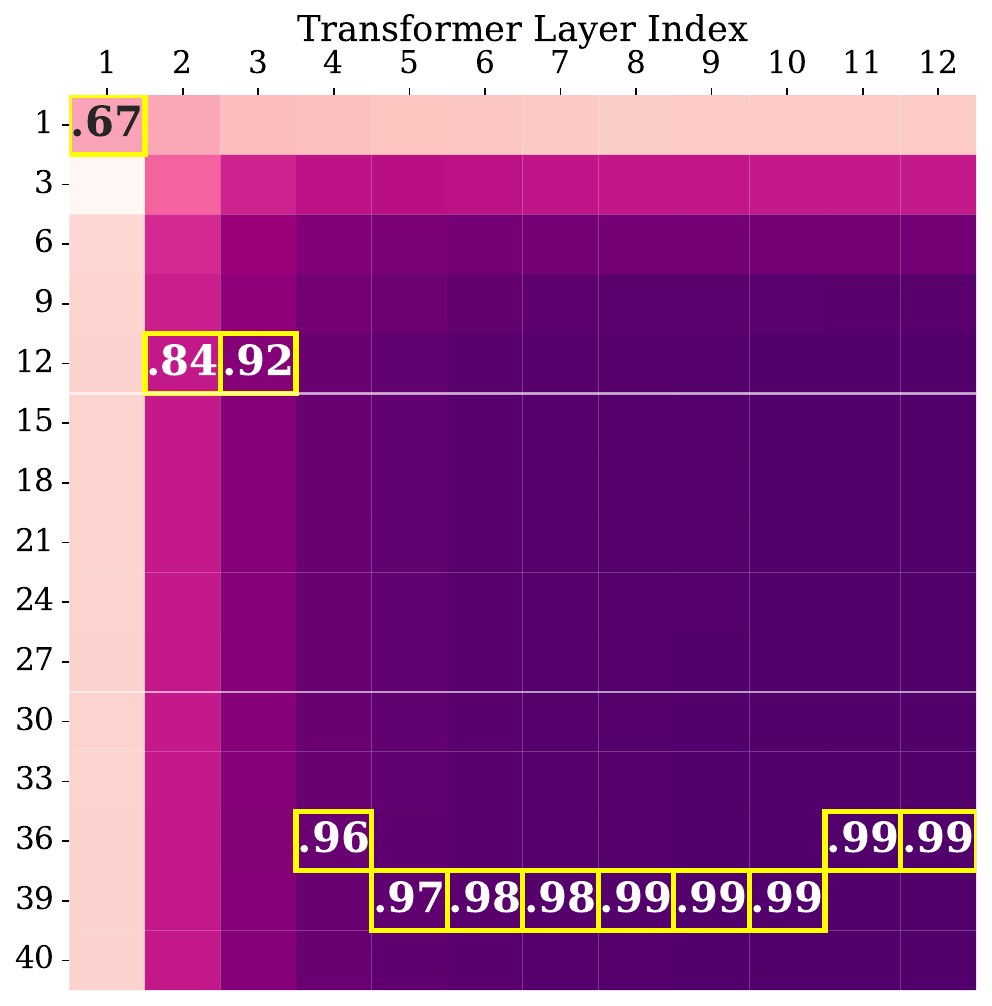}
& \includegraphics[height=\simerowheight]{Current_Figures/panels_54/sime_colorbar}
\\[4pt]
\rotatebox{90}{\footnotesize Gradient Descent (RKHS)}
& \includegraphics[width=\simecolwidth]{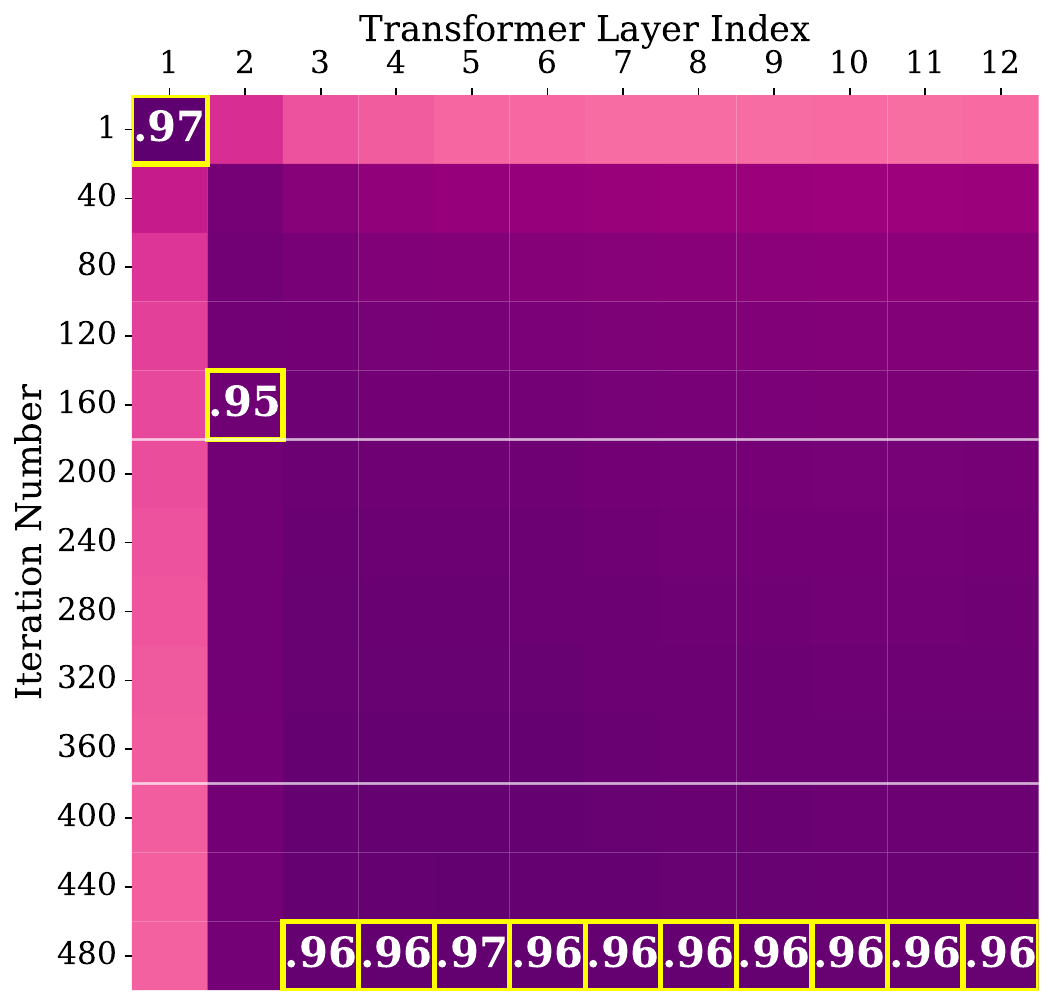}
& \includegraphics[width=\simecolwidth]{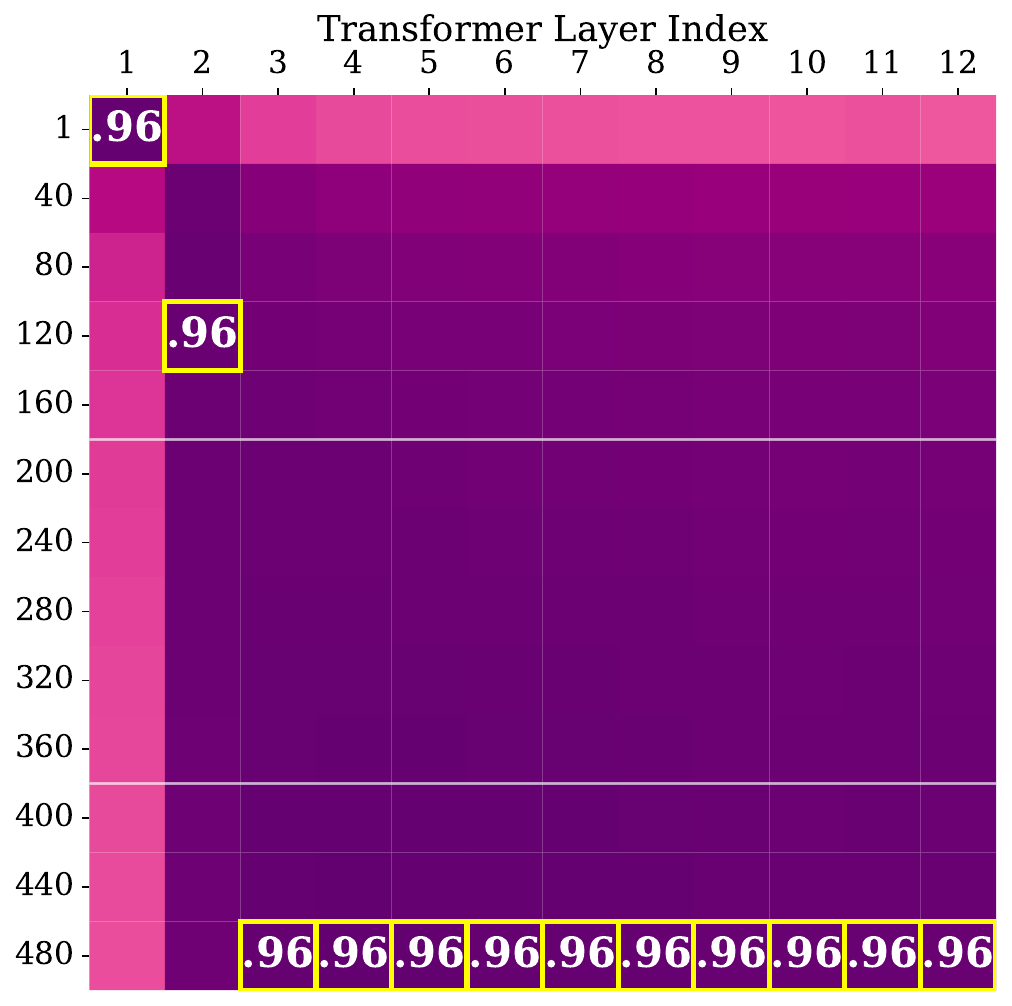}
& \includegraphics[width=\simecolwidth]{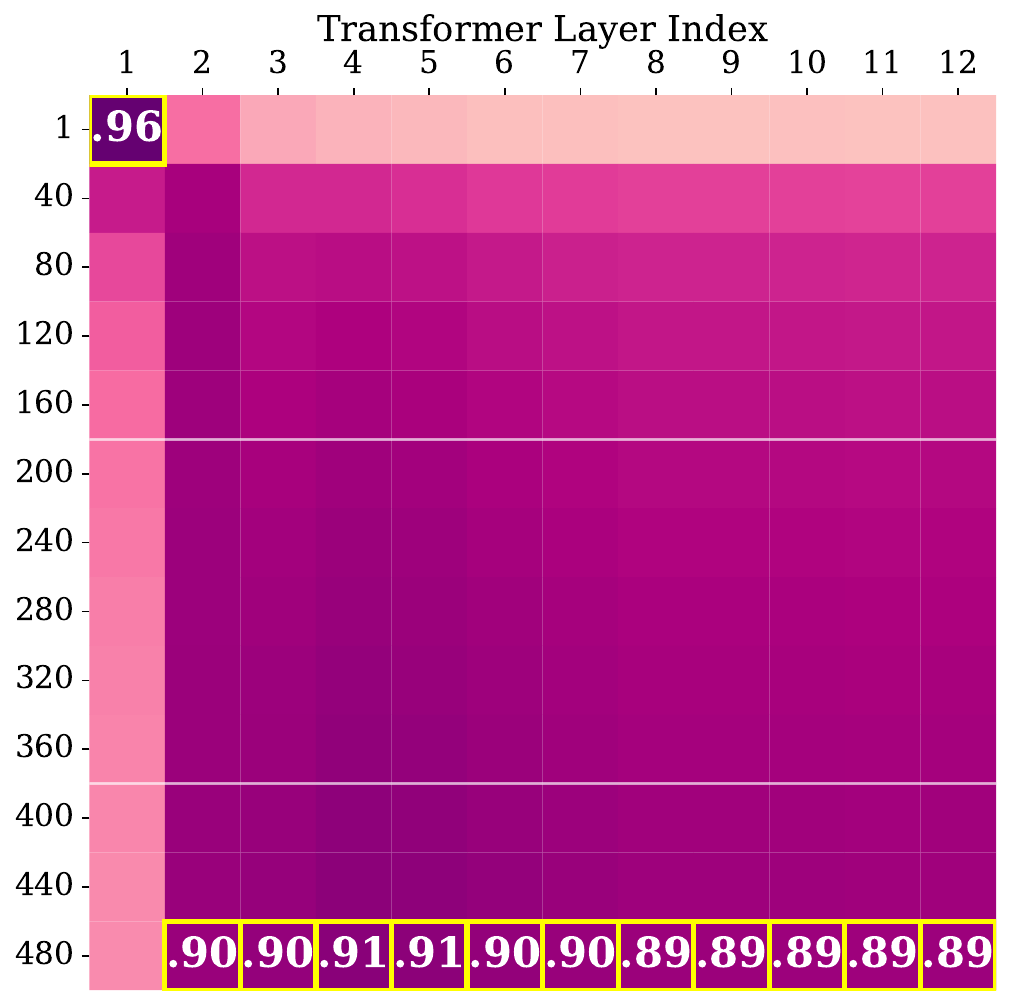}
& \includegraphics[height=\simerowheight]{Current_Figures/panels_54/sime_colorbar}
\\[4pt]
\rotatebox{90}{\footnotesize Nesterov Gradient Descent (RKHS)}
& \includegraphics[width=\simecolwidth]{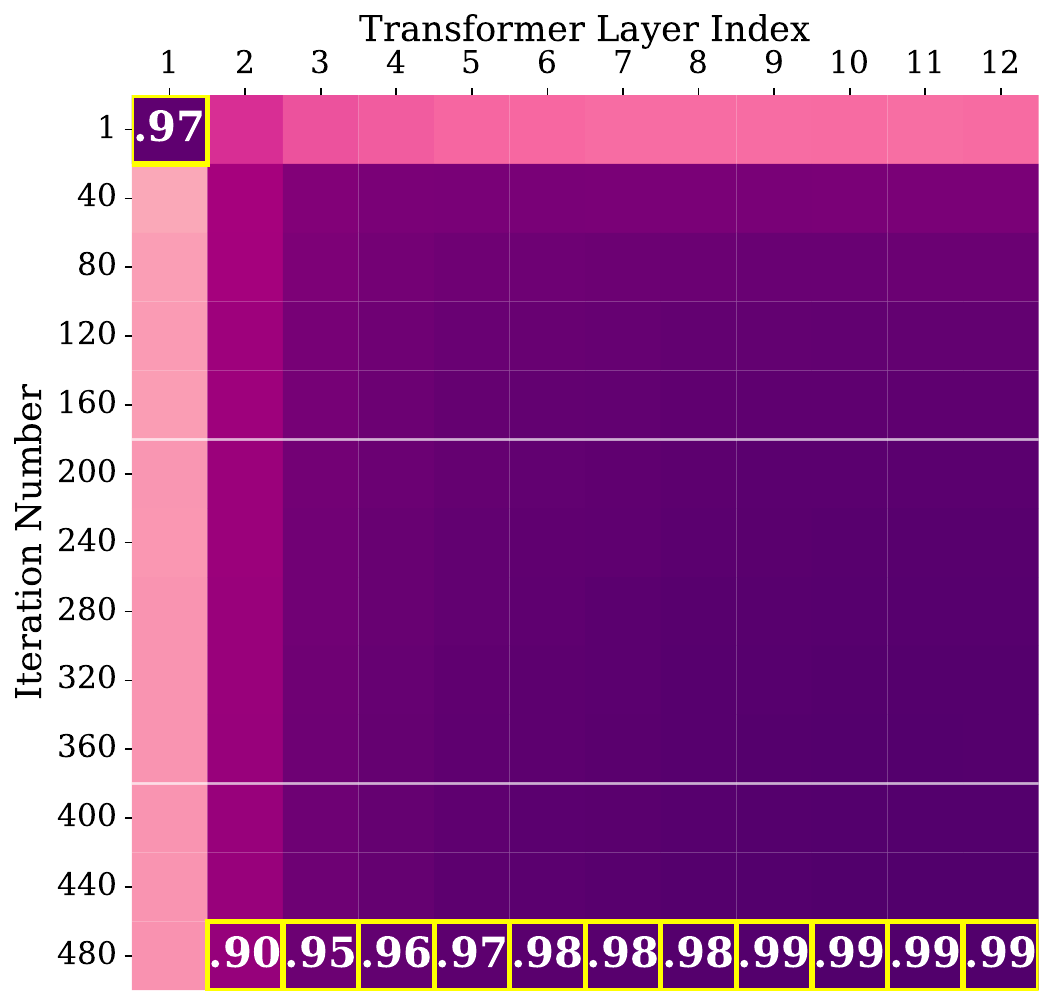}
& \includegraphics[width=\simecolwidth]{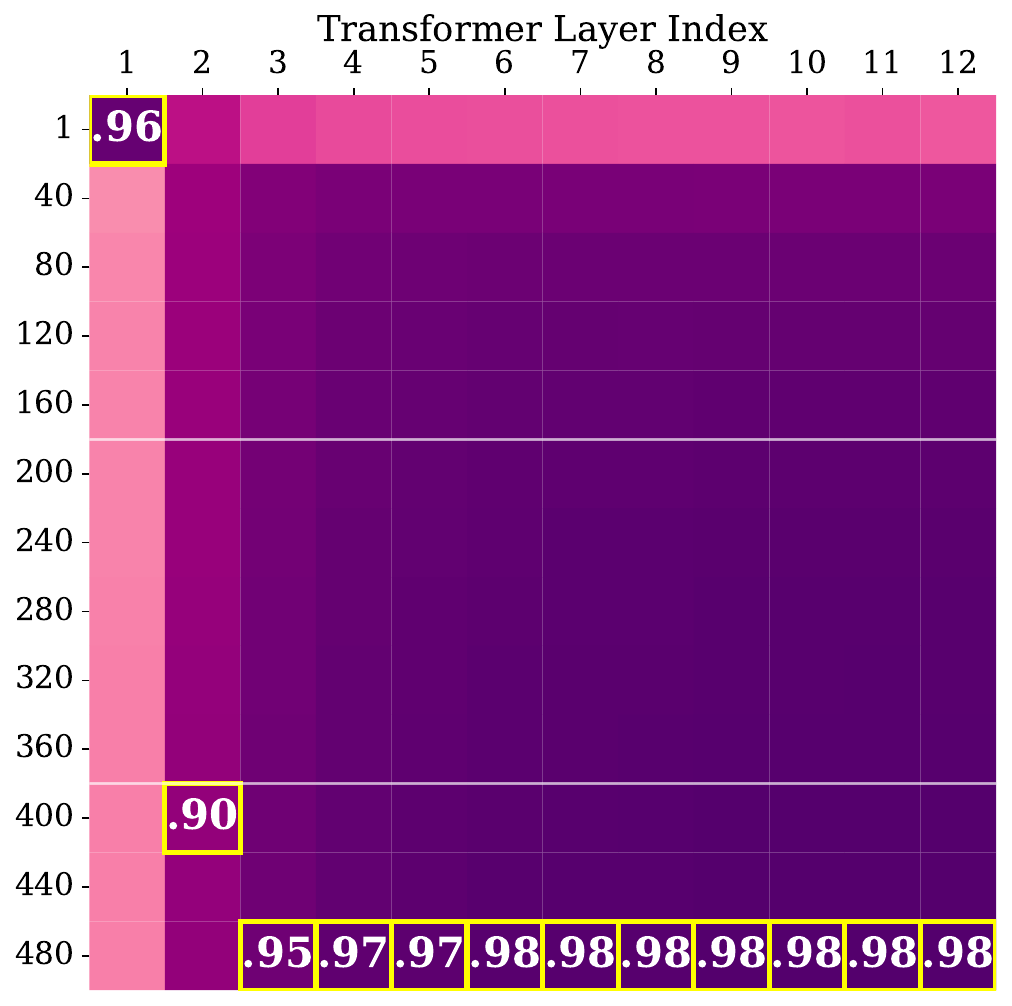}
& \includegraphics[width=\simecolwidth]{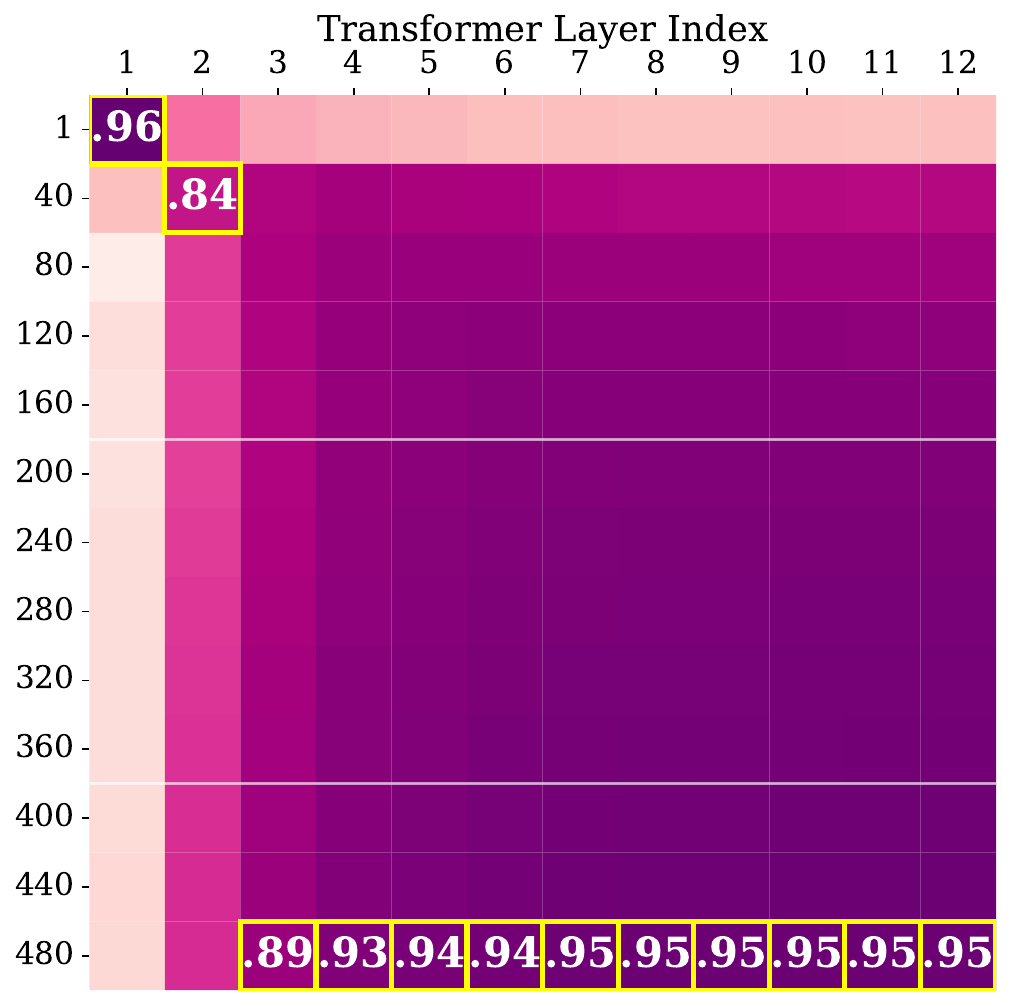}
& \includegraphics[height=\simerowheight]{Current_Figures/panels_54/sime_colorbar}
\\
\end{tabular}
\caption{%
  SimE heatmaps compare cosine similarity between transformer per-layer error vectors and classical solver per-iteration error vectors across four algorithms (rows) and three input distributions (columns), supplementing Figure~\ref{fig:heat_and_argmax}. Yellow rectangles indicate the iteration with maximum SimE for each transformer layer, with annotations showing the corresponding SimE value. The color scale is shared across all panels.
}
\label{fig:sime-heatmaps-grid}
\end{figure}

\clearpage

\subsection{Depth ablation: 24-layer transformers}
\label{app:depth-24L}

For depth ablation, we analyze spherical GP data with two different bandwidths \(v=1\) and \(v=2\). In Figure~\ref{fig:convergence-app-24L} we plot error curves for both distributions with heatmaps in Figure~\ref{fig:sime_24L}.

\begin{figure}[h!]
\centering
\subfigure[Spherical $v = 1$]{%
\includegraphics[width=\linewidth]{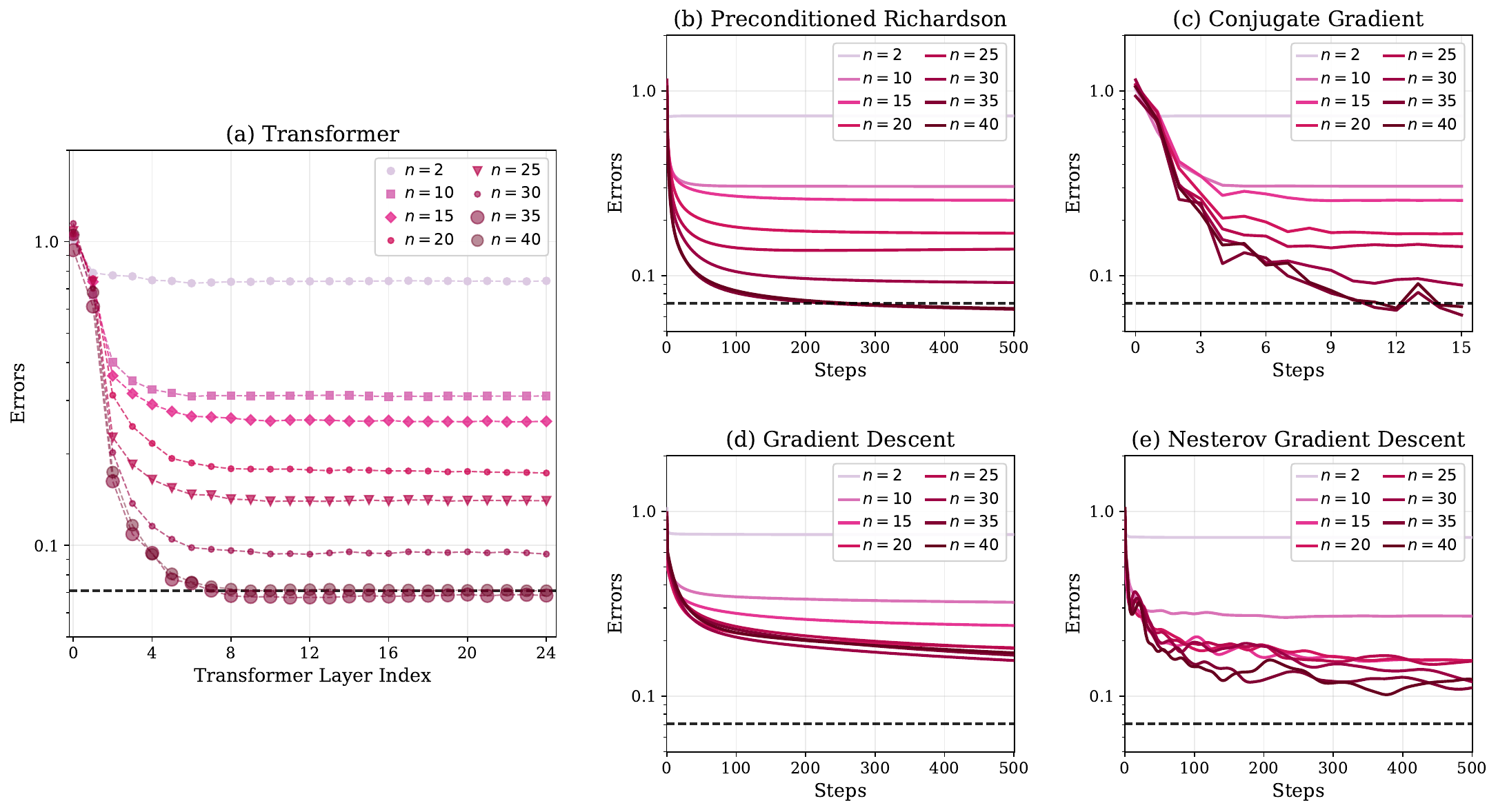}}\\
\subfigure[Spherical $v= 2$]{%
\includegraphics[width=\linewidth]{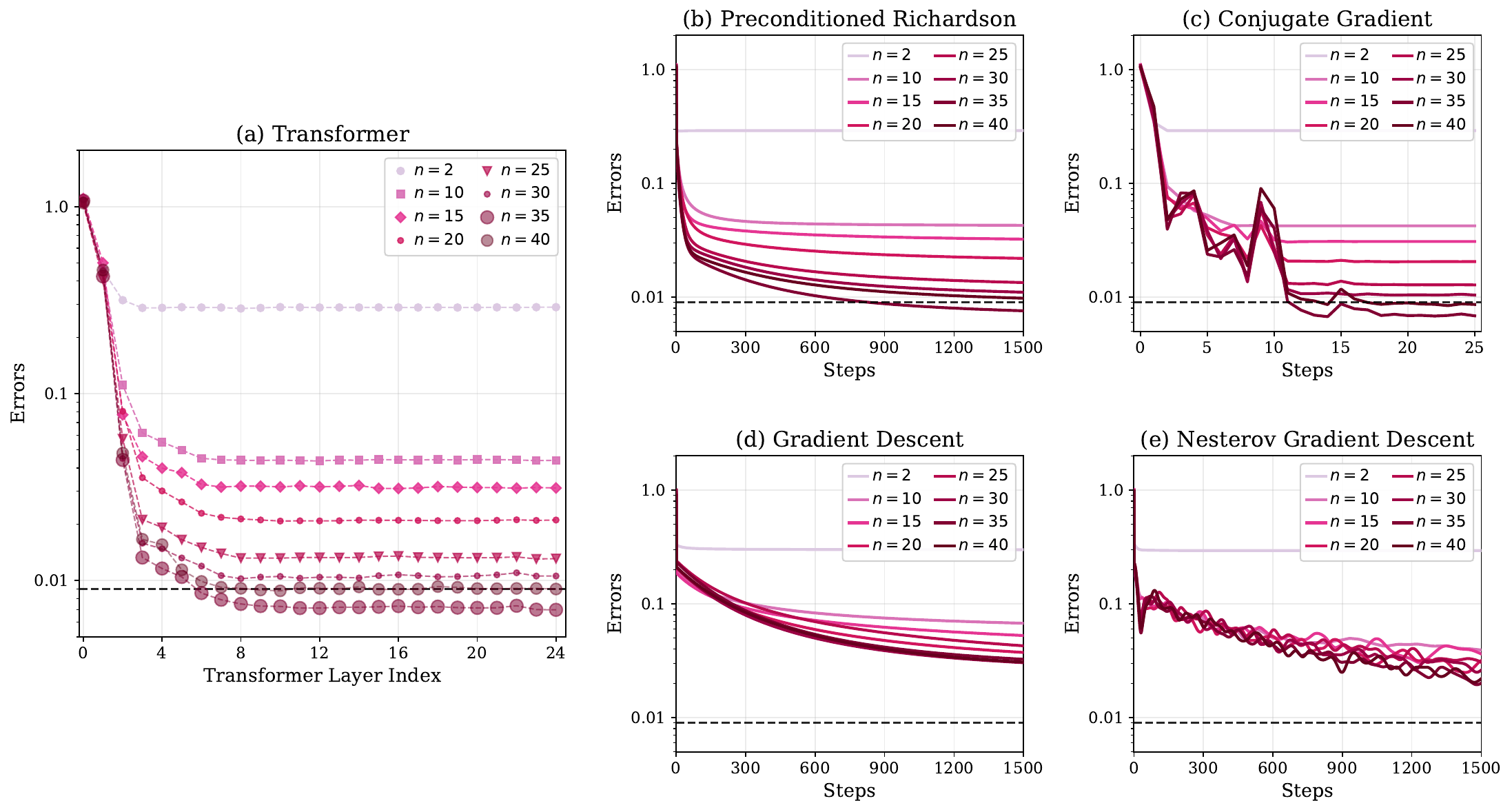}}
\caption{MSE convergence plots for the spherical distribution for $24$ layer transformers, (a) using $v = 1$ or (b) using $v = 2$. Even at increased depth, a strong match with preconditioned Richardson iterations is evident, and both forms of gradient descent still fail to reach the transformer's level of accuracy in the allotted iterations.}
\label{fig:convergence-app-24L}
\end{figure}

\newlength{\simecolwidthz}
\setlength{\simecolwidthz}{0.42\linewidth}
\newlength{\cbarwidthz}
\setlength{\cbarwidthz}{0.08\linewidth}
\newlength{\simerowheightz}
\settoheight{\simerowheightz}{%
  \includegraphics[width=\simecolwidthz]%
  {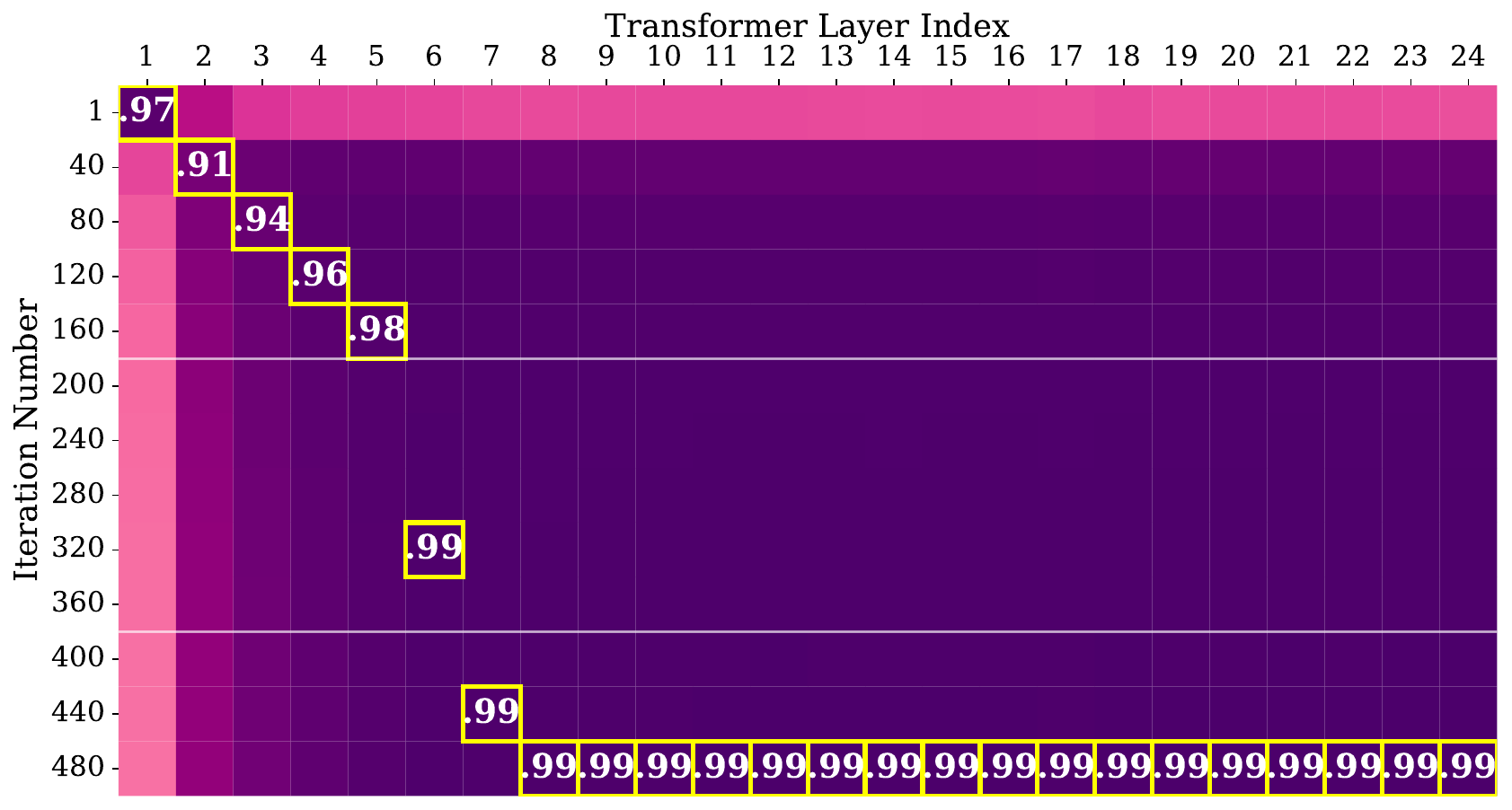}%
}
\begin{figure}[t]
\centering
\setlength{\tabcolsep}{3pt}
\begin{tabular}{>{\centering\arraybackslash}m{1.1em}
                >{\centering\arraybackslash}m{\simecolwidthz}
                >{\centering\arraybackslash}m{\simecolwidthz}
                >{\centering\arraybackslash}m{\cbarwidthz}}
& \textbf{Spherical,} $v = 1$
& \textbf{Spherical,} $v= 2$
&
\\[4pt]
\rotatebox{90}{\footnotesize Preconditioned Richardson}
& \includegraphics[width=\simecolwidthz]{Current_Figures/panels_24L_54/sime_panel_prichard_24L_sigma1.pdf}
& \includegraphics[width=\simecolwidthz]{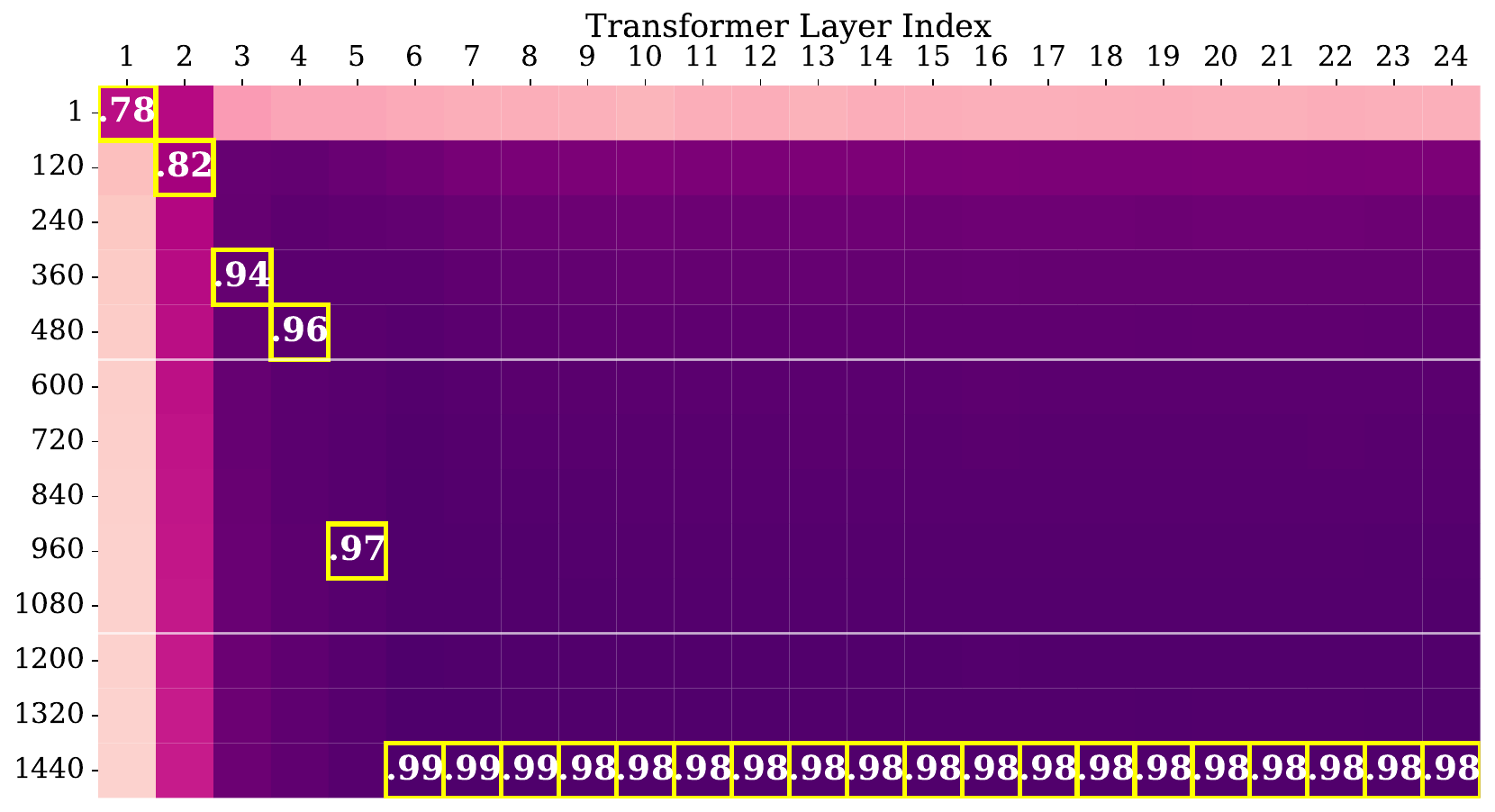}
& \includegraphics[height=\simerowheightz]{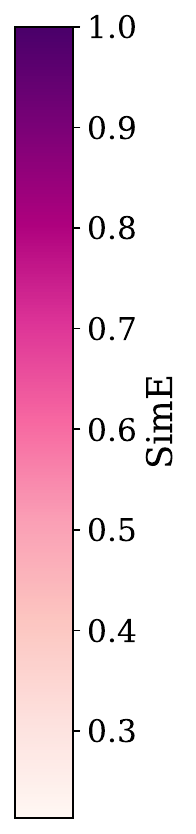}
\\[4pt]
\rotatebox{90}{\footnotesize Conjugate Gradient}
& \includegraphics[width=\simecolwidthz]{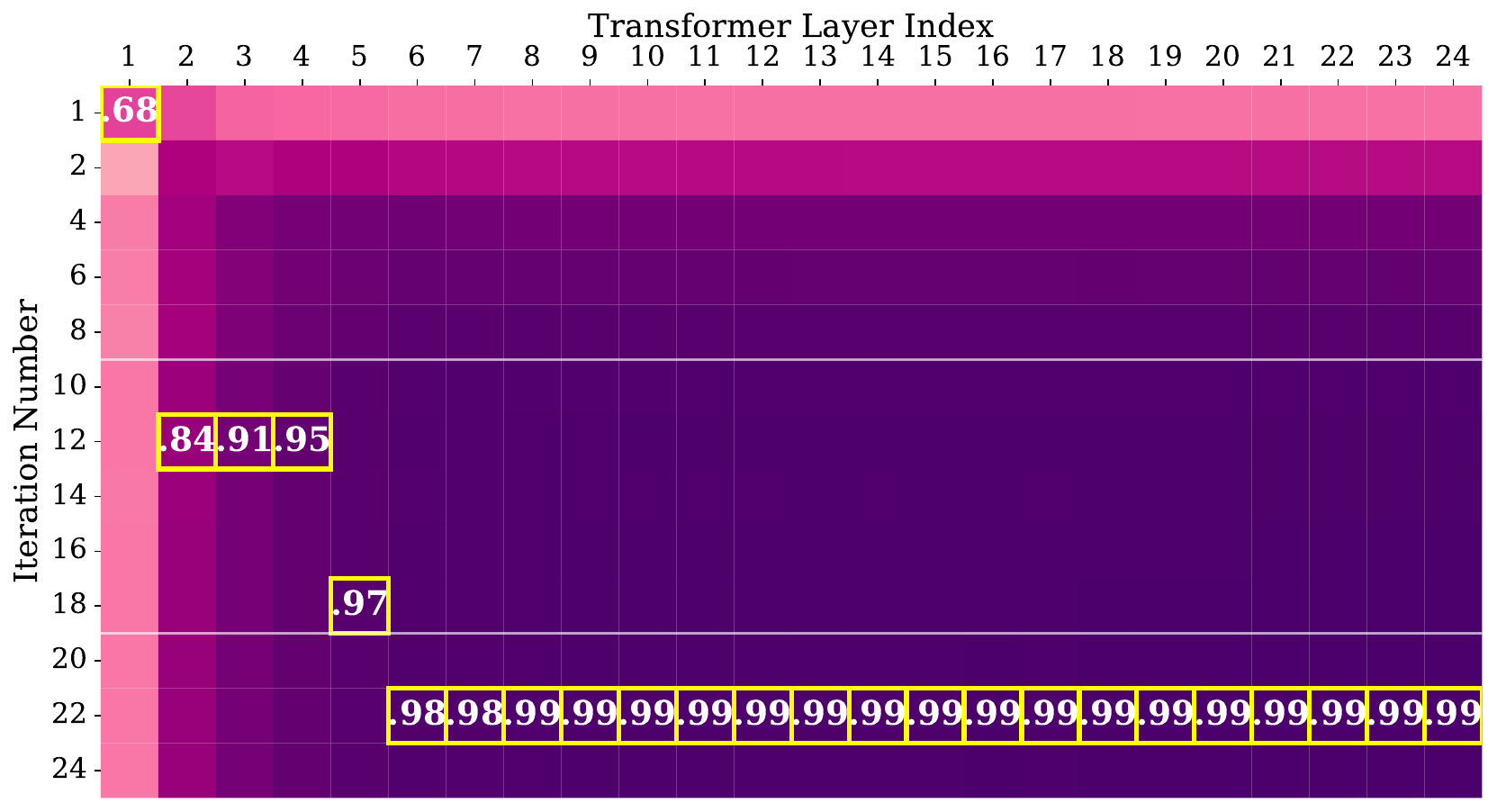}
& \includegraphics[width=\simecolwidthz]{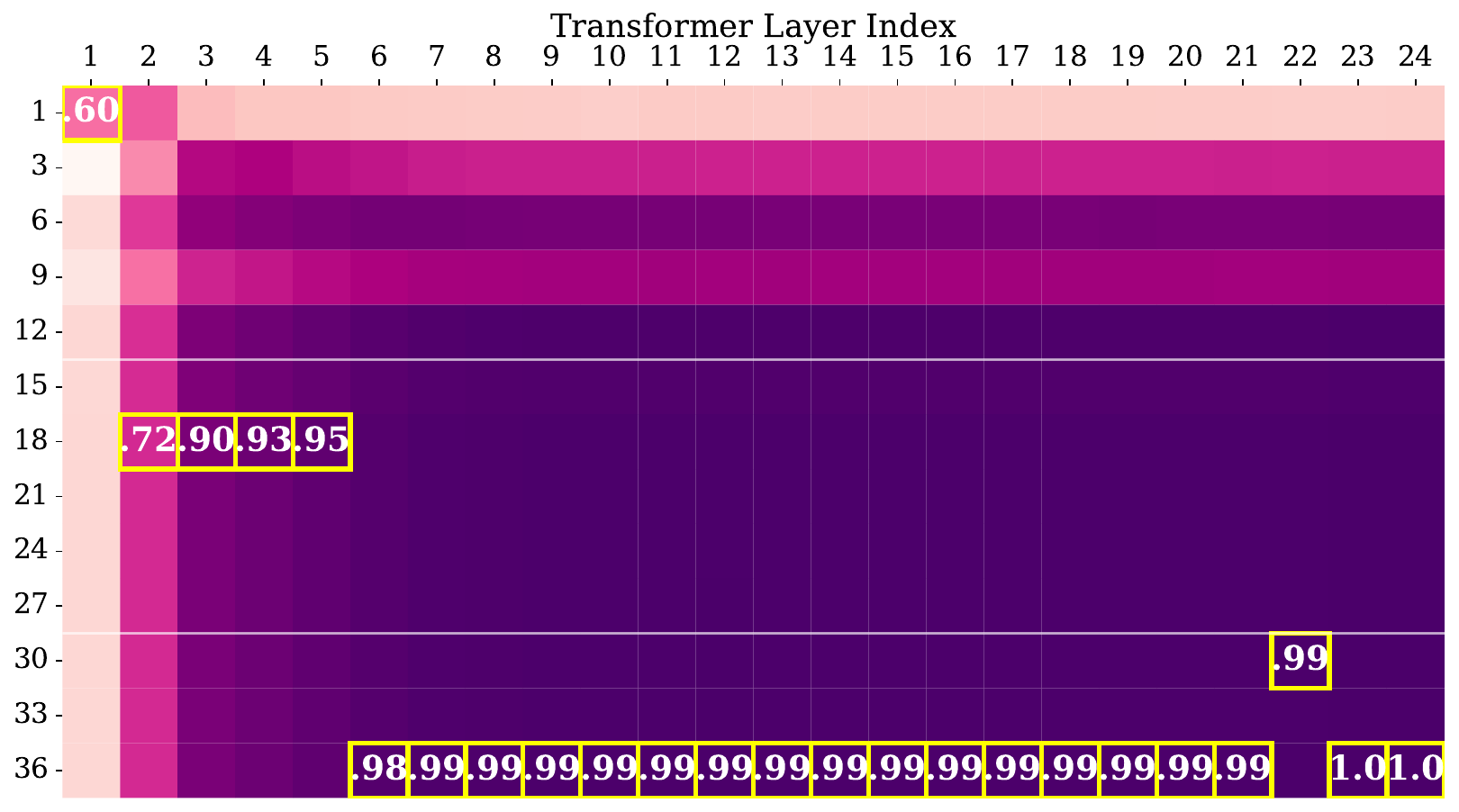}
& \includegraphics[height=\simerowheightz]{Current_Figures/panels_24L_54/sime_colorbar_24L_54.pdf}
\\[4pt]
\rotatebox{90}{\footnotesize Gradient Descent (RKHS)}
& \includegraphics[width=\simecolwidthz]{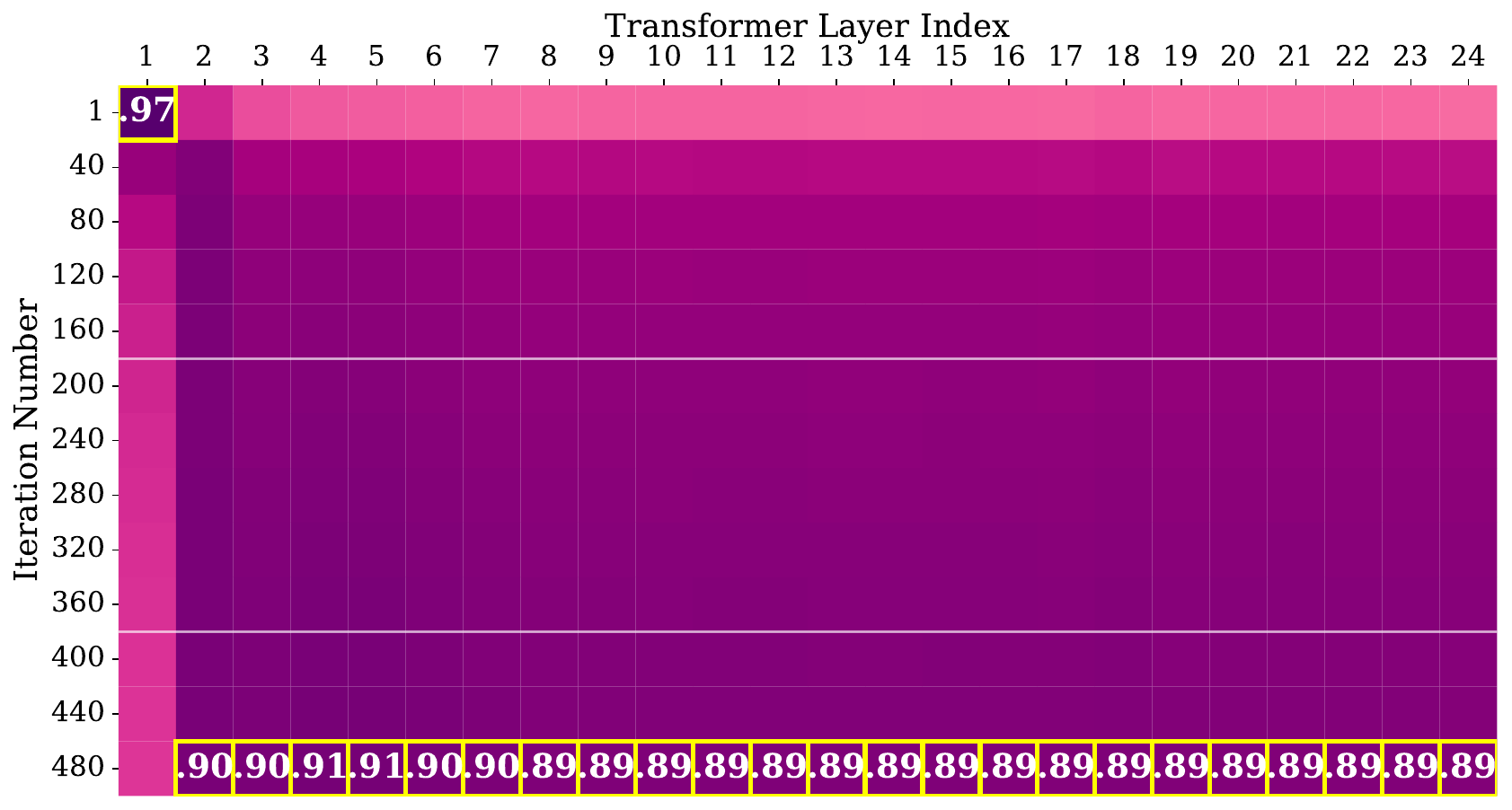}
& \includegraphics[width=\simecolwidthz]{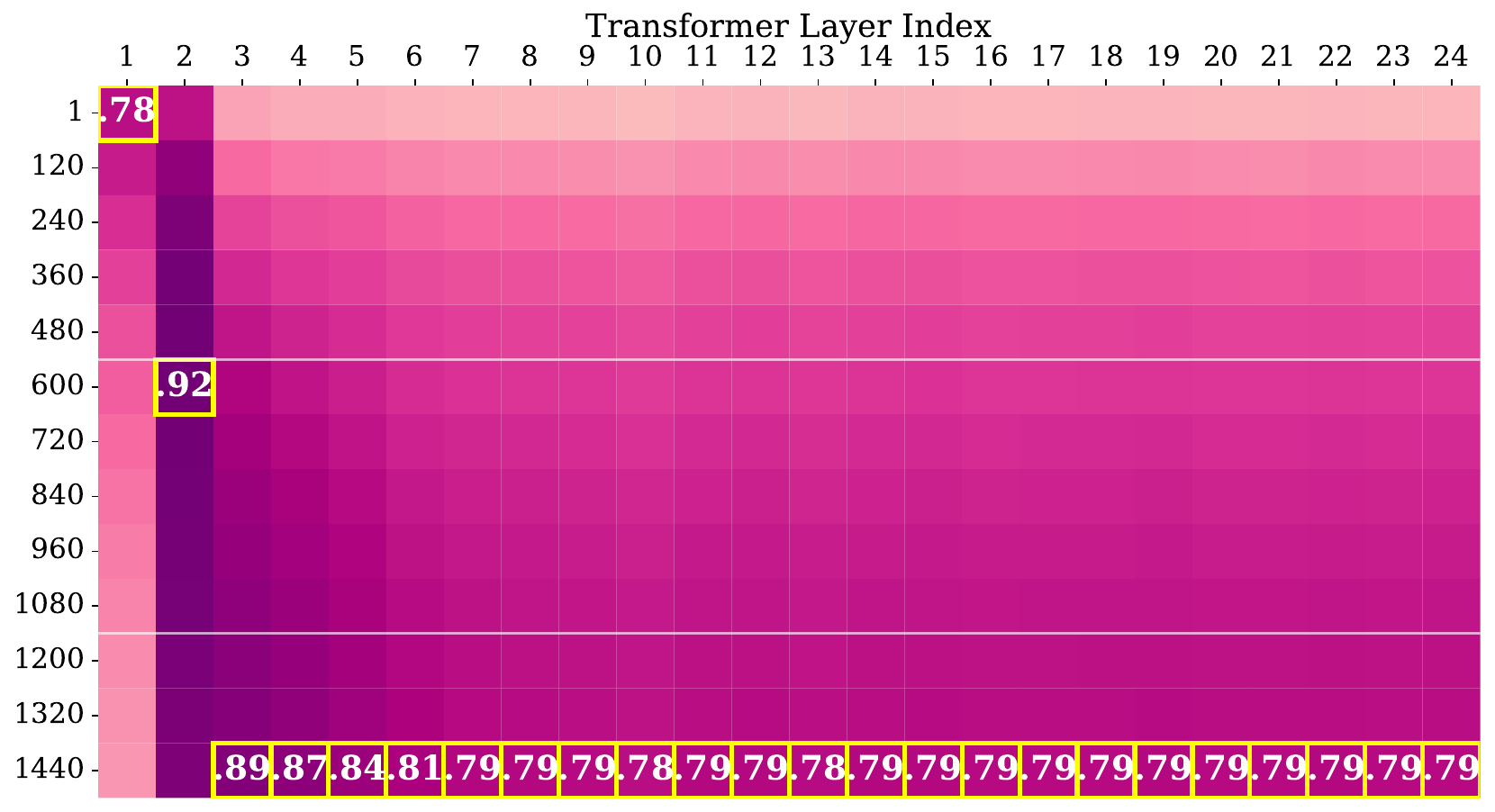}
& \includegraphics[height=\simerowheightz]{Current_Figures/panels_24L_54/sime_colorbar_24L_54.pdf}
\\[4pt]
\rotatebox{90}{\footnotesize Nesterov Gradient Descent (RKHS)}
& \includegraphics[width=\simecolwidthz]{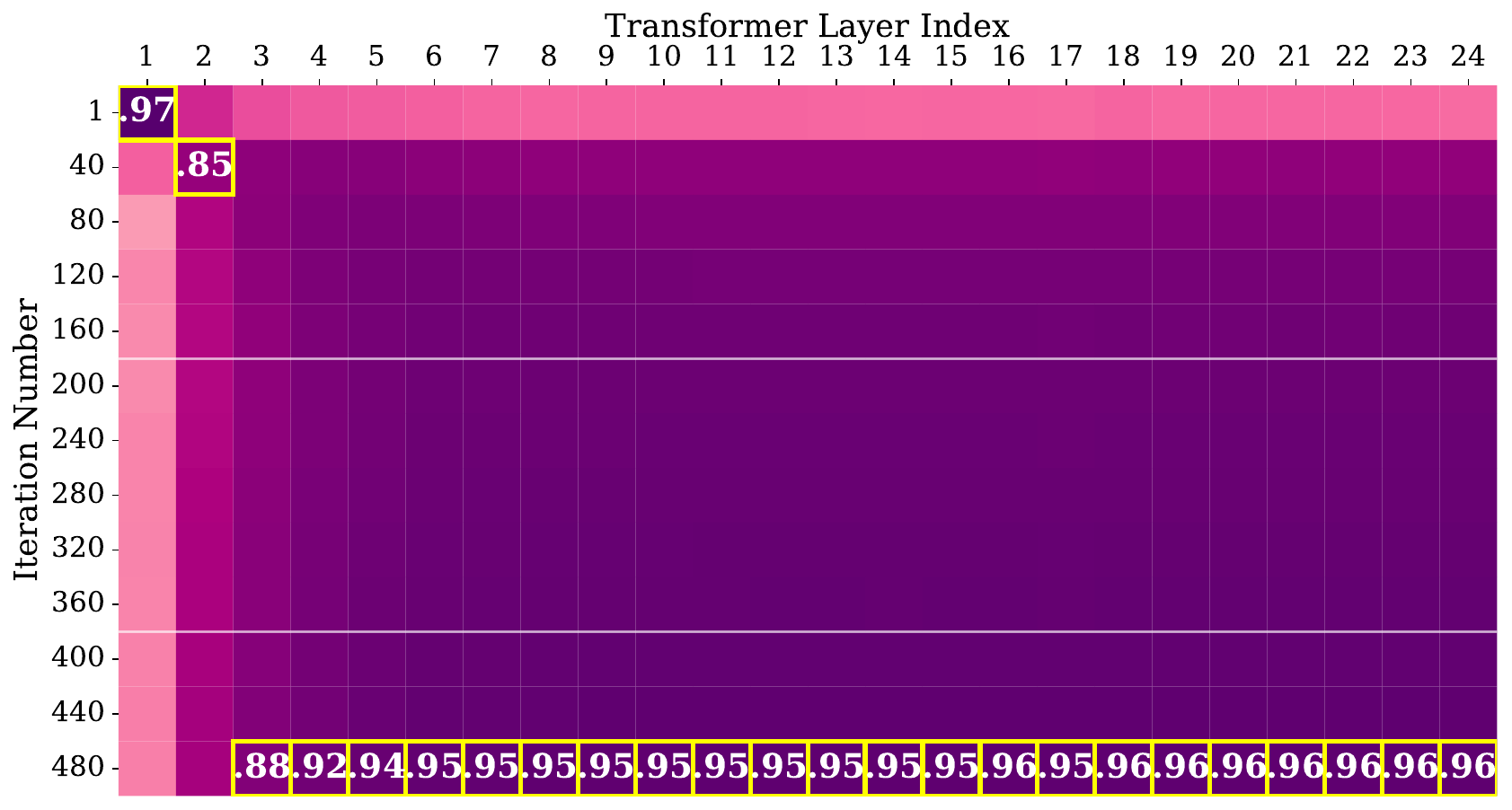}
& \includegraphics[width=\simecolwidthz]{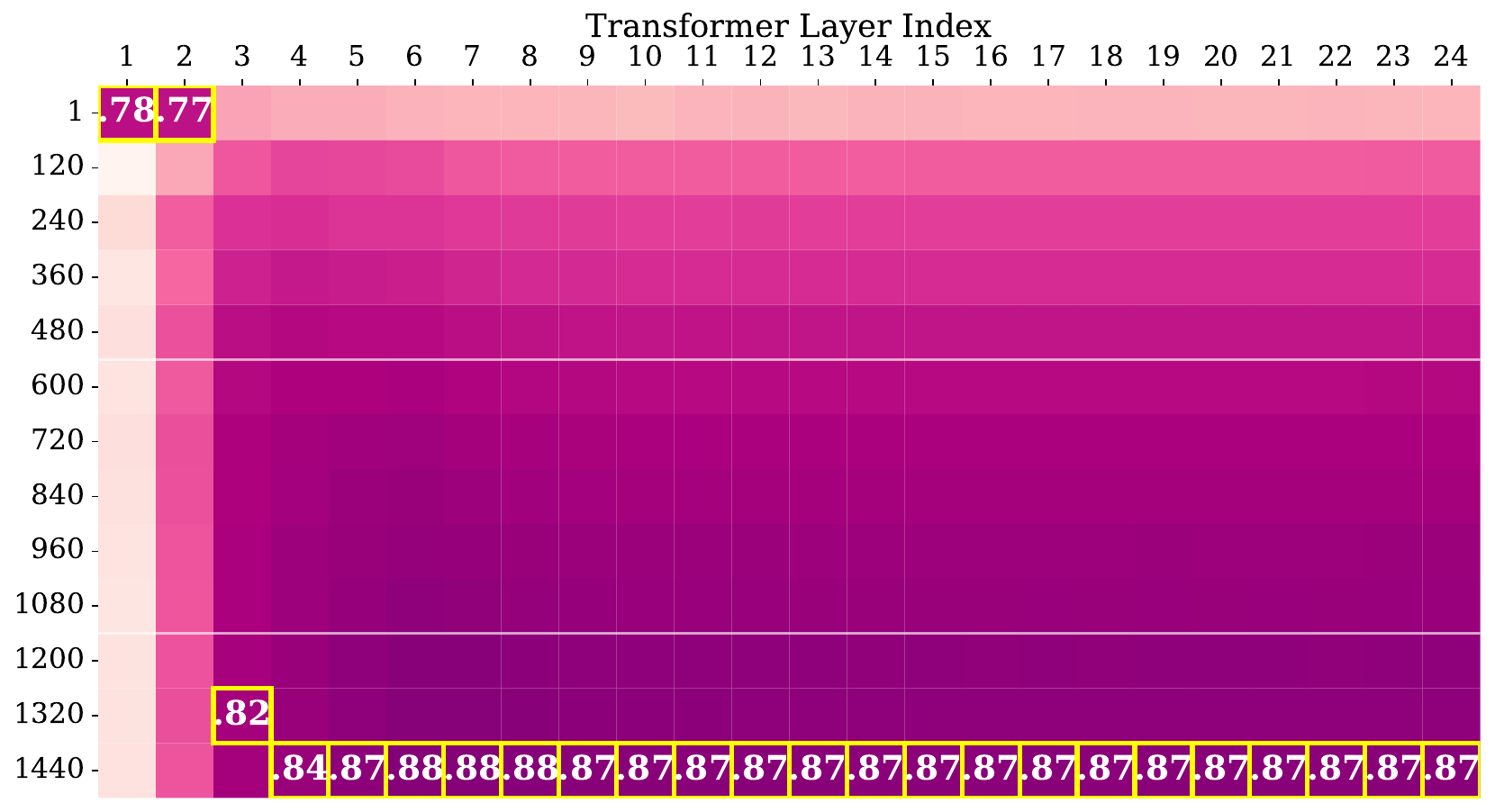}
& \includegraphics[height=\simerowheightz]{Current_Figures/panels_24L_54/sime_colorbar_24L_54.pdf}
\\
\end{tabular}
\caption{SimE heatmaps for the 24-layer depth ablation using the spherical distribution and
Gaussian kernel. Left column: bandwidth $v = 1$ with $\kappa \approx 3{,}000$ and
500-step iteration budget. Right column: bandwidth $v = 2$ with $\kappa \approx 10{,}000$ and 1500-step iteration budget. Rows from top:
Preconditioned Richardson, Conjugate Gradient, Gradient Descent (RKHS loss), Nesterov Gradient Descent (RKHS loss).
Yellow boxes mark the iterative step with highest SimE per transformer layer.
Color scale is shared across all panels. In conjunction with Figure~\ref{fig:convergence-app-24L} we see evidence of a linear trend for preconditioned Richardson, though more clearly localized to earlier layers than in the 12-layer experiment heatmaps shown in Figure~\ref{fig:sime-heatmaps-grid}.}
\label{fig:sime_24L}
\end{figure}

\clearpage

\subsection{Convergence curves for linear attention}
\label{app:linear_attn}

We train transformers with linear attention rather than softmax attention with an otherwise identical experimental setup to Section~\ref{subsec:convergence}. In Figure~\ref{fig:linear_attn_convergence_all}, we no longer see strong correspondence between error curves for the transformer and iterative methods, and the argmax plots do not show a single method performing with high linearity across all three distributions. 

\begin{figure}[!htbp]
\centering

\noindent\makebox[0.04\linewidth][l]{(a)}%
\begin{minipage}{0.61\linewidth}
    \centering
    \includegraphics[width=\linewidth]{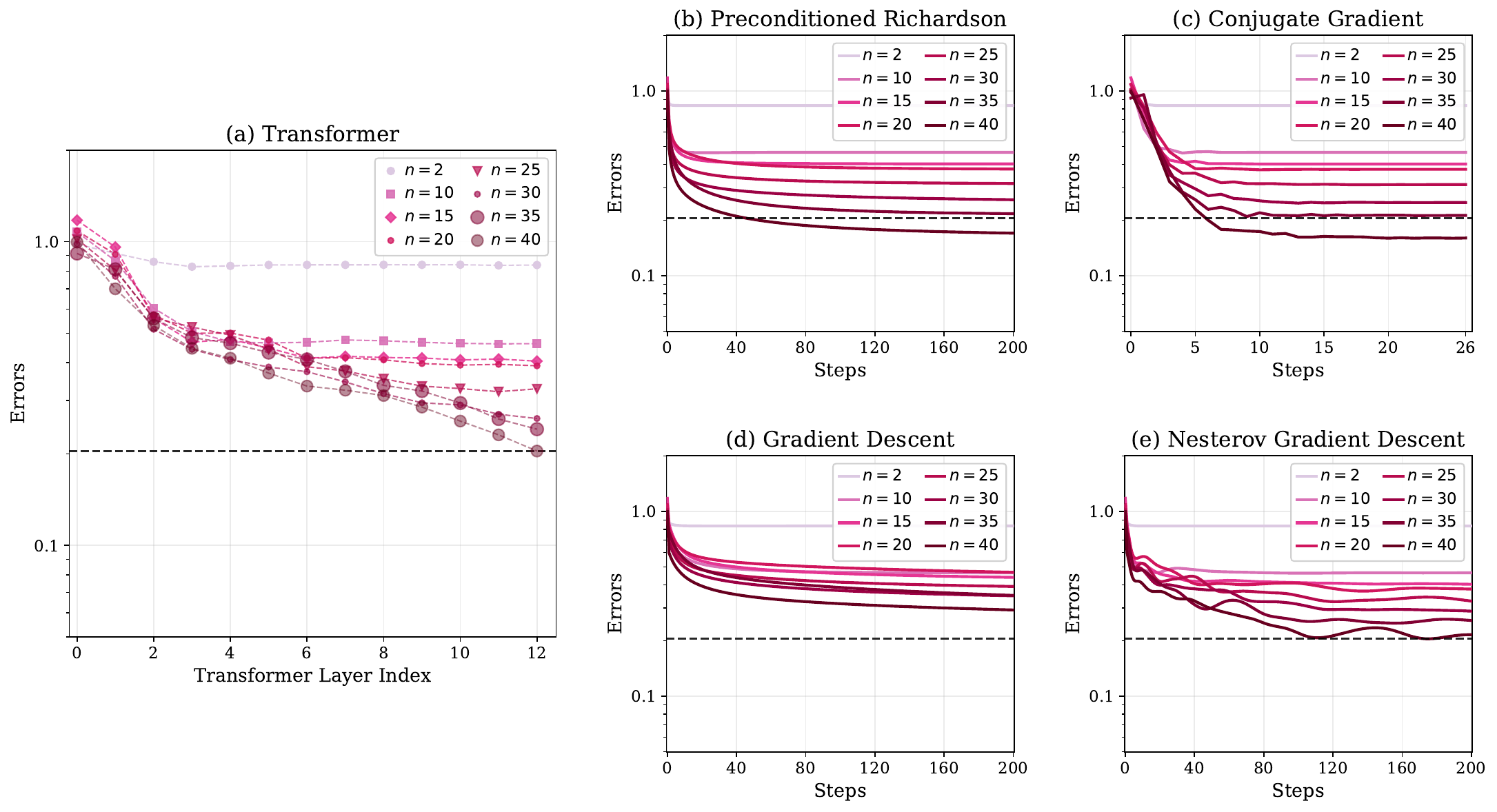}
\end{minipage}
\hfill
\begin{minipage}{0.34\linewidth}
    \centering
    \includegraphics[width=\linewidth]{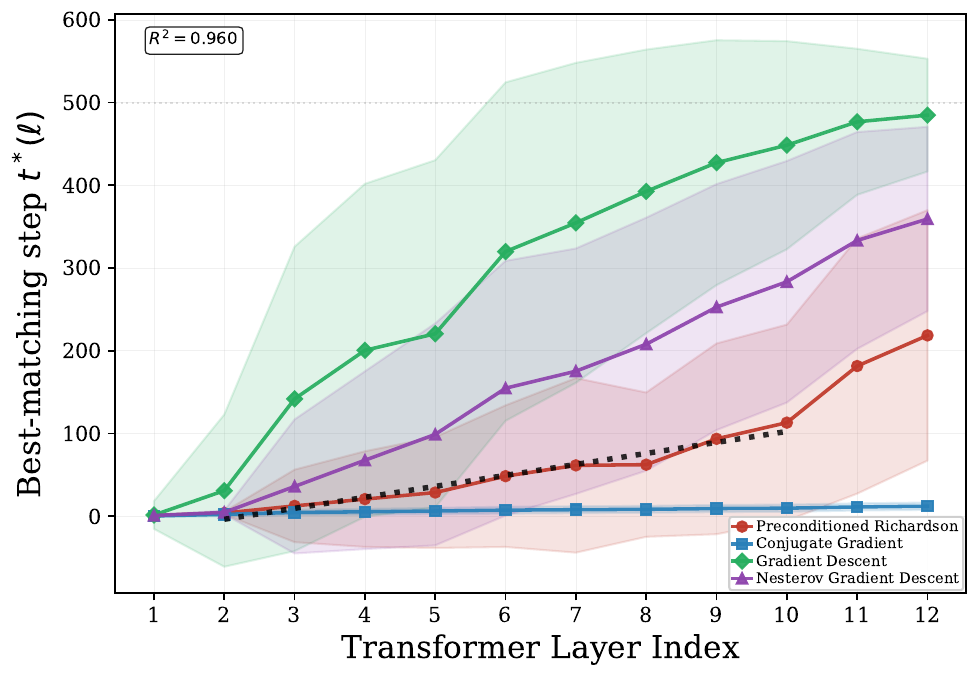}
\end{minipage}

\vspace{0.5em}

\noindent\makebox[0.04\linewidth][l]{(b)}%
\begin{minipage}{0.61\linewidth}
    \centering
    \includegraphics[width=\linewidth]{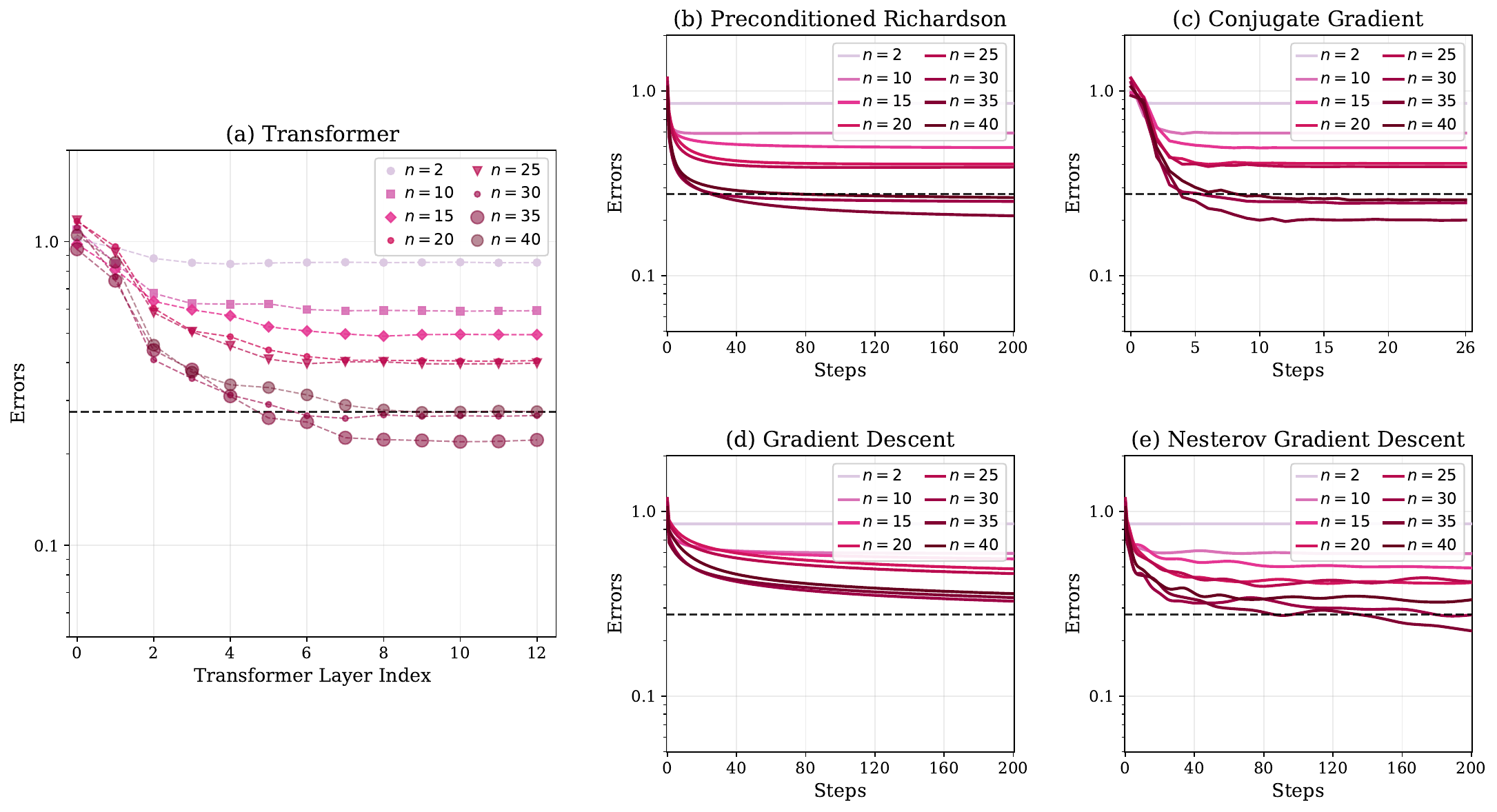}
\end{minipage}
\hfill
\begin{minipage}{0.34\linewidth}
    \centering
    \includegraphics[width=\linewidth]{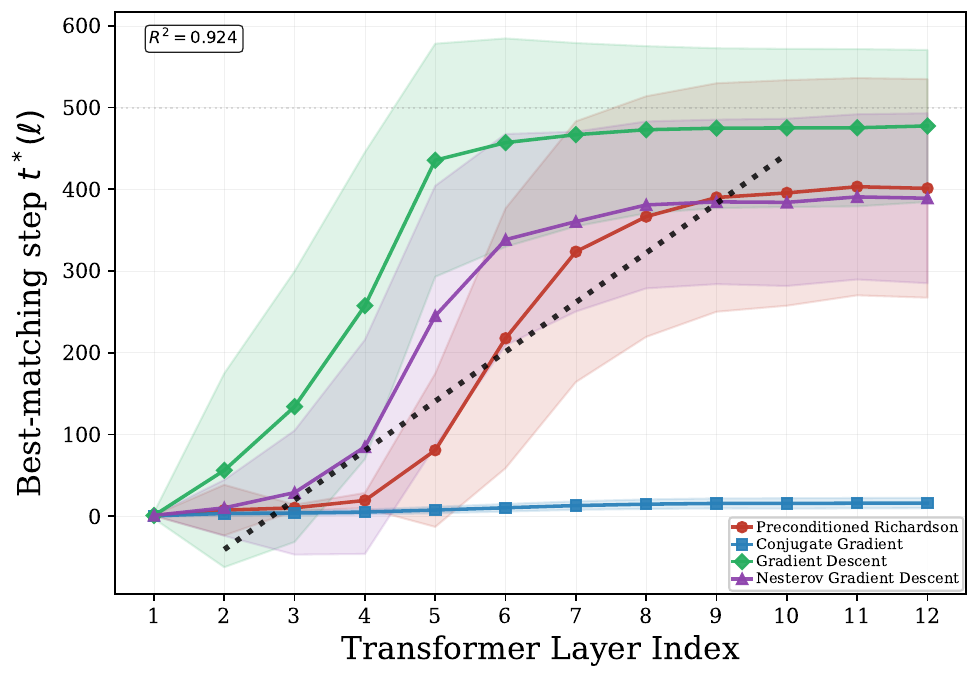}
\end{minipage}

\vspace{0.5em}

\noindent\makebox[0.04\linewidth][l]{(c)}%
\begin{minipage}{0.61\linewidth}
    \centering
    \includegraphics[width=\linewidth]{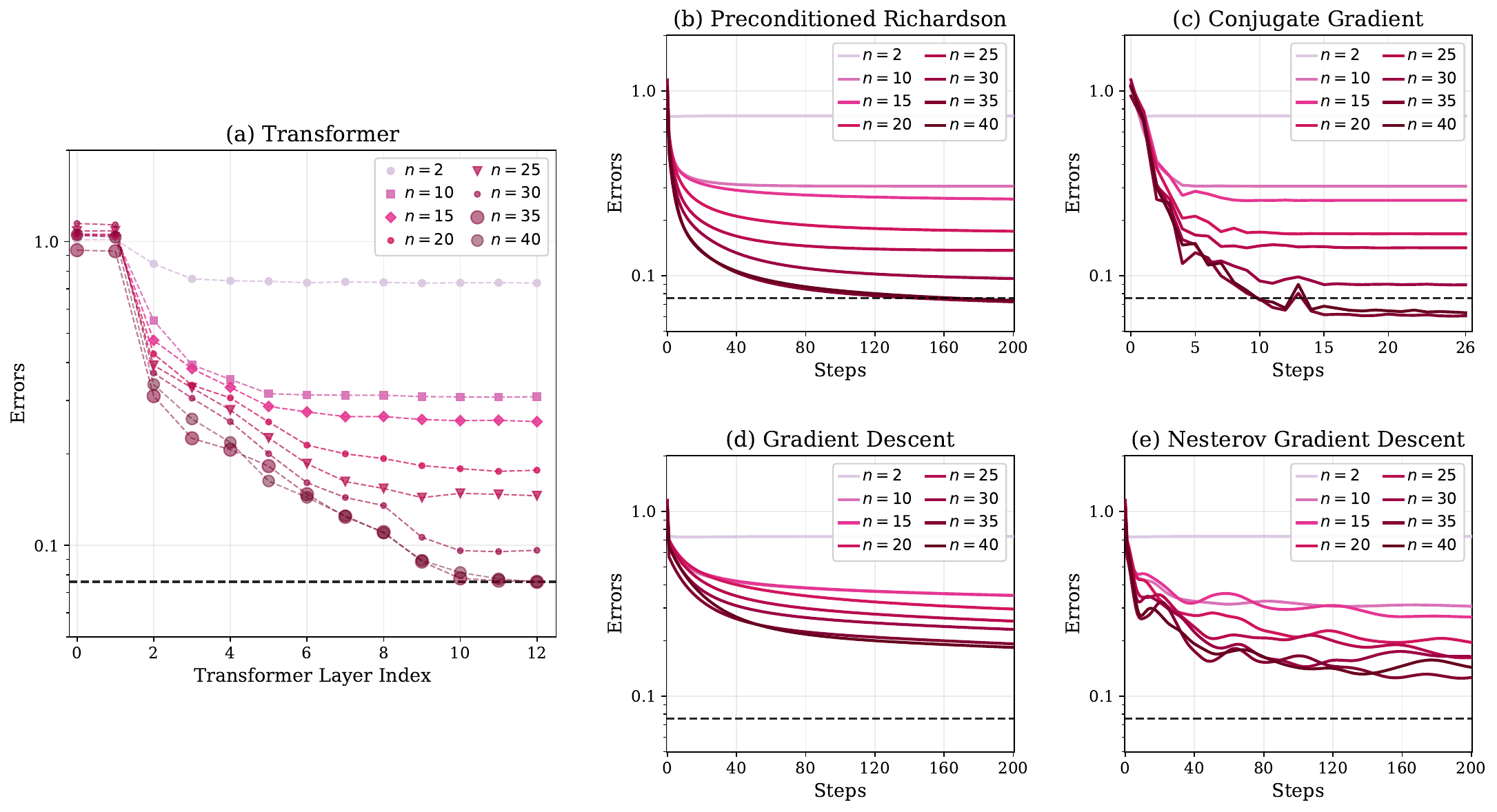}
\end{minipage}
\hfill
\begin{minipage}{0.34\linewidth}
    \centering
    \includegraphics[width=\linewidth]{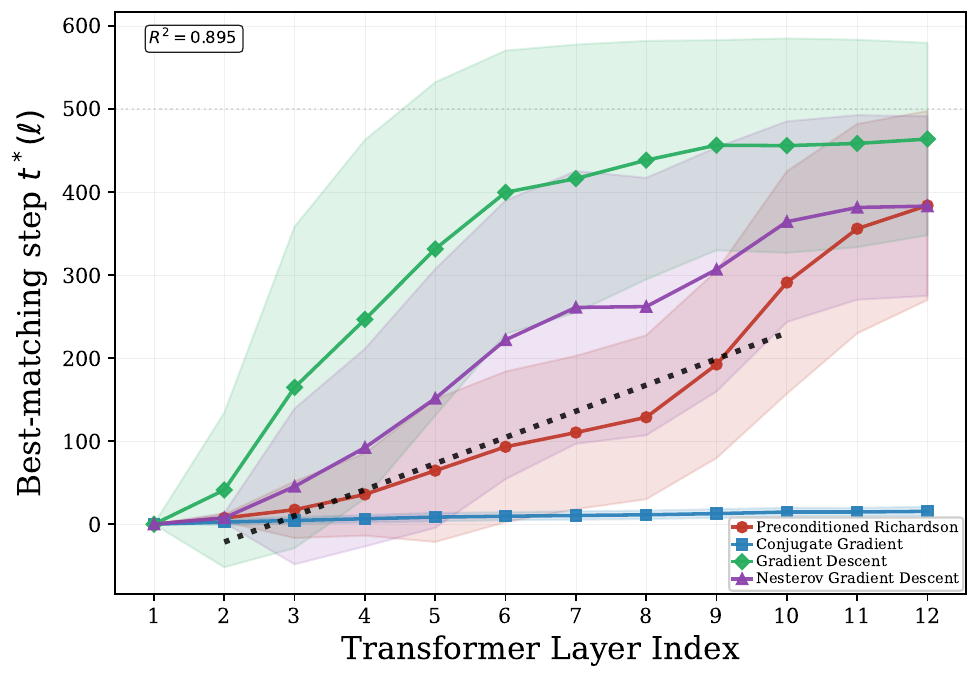}
\end{minipage}

\caption{MSE convergence and argmax step maps for linear attention under (a) Uniform, (b) Gaussian, and (c) Spherical distributions. While some argmax plots show linearity, no single method is capable of high linearity across all distributions as seen with softmax attention in Figures~\ref{fig:heat_and_argmax} and~\ref{fig:argmax-app}.}
\label{fig:linear_attn_convergence_all}
\end{figure}

\clearpage

\subsection{Architecture ablations: width and number of heads}
\label{app:arch-ablations}

In Figure~\ref{fig:ablations} we plot the effect of additional attention heads and MLP width on the trained transformer error. Increasing head count has minimal impact, consistent with Theorem~\ref{theorem: informal version of KRR ICL} where a single head can realize the relevant iterative computation. In contrast, increasing the model width results in prediction error decreasing rapidly and then remaining near the KRR baseline past an initial width threshold.

\begin{figure}[h]
\centering
\subfigure[]{%
\includegraphics[width=0.5\linewidth]{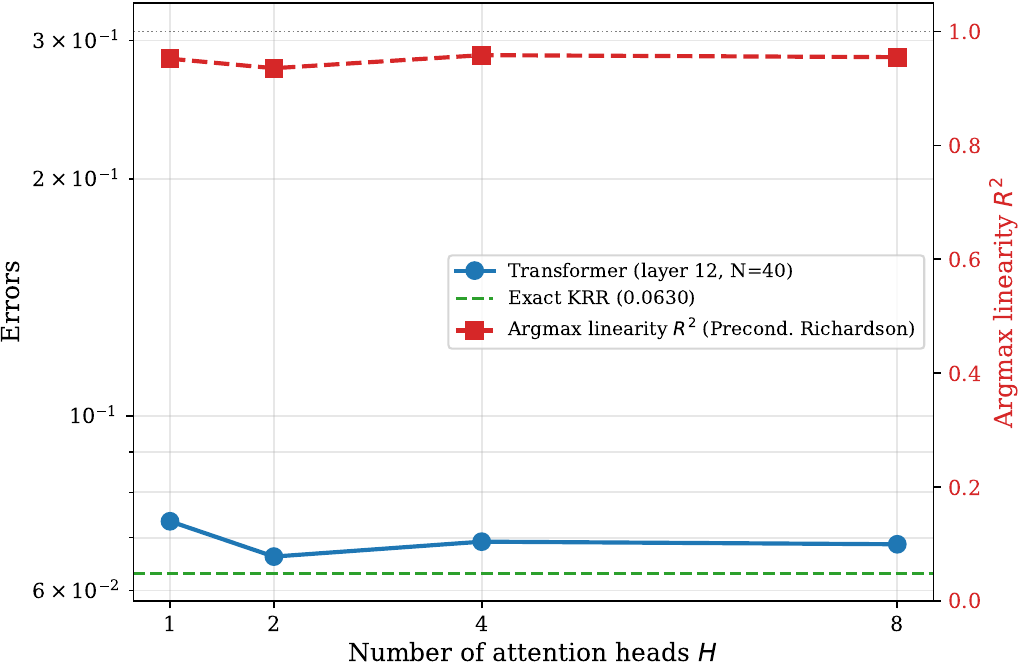}
\label{fig:ablations_head}}
\subfigure[]{%
\includegraphics[width=0.42\linewidth]{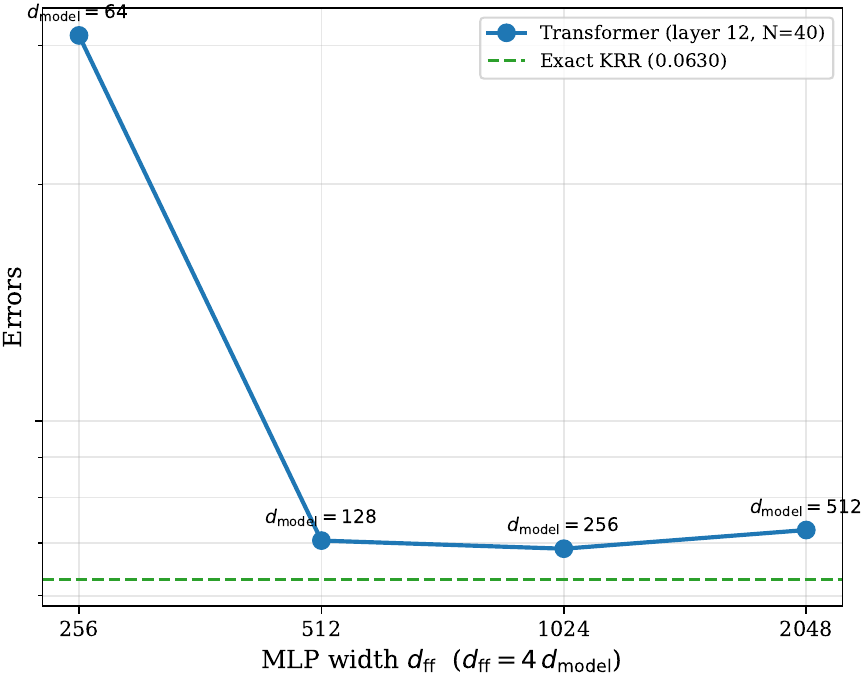}
\label{fig:ablations_width}}\caption{Heads and widths ablation studies on 12-layer transformers trained on spherical GP data.
(a) Effect of varying the number of attention heads \(H \in \{1,2,4,8\}\) while holding
total parameter count fixed via \(d_{\mathrm{head}}=d_{\mathrm{model}}/H\). The linearity observed in the argmax plot is also shown in orange and remains close to $1$.
(b) Final-layer prediction error at \(N=40\) with respect to model width, with
\(d_{\mathrm{ff}}=4d_{\mathrm{model}}\).  Both are compared
against the exact KRR baseline in green.}
\label{fig:ablations}
\end{figure}

\subsection{Noise-mismatch ablation: behavioral evidence for finite-depth Richardson with fixed $\lambda$}
\label{app:noise}

The construction in Appendix~\ref{app: construction of transformers} fixes the regularization parameter $\lambda$ at construction time and runs $L$ Richardson steps. If a trained transformer realizes the same mechanism, two predictions follow. \emph{(i) The effective $\lambda$ is fixed.} The transformer should regularize at a single $\lambda$ regardless of $\sigma_{\mathrm{test}}$, rather than adapting per query as a Bayes-optimal predictor would. \emph{(ii) Finite depth biases the iterate toward zero.} An $L$-step Richardson iterate is partway between $\vw\!=\!\vzero$ and the converged KRR solution at $\lambda$. Compared to converged KRR with the same $\lambda$, this truncation amplifies the predictor's bias when that converged solution is already over-regularized (i.e., at $\sigma_{\mathrm{test}}\!\ll\!\sigma_{\mathrm{train}}$), and acts as implicit early stopping which reduces variance when that converged solution would overfit (i.e., at $\sigma_{\mathrm{test}}\!\gg\!\sigma_{\mathrm{train}}$).

We test both predictions by training a 12-layer transformer at a single training noise $\sigma_{\mathrm{train}}\!=\!0.05$ and evaluating on test sequences with $\sigma_{\mathrm{test}}\!\in\!\{10^{-3},\,5\!\cdot\!10^{-3},\,10^{-2},\,2\!\cdot\!10^{-2},\,0.05,\,0.1,\,0.2,\,0.5,\,1.0\}$. We compare the transformer's final-layer MSE against two converged-KRR references evaluated on the same test sequences: \emph{Bayes-optimal} ($\lambda\!=\!\sigma_{\mathrm{test}}^2$, optimal for the actual test noise) and \emph{encoded} ($\lambda\!=\!\sigma_{\mathrm{train}}^2$, optimal for the training noise regardless of the test noise). All MSEs are computed against the noiseless ground-truth function $f(\vx_q)$, so that the Bayes-optimal predictor is the minimizer of $\E[(\hat f(\vx_q)-f(\vx_q))^2\!\mid\!\mathrm{data}]$.

\begin{figure}[t]
\centering
\includegraphics[width=0.9\linewidth]{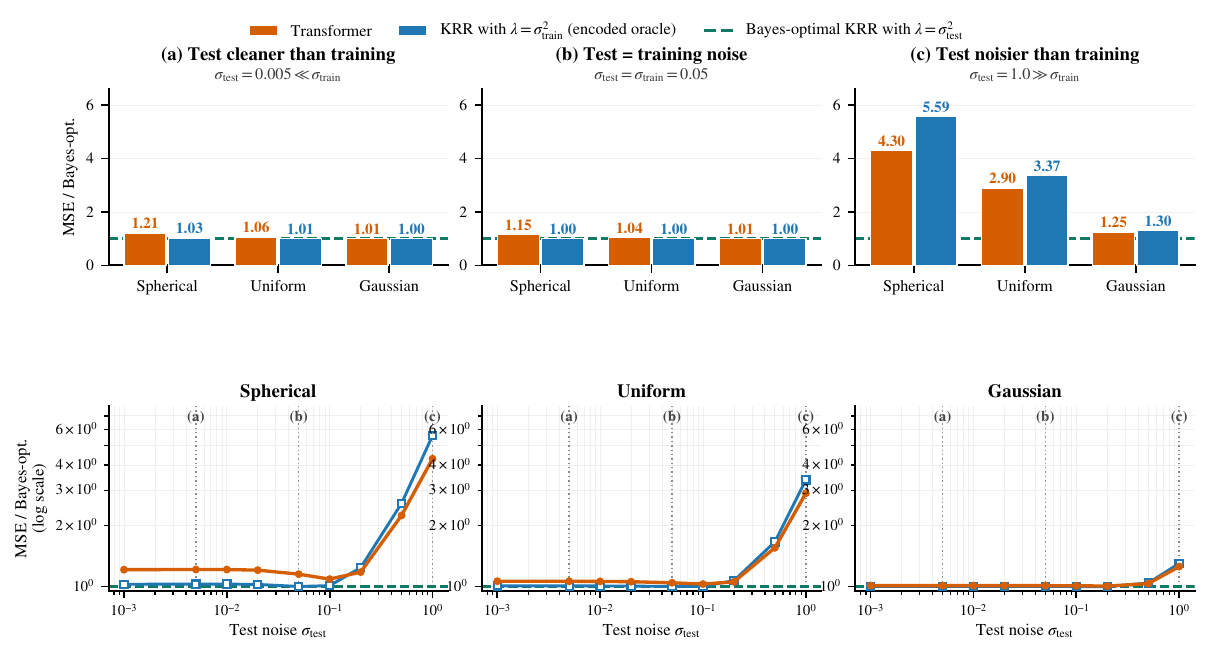}
\caption{\textbf{Noise-mismatch ablation: three regime snapshots and the full sweep.} All MSEs are reported relative to Bayes-optimal converged KRR (dashed line at $1.0$); values above $1.0$ are over-regularized relative to Bayes-optimal.
\emph{Top row (regime snapshots):} \textbf{(a)} $\sigma_{\mathrm{test}}\!\ll\!\sigma_{\mathrm{train}}$: both transformer (orange) and encoded oracle (blue) sit above Bayes-optimal, as neither adapts $\lambda$ down for cleaner test data, and the transformer is above the encoded oracle (e.g., $1.21$ vs.\ $1.03$ on Spherical), because finite-$L$ truncation amplifies the over-regularization. \textbf{(b)} $\sigma_{\mathrm{test}}\!=\!\sigma_{\mathrm{train}}$: the encoded oracle coincides with Bayes-optimal, and the transformer is within $1$--$15\%$ of both, with the residual gap attributable to the $L\!=\!12$ truncation. \textbf{(c)} $\sigma_{\mathrm{test}}\!\gg\!\sigma_{\mathrm{train}}$: the transformer sits below the encoded oracle in every distribution, with transformer-to-encoded ratios $0.77/0.86/0.97$ on Spherical/Uniform/Gaussian, because the same finite-$L$ truncation acts as early stopping and prevents the converged $\lambda\!=\!\sigma_{\mathrm{train}}^2$ solution from overfitting the larger test noise.
\emph{Bottom row (continuous sweep over $\sigma_{\mathrm{test}}$):} the same two predictors plotted across all nine test-noise levels, one panel per input distribution. Vertical dotted lines mark the snapshot positions (a)/(b)/(c). Transformer and encoded curves cross near $\sigma_{\mathrm{test}}\!=\!\sigma_{\mathrm{train}}$, separating into the two regimes shown above; the smooth transition is a single signature of finite-depth Richardson with a fixed $\lambda$, not two separate effects. Numerical values at every sweep point are reported in Table~\ref{tab:noise-ablation-full}.}
\label{fig:noise-ablation}
\end{figure}

Figure~\ref{fig:noise-ablation} corroborates both predictions, with the same finite-$L$ truncation producing the asymmetry across regimes. \emph{Prediction (i)}: across all three input distributions, both the transformer and the encoded oracle remain above Bayes-optimal at every $\sigma_{\mathrm{test}}\!\ne\!\sigma_{\mathrm{train}}$; neither adapts $\lambda$ to $\sigma_{\mathrm{test}}$. \emph{Prediction (ii)}: at $\sigma_{\mathrm{test}}\!\ll\!\sigma_{\mathrm{train}}$, the transformer sits above the encoded oracle (the truncation pushes a partly-converged iterate further toward $\vzero$ than the already over-regularized converged KRR); at $\sigma_{\mathrm{test}}\!\gg\!\sigma_{\mathrm{train}}$, the transformer sits below the encoded oracle (the truncation prevents the converged solution from fitting the larger noise). The bottom row of Figure~\ref{fig:noise-ablation} shows the two transitions as a single continuous crossover at $\sigma_{\mathrm{test}}\!\approx\!\sigma_{\mathrm{train}}$, and Table~\ref{tab:noise-ablation-full} reports the exact values at every sweep point. The whole sweep is thus consistent with the single mechanism of finite-depth Richardson with $\lambda\!\approx\!\sigma_{\mathrm{train}}^2$ rather than two separate effects. This presents a behavioral equivalence: the data shows the transformer behaves as if it were running finite-depth Richardson with $\lambda\!\approx\!\sigma_{\mathrm{train}}^2$.

\begin{table}[t]
\centering
\caption{\textbf{Noise-mismatch ablation, full sweep.} MSE relative to Bayes-optimal converged KRR for the trained transformer (TF), the encoded oracle (EO, i.e., converged KRR with $\lambda\!=\!\sigma_{\mathrm{train}}^2$), and the Bayes-optimal (BO, i.e., converged KRR with $\lambda\!=\!\sigma_{\mathrm{test}}^2$), across the nine test-noise levels and three input distributions. By construction, $\mathrm{BO}\!=\!1.00$ everywhere; values above $1.00$ are over-regularized. The row marked $\dagger$ corresponds to $\sigma_{\mathrm{test}}\!=\!\sigma_{\mathrm{train}}$, where EO and BO coincide. The transformer sits above EO for $\sigma_{\mathrm{test}}\!\le\!0.1$ (truncation amplifies over-regularization on already-over-regularized converged KRR) and below EO for $\sigma_{\mathrm{test}}\!\ge\!0.2$ (truncation prevents converged KRR from overfitting the larger noise), with the crossover near $\sigma_{\mathrm{test}}\!\approx\!\sigma_{\mathrm{train}}$. Ratios reported in Figure~\ref{fig:noise-ablation} are computed from unrounded MSEs; the two-decimal entries below are rounded for readability.}
\label{tab:noise-ablation-full}
\vspace{4pt} 
\small
\setlength{\tabcolsep}{4pt}
\begin{tabular}{rccccccccc}
\toprule
 & \multicolumn{3}{c}{\textbf{Spherical}} & \multicolumn{3}{c}{\textbf{Uniform}} & \multicolumn{3}{c}{\textbf{Gaussian}} \\
\cmidrule(lr){2-4}\cmidrule(lr){5-7}\cmidrule(lr){8-10}
$\sigma_{\mathrm{test}}$ & TF & EO & BO & TF & EO & BO & TF & EO & BO \\
\midrule
$10^{-3}$               & 1.21 & 1.03 & 1.00 & 1.06 & 1.01 & 1.00 & 1.01 & 1.00 & 1.00 \\
$5\!\cdot\!10^{-3}$     & 1.21 & 1.03 & 1.00 & 1.06 & 1.01 & 1.00 & 1.01 & 1.00 & 1.00 \\
$10^{-2}$               & 1.21 & 1.03 & 1.00 & 1.06 & 1.01 & 1.00 & 1.01 & 1.00 & 1.00 \\
$2\!\cdot\!10^{-2}$     & 1.20 & 1.02 & 1.00 & 1.06 & 1.00 & 1.00 & 1.01 & 1.00 & 1.00 \\
$\dagger\ 5\!\cdot\!10^{-2}$ & 1.15 & 1.00 & 1.00 & 1.04 & 1.00 & 1.00 & 1.01 & 1.00 & 1.00 \\
$0.1$                   & 1.09 & 1.01 & 1.00 & 1.03 & 1.00 & 1.00 & 1.01 & 1.00 & 1.00 \\
$0.2$                   & 1.18 & 1.24 & 1.00 & 1.05 & 1.07 & 1.00 & 1.01 & 1.00 & 1.00 \\
$0.5$                   & 2.25 & 2.58 & 1.00 & 1.55 & 1.66 & 1.00 & 1.03 & 1.04 & 1.00 \\
$1.0$                   & 4.30 & 5.59 & 1.00 & 2.90 & 3.37 & 1.00 & 1.25 & 1.30 & 1.00 \\
\bottomrule
\end{tabular}
\end{table}

\end{document}